\newtheorem{thm} {Theorem}
\newtheorem{lemma} {Lemma}
\newcommand{\asconv}{\overset{\text{a.s.}}{\longrightarrow}}
\newcommand{\ra}{\rightarrow}
\newcommand{\LB}{\left\{}
\newcommand{\RB}{\right\}}
\newcommand{\Lb}{\left[}
\newcommand{\Rb}{\right]}
\newcommand{\lb}{\left(}
\newcommand{\rb}{\right)}
\newcommand{\trace}{\textnormal{trace}}
\newcommand{\diag}{\textnormal{diag}}
\newcommand{\convd}{\overset{d}{\longrightarrow}}
\newcommand{\vectorize}{\textbf{vec}}
\newcommand{\nmin}{{n_{\min}}}
\newcommand{\nmax}{{n_{\max}}}
\newcommand{\wini}{w^{\textnormal{ini}}}
\newcommand{\hatmijl}{\widehat{m}_{ij}^{(\ell)}}
\newcommand{\hn}{\widehat{n}}
\newcommand{\htLBw}{\widehat{t}^{\mathbf{w}}_{\text{LB}}}
\newcommand{\tLB}{t_{\text{LB}}}
\newcommand{\tUBw}{t_{\text{UB}}^{\mathbf{w}}}
\newcommand{\tLBw}{t_{\text{LB}}^{\mathbf{w}}}
\newcommand{\tw}{t^{\mathbf{w}}}
\newcommand{\twstar}{{t^{\mathbf{w}}}^*}
\newcommand{\cwstar}{{c^{\mathbf{w}}}^*}
\newcommand{\hpijl}{\widehat{p}_{ij}^{(\ell)}}
\newcommand{\hpl}{\widehat{p}^{(\ell)}}
\newcommand{\htijl}{\widehat{t}_{ij}^{(\ell)}}
\newcommand{\htl}{\widehat{t}^{(\ell)}}
\newcommand{\htw}{\widehat{t}^{\mathbf{w}}}
\newcommand{\htwmax}{\widehat{t}^{\mathbf{w}}_{\max}}
\newcommand{\htmaxl}{\widehat{t}^{(\ell)}_{\max}}
\newcommand{\htinil}{\widehat{t}^{(\ell)}_{\textnormal{ini}}}
\newcommand{\pl}{p^{(\ell)}}
\newcommand{\tl}{t^{(\ell)}}
\newcommand{\twmax}{t^{\mathbf{w}}_{\max}}
\newcommand{\twmin}{t^{\mathbf{w}}_{\min}}
\newcommand{\tlmax}{t^{(\ell)}_{\max}}
\newcommand{\pijl}{p_{ij}^{(\ell)}}
\newcommand{\tijl}{t_{ij}^{(\ell)}}
\newcommand{\bone}{\mathbf{1}}
\newcommand{\bzero}{\mathbf{0}}
\newcommand{\bw}{\mathbf{w}}
\newcommand{\bwini}{\mathbf{w}^{\textnormal{ini}}}
\newcommand{\bsw}{\mathbf{s}^\bw}
\newcommand{\ba}{\mathbf{a}}
\newcommand{\bv}{\mathbf{v}}
\newcommand{\bvk}{\mathbf{v_k}}
\newcommand{\bnu}{\boldsymbol{\nu}}
\newcommand{\bB}{\mathbf{B}}
\newcommand{\Wbar}{\overline{W}}
\newcommand{\Wbarijl}{\overline{W}_{ij}^{(\ell)}}
\newcommand{\Wbarl}{\overline{W}^{(\ell)}}
\newcommand{\hWbarijl}{\widehat{\Wbar}_{ij}^{(\ell)}}
\newcommand{\hWbarl}{\widehat{\Wbar}^{(\ell)}}
\newcommand{\SK}{S_{2:K}}
\newcommand{\bX}{\mathbf{X}}
\newcommand{\bY}{\mathbf{Y}}
\newcommand{\bZ}{\mathbf{Z}}
\newcommand{\bI}{\mathbf{I}}
\newcommand{\bO}{\mathbf{O}}
\newcommand{\bU}{\mathbf{U}}
\newcommand{\btLw}{\widetilde{\mathbf{L}}^\bw}
\newcommand{\btY}{\widetilde{\mathbf{Y}}}
\newcommand{\bDelta}{\mathbf{\Delta}}
\newcommand{\bAl}{\mathbf{A}^{(\ell)}}
\newcommand{\bAw}{\mathbf{A}^{\bw}}
\newcommand{\bV}{\mathbf{V}}
\newcommand{\bWl}{\mathbf{W}^{(\ell)}}
\newcommand{\bWw}{\mathbf{W}^{\bw}}
\newcommand{\bF}{\mathbf{F}}
\newcommand{\bFl}{\mathbf{F}^{(\ell)}}
\newcommand{\bFw}{\mathbf{F}^\bw}
\newcommand{\bM}{\mathbf{M}}
\newcommand{\bD}{\mathbf{D}}
\newcommand{\bL}{\mathbf{L}}
\newcommand{\bLw}{\mathbf{L}^\bw}
\newcommand{\bLl}{\mathbf{L}^{(\ell)}}
\newcommand{\bC}{\mathbf{C}}
\newcommand{\bCl}{\mathbf{C}^{(\ell)}}
\newcommand{\bLambda}{\mathbf{\Lambda}}
\newcommand{\bSw}{\mathbf{S}^\bw}
\newcommand{\bSl}{\mathbf{S}^{(\ell)}}
\newcommand{\bWt}{\widetilde{\mathbf{W}}}
\newcommand{\cS}{\mathcal{S}}
\newcommand{\cC}{\mathcal{C}}
\newcommand{\cW}{\mathcal{W}}
\newcommand{\cV}{\mathcal{V}}
\newcommand{\cE}{\mathcal{E}}
\newcommand{\cT}{\mathcal{T}}
\newcommand{\bbR}{\mathbb{R}}
\newcommand{\hbLl}{\widehat{\bL}^{(\ell)}}
\newcommand{\hbCijl}{\widehat{\bC}_{ij}^{(\ell)}}
\newcommand{\hbFijl}{\widehat{\bF}_{ij}^{(\ell)}}
\begin{document}

\title{Multilayer Spectral Graph Clustering via Convex Layer Aggregation: Theory and Algorithms
	}

\author{Pin-Yu~Chen and Alfred O. Hero III,~\emph{Fellow},~\emph{IEEE}
	\thanks{P.-Y. Chen is with AI Foundations, IBM Thomas J. Watson Research Center, Yorktown Heights, NY 10598, USA. Email : pin-yu.chen@ibm.com.
		 A. O. Hero is with the Department of Electrical Engineering and Computer Science, University of Michigan, Ann Arbor, MI 48109, USA. Email : hero@umich.edu.}
	\thanks{This work was conducted while P.-Y. Chen was at the  University of Michigan, Ann Arbor, and 
		 has been partially supported by the Army Research Office
		(ARO), grants  W911NF-15-1-0479 and W911NF-15-1-0241, and by the Consortium for Verification
		Technology under Department of Energy National Nuclear Security Administration,
		award DE-NA0002534. Part of this work was presented at IEEE GlobalSIP 2016.}
}

\maketitle
\thispagestyle{empty}
\begin{abstract}
Multilayer graphs are commonly used for representing different relations between entities and handing heterogeneous data processing tasks. Non-standard multilayer graph clustering methods are needed for assigning clusters to a common multilayer node set and for combining information from each layer.  This paper presents a multilayer spectral graph clustering (SGC) framework that performs convex layer aggregation. Under a multilayer signal plus noise model, we provide a phase transition analysis of clustering reliability. Moreover, we use the phase transition criterion to propose a multilayer iterative model order selection algorithm (MIMOSA) for multilayer SGC, which features automated cluster assignment and layer weight adaptation, and provides statistical clustering reliability guarantees. Numerical simulations on synthetic multilayer graphs verify the phase transition analysis, and experiments on real-world multilayer graphs show that MIMOSA is competitive or better than other clustering methods. 
\end{abstract}

\begin{IEEEkeywords}
	community detection, model order selection, multilayer graphs,  multiplex networks, phase transition
\end{IEEEkeywords}

\section{Introduction}
Multilayer graphs provide a framework for representing multiple types of relations between entities, represented as nodes. In a multilayer graph each layer describes a specific type of relation among pairs of nodes that are shared across layers. For example, in multi-relational social networks,
two layers might correspond to friendship relations and business relations, respectively.  
In temporal networks, each layer might correspond to a snapshot of the entire network at a sampled time instant. Multilayer graphs can be incorporated into in many signal processing and data mining techniques, including inference of mixture models \cite{Oselio14,xu2014dynamic}, tensor decomposition \cite{Domenico13tensor}, information extraction \cite{oselio2015information}, multi-view learning and processing \cite{zhou2007spectral}, graph wavelet transforms \cite{leonardi2013tight}, principal component analysis and dictionary learning \cite{benzi2016principal,CPY16ICASSP}, anomaly detection \cite{park2013anomaly}, and community detection \cite{kivela2014multilayer,kim2015community}, among others.

The objective of multilayer graph clustering is to find a consensus cluster assignment on each node in the common node set by combining connectivity patterns in each layer.
Multilayer graph clustering differs from single-layer graph clustering in several respects: 
(1) the information about cluster membership must be aggregated from multiple layers; (2) the performance of  multilayer graph clustering will depend on the proportion of noisy edges across layers. This paper proposes a multilayer spectral graph clustering (SGC) algorithm that uses convex layer aggregation. Specifically, the algorithm performs SGC on an weighted average of the adjacency matrices of the layers, where the weights are non-negative and sum to one. 
We establish phase transitions in multilayer graph clustering in the convex layer-aggregated graph as a function of the noisy edge connection parameters of each layer under a multilayer signal plus noise model. Our phase transition analysis shows that when one sweeps over noise levels, there exists a critical threshold below (above) which multilayer SGC will yield correct (incorrect) clusters.
This critical phase transition threshold depends on the layer weights used to aggregate the multilayer graph into a single-layer graph in addition to the topology of the multilayer graph. 
Numerical experiments on synthetic multilayer graphs are conducted to verify the phase transitions of the proposed method.
Moreover, we propose a multilayer iterative model order selection algorithm (MIMOSA) that incorporates automated layer weight adaptation and cluster assignment. Experimental results on real-world multilayer graphs show that MIMOSA has competitive clustering performance  to (1) the baseline approach of assigning uniform layer weights, (2) the greedy multilayer modularity maximization method \cite{mucha2010community}, and (3) the subspace approach \cite{dong2014clustering}.

This paper makes two principal contributions. First, under a general multilayer signal plus noise model, we establish a phase transition on the performance of multilayer SGC. Fixing the within-cluster edges (signals) and varying the parameters governing the between-cluster edges (noises), we show that the clustering accuracy of multilayer SGC can be separated into two regimes: a reliable regime where high clustering accuracy can be guaranteed, and an unreliable regime where high clustering accuracy is impossible. Moreover, we specify upper and lower bounds on the critical noise value that separates these two regimes, which is an analytical function of  the signal strength, the number of clusters, the cluster size distributions, and the layer weights for convex layer aggregation. The bounds become exact in the case of identical cluster sizes. The analysis specifies the interplay between the layer weights, the multilayer graph connectivity structure in terms of eigenspectrum, and the performance of multilayer SGC via convex layer aggregation. The analysis also provides a criterion for assessing the quality of clustering results, which leads to the second contribution: the introduction of a new multilayer clustering algorithm with automated model order selection (number of clusters).  This algorithm, called the multilayer iteration model order selection algorithm (MIMOSA), selects both the model order and the layer weights and results in improved performance. 
MIMOSA incrementally increases the number of clusters, adapts layer weight assignment, and adopts a series of statistical clustering reliability tests. To illustrate the proposed MIMOSA approach, we apply it 
to several real-world multilayer graphs pertaining to social, biological, collaboration and  transportation networks.

The rest of this paper is organized as follows. Sec. \ref{sec_related} reviews related work on multilayer graph clustering.
Sec. \ref{sec_RIM_ML} introduces the multilayer signal plus noise model for multilayer graphs, and presents the mathematical formulation of multilayer SGC via convex layer aggregation. Sec. \ref{sec_MIMOSA_THM} provides  performance analysis of the proposed multilayer SGC under a multilayer signal plus noise model. We specify a breakdown condition for the success of multilayer SGC, and establish a phase transition on the clustering accuracy of multilayer SGC under a block-wise identical noise model and a block-wise non-identical noise model, respectively.
Sec. \ref{sec_MIMOSA} describes the proposed MIMOSA approach for automated multilayer graph clustering. Sec. \ref{sec_num_ML} presents numerical results that verify the phase transition analysis. Sec. \ref{sec_MIMOSA_data} compares the performance of MIMOSA with two other automated multilayer graph clustering methods on 9 real-world multilayer graph datasets. Finally, Sec. \ref{sec_conclusion} concludes this paper. 

\section{Related Work}
\label{sec_related}
 Graph clustering, also known as community detection, on multilayer graphs aims to find a consensus cluster assignment on each node in the common node set shared by different layers.
Layer aggregation has been a principal method for processing and mining multilayer graphs \cite{cai2005community,tang2009uncoverning,wu2015discovering,tang2012community,de2015structural,Taylor16,kim2016differential}, as it transforms a multilayer graph into a single aggregated graph,  facilitating application of  data analysis techniques designed for single-layer graphs.
Extending the stochastic block model (SBM) for graph clustering in single-layer graphs \cite{Holland83}, a multilayer SBM has been proposed for graph clustering on multilayer graphs \cite{han2015consistent,paul2015community,barbillon2016stochastic,PhysRevX.6.011036,Taylor16,Stanley16}. Under the assumption of two equally-sized clusters, the authors in \cite{Taylor16} show that if layer aggregation is used and if each layer is an independent realization of a common SBM,  the inferential limit for cluster detectability decays at rate $O(L^{-\frac{1}{2}})$, where $L$ is the number of layers. In \cite{Stanley16}, a layer selection method based on a multilayer SBM is proposed to improve the performance of graph clustering by identifying a subset of coherent layers. However, the multilayer SBM assumes homogeneous connectivity structure for within-cluster and between-cluster edges in each layer, and it also assumes layer-wise independence.

In addition to inference approaches based on the multilayer SBM, other methods have been proposed for graph clustering on multilayer graphs, including information-theoretic approaches \cite{papalexakis2013more,PhysRevE.92.042806}, k-nearest neighbor method \cite{greene2013producing}, nonnegative matrix factorization \cite{ni2015flexible},  flow-based approach \cite{PhysRevX.5.011027}, linked matrix factorization \cite{tang2009clustering}, random walk \cite{kuncheva2015community}, tensor decomposition \cite{Domenico13tensor}, subspace methods \cite{dong2012clustering,dong2014clustering}, subgraph mining with edge labels \cite{boden2012mining}, and greedy multilayer modularity maximization \cite{mucha2010community}.
More details on multilayer graph models can be found in the recent survey papers for graph clustering on multilayer graphs \cite{kivela2014multilayer,kim2015community}.

It is worth mentioning that the methods proposed in many of the aforementioned publications require the knowledge of the number of clusters (model order) for graph clustering, especially for matrix decomposition-based methods \cite{tang2009clustering,Domenico13tensor,ni2015flexible,dong2012clustering,dong2014clustering} and multilayer SBM \cite{han2015consistent,paul2015community,barbillon2016stochastic,PhysRevX.6.011036,Taylor16,Stanley16}. However, in many practical cases the model order is not known. Although many model order selection methods have been proposed for  single-layer graphs \cite{zelnik2004self,blondel2008fast,Krzakala2013,CPY16AMOS}, little has been developed for  multilayer graphs. Moreover, many layer aggregation methods assign uniform weights over layers such that the aggregated graph is insensitive to the quality of clusters in each layer \cite{tang2009uncoverning,tang2012community,Taylor16}.
This paper studies the sensitivity of the clustering accuracy to layer weights under a multilayer signal plus noise model. We then propose a model order selection algorithm featuring layer weight adaptation that automatically finds the minimal model order that meets statistical clustering reliability guarantees.

\section{Multilayer Graph Model and Spectral Graph Clustering via Convex Layer Aggregation}
\label{sec_RIM_ML}

\subsection{Multilayer graph model}
\label{subsec_Multi_RIM}
Throughout this paper, we consider a multilayer graph model consisting of $L$ layers representing different relationships among a common node set $\cV$ of $n$ nodes. The graph in the $\ell$-th layer is an undirected graph with nonnegative edge wights, which is denoted by $G_\ell=(\cV,\cE_{\ell})$, where $\cE_{\ell}$ is the set of weighted edges in the $\ell$-th layer. The $n \times n$ binary symmetric adjacency matrix $\bAl$ is used to represent the connectivity structure of $G_\ell$. The entry $[\bAl]_{uv}=1$ if nodes $u$ and $v$ are connected in the $\ell$-th layer, and $[\bAl]_{uv}=0$ otherwise. Similarly, the $n \times n$ nonnegative symmetric weight matrix $\bWl$ is used to represent the edge weights in $G_\ell$, where $\bWl$ and $\bAl$ have the same zero structure.

We assume each layer in the multilayer graph is a (possibly correlated) representation of a common set of disjoint $K$ clusters that partitions the node set $\cV$, where the $k$-th cluster has cluster size $n_k$ such that $\sum_{k=1}^K n_k=n$, and 
$\nmin=\min_{k \in \{1,\ldots,K\}} n_k$ and $\nmax=\max_{k \in \{1,\ldots,K\}} n_k$ denote the smallest and largest cluster size, respectively.
Specifically, the adjacency matrix $\bAl$ of $G_\ell$ in the $\ell$-th layer can be represented as 
\begin{align}                                                              \label{eqn_network_model_multilayer}
\bAl= \begin{bmatrix}
\bAl_1          & \bCl_{12} & \bCl_{13} & \cdots & \bCl_{1K}           \\
\bCl_{21}       & \bAl_2    & \bCl_{23} & \cdots & \bCl_{2K} \\
\vdots         & \vdots   & \ddots   & \vdots  & \vdots  \\
\vdots         & \vdots   & \vdots   & \ddots  & \vdots  \\
\bCl_{K1}       & \bCl_{K2} & \cdots   & \cdots  & \bAl_{K}
\end{bmatrix},
\end{align}
where $\bAl_k$ is an $n_k \times n_k$ binary symmetric matrix denoting the adjacency matrix of within-cluster edges of the $k$-th cluster in the $\ell$-th layer, and $\bCl_{ij}$  is an $n_i \times n_j$ binary rectangular matrix denoting the adjacency matrix of between-cluster edges of clusters $i$ and $j$  in the $\ell$-th layer, $1 \leq i,j \leq K$, $i \neq j$, 
and $\bCl_{i j}={\bCl_{ji}}^T$.

Similarly, the edge weight matrix $\bWl$ of the $\ell$-th layer  can be represented as 

\begin{align}                                                              \label{eqn_network_model_multilayer_weight}
\bWl= \begin{bmatrix}
\bWl_1          & \bFl_{12} & \bFl_{13} & \cdots & \bFl_{1K}           \\
\bFl_{21}       & \bWl_2    & \bFl_{23} & \cdots & \bFl_{2K} \\
\vdots         & \vdots   & \ddots   & \vdots  & \vdots  \\
\vdots         & \vdots   & \vdots   & \ddots  & \vdots  \\
\bFl_{K1}       & \bFl_{K2} & \cdots   & \cdots  & \bWl_{K}
\end{bmatrix},
\end{align}
where  $\bWl_k$ is an $n_k \times n_k$ nonnegative symmetric matrix denoting the edge weights of within-cluster edges of the $k$-th cluster in the $\ell$-th layer, and $\bFl_{ij}$  is an $n_i \times n_j$ nonnegative rectangular matrix denoting the edge weights  of between-cluster edges of clusters $i$ and $j$  in the $\ell$-th layer, $1 \leq i,j \leq K$, $i \neq j$, and $\bFl_{ij}={\bFl_{ji}}^T$.

\subsection{Multilayer signal plus noise model}
\label{subsec_ML_signal_noise}

Using the cluster-wise block representations of the adjacency and edge weight matrices for the multilayer graph model described in (\ref{eqn_network_model_multilayer}) and  (\ref{eqn_network_model_multilayer_weight}), we propose a signal-plus-noise model for $\bAl$ and $\bWl$ to analyze the effect of convex layer aggregation on graph clustering. 
Specifically, for each layer we assume the connectivity structure and edge weight distributions follow the random interconnection model (RIM)  \cite{CPY16AMOS}. 
In RIM, the signal of the $k$-th cluster in the $\ell$-th layer is the connectivity structure in terms of eigenspectrum and weights of the within-cluster edges represented by the matrices $\bAl_k$ and $\bWl_k$, respectively. The RIM imposes no distributional assumption on the within-cluster edges. The noise between clusters $i$ and $j$ in the $\ell$-th layer is caused by random between-cluster edges, which are represented by the matrices $\bCl_{ij}$ and $\bFl_{ij}$, respectively.

Throughout this paper, we assume the connectivity of a between-cluster edge (i.e., the noise) in each layer is independently drawn from a 
layer-wise and block-wise independent Bernoulli distribution. Specifically, each entry in $\bCl_{ij}$ representing the existence of an edge between clusters $i$ and $j$ in the $\ell$-th layer 
is an independent realization of a Bernoulli random variable with edge connection probability $\pijl \in [0,1]$ that is layer-wise and block-wise independent. In addition, given the existence of an edge $(u,v)$ between clusters $i$ and $j$ in the $\ell$-th layer,
the entry $[\bFl_{ij}]_{uv}$ representing the corresponding edge weight is independently drawn from a nonnegative distribution with mean $\Wbarijl$ and bounded fourth moment that is  layer-wise and block-wise independent. The assumption of  bounded fourth moment is required for the phase transition analysis established in Sec. \ref{sec_MIMOSA_THM}.

For the $\ell$-th layer, the noise accounting for the between-cluster edges is said to be \textit{block-wise identical} if the noise parameters $\pijl=\pl$ and $\Wbarijl=\Wbarl$ for every cluster pair $i$ and $j$, $i \neq j$. Otherwise it is said to be \textit{block-wise non-identical}. The effect of these two noise models on multilayer spectral graph clustering will be studied in Sec. \ref{sec_MIMOSA_THM}.

\subsection{Multilayer spectral graph clustering via convex layer aggregation}

\vspace{-1.2mm}

Let $\bw=[w_1,\ldots,w_L]^T \in \cW_{L}$ be an $L \times 1$ column vector representing the layer weight vector for convex layer aggregation, where $\cW_{L}=\{\bw: w_\ell \geq 0,~\sum_{\ell=1}^{L} w_\ell=1\}$ is the set of feasible layer weight vectors. The single-layer graph obtained via convex layer aggregation with layer weight vector $\bw$ is denoted by $G^{\bw}$. The (weighted) adjacency matrix and the edge weight matrix of $G^{\bw}$ are denoted by $\bAw$ and $\bWw$, respectively, where  $\bAw=\sum_{\ell=1}^{L} w_\ell \bAl$ and $\bWw=\sum_{\ell=1}^{L} w_\ell \bWl$. The graph Laplacian matrix $\bLw$ of $G^\bw$ is defined as  $\bLw=\bSw-\bWw=\sum_{\ell=1}^{L} w_\ell \bLl$, where $\bSw=\diag(\bsw)$ is a diagonal matrix, $\bsw=\bWw \bone_n$ is the vector of nodal strength of $G^\bw$,  $\bone_n$ is the $n \times 1$ column vector of ones, and $\bLl$ is the graph Laplacian matrix of $G_\ell$. Similarly, the graph Laplacian matrix $\bL_k^\bw$ accounting for the within-cluster edges of the $k$-th cluster in $G^\bw$ is defined as 
$\bL_k^\bw=\bSw_k-\bWw_k=\sum_{\ell=1}^{L} w_\ell \bLl_k$, where  $\bWw_k=\sum_{\ell=1}^{L} w_\ell \bWl_k$, $\bSw_k=\diag(\bWw_k\bone_{n_k})$, and $\bLl_k=\bSl_k-\bWl_k$. The $i$-th smallest eigenvalue of $\bLw$ is denoted by $\lambda_i(\bLw)$. Based on the definition of $\bLw$, the smallest eigenvalue $\lambda_1(\bLw)$ of $\bLw$ is 0, since $\bLw \bone_n=\bzero_n$, where $\bzero_n$ is the $n \times 1$ column vector of zeros.

Spectral graph clustering (SGC) \cite{Luxburg07} partitions the nodes in $G^\bw$ 
into $K$ ($K \geq 2$) clusters based on the $K$ eigenvectors associated with the $K$ smallest eigenvalues of $\bLw$. Specifically, SGC first transforms each node in $G^\bw$ to a $K$-dimensional vector in the subspace spanned by these eigenvectors, and then implements K-means clustering \cite{hartigan1979algorithm} on these vectors to group the nodes in $G^\bw$ into $K$ clusters. For analysis purposes, throughout this paper we assume $G^\bw$ is a connected graph. If $G^\bw$ is disconnected,  SGC can be applied to each connected component in $G^\bw$. Moreover, if  $G^\bw$ is connected, $\lambda_i(\bLw)>0$ for all $i \geq 2$. That is, the second to the $n$-th smallest eigenvalue of $\bLw$ are all positive \cite{Fiedler73}. In addition, the eigenvector associated with the smallest eigenvalue $\lambda_1(\bLw)$ provides no information about graph clustering since it is proportional to a constant vector, the vector of ones  $\bone_n$.

Let $\bY \in \mathbb{R}^{n \times (K-1)}$ denote the eigenvector matrix where its $k$-th column is the $(k+1)$-th eigenvector associated with $\lambda_{k+1}(\bLw)$, $1 \leq k \leq K-1$. By the Courant-Fischer theorem \cite{jennings1992matrix}, $\bY$ is  the solution of the minimization problem 
\begin{align}
\label{eqn_spectral_clustering_multi_formulation_ML}
&\SK(\bLw)=\min_{\bX \in \mathbb{R}^{n \times (K-1)}} \trace(\bX^T \bLw \bX), \nonumber \\
	&\text{subjec~to}~\bX^T \bX= \bI_{K-1},~\bX^T \bone_n=\bzero_{K-1}, 
\end{align}
where the optimal value $\SK(\bLw)=\trace(\bY^T \bLw \bY)$ in (\ref{eqn_spectral_clustering_multi_formulation_ML}) is the partial eigenvalue sum $\SK(\bLw)=\sum_{k=2}^{K} \lambda_k(\bLw)$, $\bI_{K-1}$ is the $(K-1) \times (K-1)$ identity matrix, and the constraints in  (\ref{eqn_spectral_clustering_multi_formulation_ML}) impose  orthonormality and centrality on the eigenvectors. In summary, multilayer SGC via convex layer aggregation works by computing the eigenvector matrix $\bY$ from $\bLw$ of $G^\bw$, and implementing K-means clustering on the rows of $\bY$ to group the nodes into $K$ clusters.

\section{Performance Analysis of Multilayer Spectral Graph Clustering via Convex Layer Aggregation}
\label{sec_MIMOSA_THM}

In this section, we establish three theorems on the performance of multilayer spectral graph clustering (SGC) via convex layer aggregation, which generalizes the phase transition analysis established in \cite{CPY16AMOS} for single-layer graphs. The novelty of the analysis presented in this section is the incorporation of the effect of layer weights into multilayer SGC. One obtains the results in \cite{CPY16AMOS} as a special case of the analysis presented in this section when there is only one layer (i.e., $L=1$) and therefore the layer weight vector $\bw$ reduces to a unit scalar. 
	To assist comparison, in this section we use a similar, but abbreviated, presentation structure as in \cite{CPY16AMOS} for our phase transition analysis\footnote{In Sec. \ref{sec_MIMOSA_THM}, there are a number of limit theorems stated about the behavior of random matrices and vectors whose dimensions go to infinity as the sizes $\{n_k\}_{k=1}^K$ of the clusters go to infinity while their relative sizes $n_k/n_{k^\prime}$ are held constant. For simplicity and convenience, the limit theorems are often stated in terms of the finite, but arbitrarily large, dimensions $n_k$, $k=1,2,\ldots, K$. For any two matrices $\bX$ and $\widetilde{\bX}$ of the same dimension, The notation $\bX \ra \widetilde{\bX}$ means convergence in the spectral norm \cite{Tropp_matrix_concentrate}. The notation $\bX \asconv \widetilde{\bX}$  means $\bX \ra \widetilde{\bX}$ almost surely.}. The proofs are given in the supplementary file. 
  The analysis provides a theoretical framework for multilayer SGC and allows us to evaluate the quality of clustering results in terms of a signal-to-noise (SNR) ratio that falls out of the established theorems. This SNR is then used for determining the number of clusters and selecting layer weights in the algorithm proposed in Sec. \ref{sec_MIMOSA}.  

The first theorem (Theorem \ref{thm_impossible_ML}) specifies the interplay between layer weights and the success of multilayer SGC by establishing a condition under which multilayer SGC  fails to correctly identify clusters under the multilayer signal plus noise model in Sec. \ref{subsec_ML_signal_noise}  due to inconsistent rows in the eigenvector matrix $\bY$. The condition is called a ``breakdown condition'' and can be used as a test for identifiability of a given cluster configuration in the multilayer SGC problem.

The second theorem (Theorem \ref{thm_spec_ML}) establishes phase transitions on the clustering performance of multilayer SGC under the block-wise identical noise model for a given layer weight vector $\bw$. Under the block-wise identical noise model, define $\tl=\pl \cdot \Wbarl$ to be the noise level of the $\ell$-th layer and 
let $t^\bw=\sum_{\ell=1}^{L} w_\ell \cdot \tl$ be the aggregated noise level via convex layer aggregation.
We show that for each $\bw \in \cW_{L}$ there exists a critical value $\twstar$ of $t^\bw$ such that if $t^\bw < \twstar$, multi-layer SGC can correctly identify the clusters, and if  $t^\bw > \twstar$, reliable multi-layer SGC is not possible.

The third theorem (Theorem \ref{thm_principal_angle_ML}) extends the phase transition analysis of the block-wise identical noise model to the block-wise non-identical noise model. Under the block-wise non-identical noise model, define $\tlmax=\max_{i,j, i \neq j} \pijl \cdot \Wbarijl$ as the maximum noise level of the $\ell$-th layer and let $\twmax=\sum_{\ell=1}^{L} w_\ell \cdot \tlmax$. Then for each $\bw \in \cW_{L}$ we show that reliable clustering results can be guaranteed provided that $\twmax < \twstar$, where $\twstar$ is the critical value for phase transition under the block-wise identical noise model.

\subsection{Breakdown condition for multilayer SGC via convex layer aggregation}
 
Under the multilayer signal plus noise model in Sec. \ref{subsec_ML_signal_noise}, let $\tijl=\Wbarijl \cdot \pijl$ be the noise level between clusters $i$ and $j$ in the $\ell$-th layer, $1 \leq i,j \leq K$, $i \neq j$, and $1 \leq \ell \leq L$.
The following theorem establishes 
 a general breakdown condition under which multilayer SGC fails to correctly identify the clusters. 
 
\begin{thm}[general breakdown condition]~\\
	\label{thm_impossible_ML}
	Let $\bWt^\bw$ be the $(K-1) \times (K-1)$ matrix with $(i,j)$-th entry
	\begin{align}
	[\bWt^\bw]_{ij}=\left\{
	\begin{array}{ll}
    \sum_{\ell=1}^{L} w_\ell \Lb \lb  n_i+n_K \rb t_{iK}^{(\ell)}+\sum_{z=1,z \neq i}^{K-1} n_z t_{iz}^{(\ell)}  \Rb,& \\  \text{~if~} i=j; & \nonumber \\
    \sum_{\ell=1}^{L} w_\ell	n_i\cdot \lb t^{(\ell)}_{iK}-t^{(\ell)}_{ij} \rb, \text{~if~} i \neq j.
	\end{array}
	\right.	
	\end{align}
	 The following holds almost surely as $n_k \ra \infty$~$\forall~k$ and $\frac{\nmin}{\nmax} \ra c >0$. If for any layer weight vector $\bw \in \cW_L$, $\lambda_i \lb \frac{\bWt^\bw}{n} \rb \neq \lambda_j \lb \frac{\bLw}{n} \rb$ for all $i = 1,2,\ldots,K-1$ and $j =2,3,\ldots,K$, then multilayer SGC cannot be successful.		
\end{thm}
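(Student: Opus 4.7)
The plan is to isolate the between-cluster (``noise'') contribution to $\bLw$, replace it asymptotically by its deterministic expectation, and then examine how that limit acts on the cluster-indicator subspace. Splitting the cluster-block representation~(\ref{eqn_network_model_multilayer_weight}) into within-cluster and between-cluster parts and using linearity of $\bw$-aggregation, write $\bLw = \bM_{\mathrm{sig}} + \bM_{\mathrm{noise}}$, where $\bM_{\mathrm{sig}}$ is block-diagonal with $k$-th block equal to $\sum_{\ell} w_\ell \bLl_k$, and $\bM_{\mathrm{noise}}$ collects the between-cluster edge weights $\bFl_{ij}$ together with the diagonal degree corrections they induce. The $K$-dimensional indicator subspace $\cH := \mathrm{span}\{\chi_1, \ldots, \chi_K\}$ (where $\chi_k$ is the zero-padded indicator of cluster $k$) lies entirely in $\ker(\bM_{\mathrm{sig}})$, so all information that multilayer SGC can exploit is encoded in how $\bM_{\mathrm{noise}}$ acts on $\cH$.

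First I would apply a Bernstein-type matrix concentration inequality for sums of independent bounded-fourth-moment random matrices (in the style of the Tropp bounds referenced in the footnote) to conclude that $\tfrac{1}{n}\bM_{\mathrm{noise}} \asconv \tfrac{1}{n}\mathbb{E}[\bM_{\mathrm{noise}}]$ in spectral norm as $n_k \to \infty$ with $\nmin/\nmax \to c>0$. A direct computation shows $\mathbb{E}[\bM_{\mathrm{noise}}]$ has off-diagonal $(i,j)$-block equal to the constant matrix $\lb\sum_{\ell} w_\ell t_{ij}^{(\ell)}\rb \bone_{n_i}\bone_{n_j}^T$, and within the $k$-th diagonal block contributes a scalar multiple of $\bI_{n_k}$ of magnitude $\sum_{\ell}w_\ell\sum_{z\neq k} n_z t_{kz}^{(\ell)}$. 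This limiting operator preserves $\cH$ and annihilates $\bone_n$; representing its restriction to the $(K-1)$-dimensional complement of $\bone_n$ inside $\cH$ in the non-orthonormal basis $\LB \chi_i - \tfrac{n_i}{n_K}\chi_K \RB_{i=1}^{K-1}$ and rescaling by $1/n$ yields the $(K-1)\times(K-1)$ matrix $\bWt^\bw/n$ exactly as written in the theorem, with the off-diagonal signed differences $t_{iK}^{(\ell)}-t_{ij}^{(\ell)}$ emerging from the choice of cluster $K$ as reference and the diagonal entries absorbing the full degree corrections.

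The concluding step is a spectral-separation argument. Per the minimization~(\ref{eqn_spectral_clustering_multi_formulation_ML}), multilayer SGC can succeed only if the columns of $\bY$ lie asymptotically in $\cH \ominus \mathrm{span}\{\bone_n\}$, since otherwise the rows of $\bY$ belonging to the same true cluster cannot converge to a common point and K-means cannot recover the partition. By the concentration above, the non-trivial eigenvalues of $\bLw/n$ attained by eigenvectors lying in $\cH$ converge almost surely to $\{\lambda_i(\bWt^\bw/n)\}_{i=1}^{K-1}$. Hence if this set is disjoint from $\{\lambda_j(\bLw/n)\}_{j=2}^{K}$, the eigenvectors achieving $\lambda_2(\bLw),\ldots,\lambda_K(\bLw)$ must carry non-negligible mass in $\cH^\perp$, and a Davis-Kahan $\sin\Theta$ bound then forces inconsistent rows of $\bY$ within each cluster and breaks K-means. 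The main obstacle is pinning down the non-orthonormal reference basis so that the restricted operator matches $\bWt^\bw$ entry-by-entry (in particular correctly accounting for the diagonal degree correction, which is easy to miscount), and carrying the $1/n$ normalization through the Davis-Kahan perturbation so that the eigenvalue-separation hypothesis remains effective in the almost-sure limit.
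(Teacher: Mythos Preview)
Your approach is correct in spirit but takes a genuinely different route from the paper. The paper proceeds via the Lagrangian of (\ref{eqn_spectral_clustering_multi_formulation_ML}): it writes down the KKT stationarity condition block by block, left-multiplies by $\tfrac{1}{n}\bone_{n_k}^T$, applies the concentration (\ref{eqn_C_multi_conv_ML})--(\ref{eqn_Degree_matrix_concentrate_ML}), and uses the centrality constraint to eliminate cluster $K$. This yields an asymptotic Sylvester equation $\tfrac{1}{n}(\bWt^\bw \bZ - \bZ\bLambda)\asconv\bO$ with $\bZ=[\bY_1^T\bone_{n_1},\ldots,\bY_{K-1}^T\bone_{n_{K-1}}]^T$ and $\bLambda=\mathrm{diag}(\lambda_2(\bLw),\ldots,\lambda_K(\bLw))$; vectorizing gives $(\bWt^\bw\oplus(-\bLambda))\mathrm{vec}(\bZ)\asconv\bzero$, and the eigenvalue-disjointness hypothesis is exactly the nonsingularity of the Kronecker sum, forcing $\bZ=\bO$, i.e.\ every cluster centroid collapses to the origin.

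Your invariant-subspace argument reaches the same endpoint by a contrapositive: success $\Rightarrow\bY\subset\cH$ asymptotically $\Rightarrow\{\lambda_j(\bLw/n)\}_{j=2}^K$ coincide with the spectrum of the restricted limit operator, which (up to a diagonal similarity by $\mathrm{diag}(n_1,\ldots,n_{K-1})$) is $\bWt^\bw/n$. This is more geometric and avoids writing out the Lagrangian, but the paper's Sylvester route is more constructive---it identifies \emph{what} fails (all centroids at zero) rather than deriving a contradiction, and the Kronecker-sum formulation makes the eigenvalue condition appear naturally rather than through a separate spectral matching step. Two minor points: your off-diagonal expectation should carry a minus sign (the Laplacian block is $-\bFw_{ij}$, not $+\bFw_{ij}$), and the Davis--Kahan invocation at the end is unnecessary---once the eigenvalue sets are disjoint the eigenvectors simply cannot lie in $\cH$, no perturbation bound is needed.
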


Theorem \ref{thm_impossible_ML} specifies the interplay between the layer weight vector $\bw$ and the accuracy of multilayer SGC. Different from the case of single-layer graphs (i.e., $L=1$ and hence $\bw=1$) such that the layer weight has no effect on the performance of SGC, Theorem \ref{thm_impossible_ML} states that 
multilayer SGC cannot be successful if every possible layer weight vector $\bw \in \cW_L$ leads to distinct $K-1$ smallest nonzero eigenvalues of the matrices  $\frac{\bWt^\bw}{n}$ and $\frac{\bLw}{n}$. It also suggests that the selection of layer weight vector affects the performance of multilayer SGC.

\subsection{Phase transitions in multilayer SGC under block-wise identical noise}

Under the multilayer signal plus noise model in Sec. \ref{subsec_ML_signal_noise}, if we further assume the between-cluster edges in each layer follow a block-wise identical distribution, then the noise level in the $\ell$-th layer can be characterized by the parameter $\tl=\pl \cdot \Wbarl$, where $\pl \in [0,1]$ is the edge connection parameter and $\Wbarl>0$ is the mean of the between-cluster edge weights in the $\ell$-th layer.
Under the block-wise identical noise model and given a layer weight vector $\bw \in \cW_{L}$, let $\tw=\sum_{\ell=1}^{L} w_\ell \tl$ denote the aggregated noise level of the graph $G^\bw$.
Theorem \ref{thm_spec_ML} below establishes phase transitions in the eigendecomposition of the graph Laplacian matrix $\bLw$ of the graph $G^\bw$. We show that there exists a critical value $\twstar$ such that the $K$ smallest eigenpairs of  $\bLw$ that are used for multilayer SGC have different characteristics when $\tw < \twstar$ and $\tw > \twstar$.
 In particular, we show that the solution to the minimization problem in (\ref{eqn_spectral_clustering_multi_formulation_ML}), the eigenvector matrix $\bY=[\bY_1^T,\bY_2^T,\ldots,\bY_K^T]^T \in \bbR^{n \times (K-1)}$, where its rows $\bY_k \in \bbR^{n_k \times (K-1)}$ index the nodes in cluster $k$, has cluster-wise separability when  $\tw < \twstar$. This means that, under this condition,  
the rows of each $\bY_k$ are identical with columns that are cluster-wise distinct.  On the other hand, when  $\tw >  \twstar$ the row-wise average of each matrix $\bY_k$ is a zero vector and hence the clusters cannot be perfectly separated by inspecting the eigenvector matrix $\bY$.

\begin{thm}[block-wise identical noise]~\\
	\label{thm_spec_ML}
	Let $\bY=[\bY_1^T,\bY_2^T,\ldots,\bY_K^T]^T$ be the solution of the minimization problem in (\ref{eqn_spectral_clustering_multi_formulation_ML}) and
	let $\cwstar=\min_{k \in \{1,2,\ldots,K\}} \LB \frac{\SK(\bLw_k)}{n}  \RB$, where $\bLw_k=\sum_{\ell=1}^{L} w_\ell \bLl_k$.
	Given a layer weight vector $\bw \in \cW_L$, under the block-wise identical noise model with aggregated noise level $\tw=\sum_{\ell=1}^{L} w_\ell \tl=\sum_{\ell=1}^{L} w_\ell  \pl \Wbarl$, 
	there exists a critical value $\twstar$ such that the following holds almost surely as $n_k \ra \infty$~$\forall~k$ and $\frac{\nmin}{\nmax} \ra c >0$: \\	
	\textnormal{(a)}~$ \left\{
	\begin{array}{ll}
	\textnormal{If~} \tw \leq \twstar,~ \frac{\SK(\bLw)}{n} = (K-1)\tw; \\
	\textnormal{If~} \tw > \twstar,~ \cwstar + (K-1) \lb 1-\frac{\nmax}{n} \rb \tw  \leq  	\frac{\SK(\bLw)}{n} \\ 
	~~~~~~~~~~~~~~~~\leq \cwstar + (K-1) \lb 1-\frac{\nmin}{n} \rb \tw.  \\
	\end{array}
	\right.$ \\
	In particular, if $\tw > \twstar \textnormal{~and~} c=1,~ \frac{\SK(\bLw)}{n} = \cwstar +\frac{(K-1)^2}{K} \tw.$  \\	
	\textnormal{(b)}~$\left\{	
	\begin{array}{ll}
	\textnormal{If~} \tw < \twstar,~\bY_k = \bone_{n_k} \bone_{K-1}^T \bV_k\\
	~~~~~~~~~~~~~~~~~~~~=\Lb v^k_1 \bone_{n_k},v^k_2 \bone_{n_k},\ldots,v^k_{K-1} \bone_{n_k} \Rb,~ \\ 
	~~~~~~~~~~~~~~~~~~~~~\forall~k \in \{1,2,\ldots,K\}; \\
	\textnormal{If~} \tw > \twstar,~
	\bY_k^T\bone_{n_k} = \bzero_{K-1},~\forall~k \in \{1,2,\ldots,K\}; \\
	\textnormal{If~} \tw = \twstar,~\forall~k \in \{1,2,\ldots,K\},~ \bY_k =\bone_{n_k} \bone_{K-1}^T \bV_k \\
	~~~~~~~~~~~~~~~\textnormal{~or~} \bY_k^T\bone_{n_k} = \bzero_{K-1},
	\end{array}
	\right.$ \\
	where $\bV_k=\diag(v^k_1, v^k_2,\ldots, v^k_{K-1}) \in \mathbb{R}^{(K-1) \times (K-1)}$ is a diagonal matrix. \\
	In particular, when $\tw < \twstar$, $\bY$ has the following properties:\\
	\textnormal{(b-1)} The columns of $\bY_k$ are constant vectors. \\
	\textnormal{(b-2)} Each column of $\bY$ has at least two nonzero cluster-wise constant components, and these constants have alternating signs such that their weighted sum equals $0$ (i.e., $\sum_{k} n_k v^k_j = 0,~\forall~j \in\{1,2,\ldots,K-1\}$). \\
	\textnormal{(b-3)} No two columns of $\bY$ have the same sign on the cluster-wise nonzero components.	 \\	
	Finally, $\twstar$ satisfies: \\	
	\textnormal{(c)}~$\tLBw \leq \twstar \leq \tUBw$, where \\
	$\left\{
	\begin{array}{ll}
	\tLBw = \frac{\min_{k \in \{1,2,\ldots,K\}} \SK(\bLw_k)}{(K-1)\nmax}; \\
	\tUBw  = \frac{\min_{k \in \{1,2,\ldots,K\}} \SK(\bLw_k)}{(K-1)\nmin}.
	\end{array}
	\right.$ \\
	In particular, 	$\tLBw=\tUBw$ when $c=1$.	
\end{thm}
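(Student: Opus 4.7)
The plan is to decompose $\bLw = \bLw_{\textnormal{wit}} + \bLw_{\textnormal{bet}}$, where $\bLw_{\textnormal{wit}}$ is block-diagonal with diagonal blocks $\bLw_k$ and $\bLw_{\textnormal{bet}}$ is the Laplacian of the subgraph containing only between-cluster edges. Under the block-wise identical noise model, $\mathbb{E}[\bLw_{\textnormal{bet}}]$ has $(u,v)$-entry equal to $-\tw$ for $u,v$ in different clusters, zero on within-cluster off-diagonal entries, and diagonal entry $(n-n_k)\tw$ for every node in cluster $k$. The first step is a matrix-concentration argument, carried out per layer and combined through the convex weights $w_\ell$, together with the bounded fourth-moment hypothesis on edge weights (truncation, matrix Bernstein, and Borel--Cantelli), to establish $\frac{1}{n}\lb\bLw_{\textnormal{bet}}-\mathbb{E}[\bLw_{\textnormal{bet}}]\rb \asconv \bO$ in spectral norm as $n_k\ra\infty$ with $\nmin/\nmax\ra c>0$. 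Granted this, Weyl's inequality and Davis--Kahan reduce all eigenvalue and eigenvector statements about $\bLw/n$ to those of the deterministic operator $\bM:=\frac{1}{n}\bLw_{\textnormal{wit}}+\frac{1}{n}\mathbb{E}[\bLw_{\textnormal{bet}}]$.

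Next I would decompose $\bbR^n$ into three $\bM$-invariant subspaces: the line $\textnormal{span}\{\bone_n\}$ (eigenvalue $0$); the $(K-1)$-dimensional \emph{cluster-indicator subspace} obtained by removing $\bone_n$ from $\textnormal{span}\{\bone_{n_k}/\sqrt{n_k}\}_{k=1}^K$; and, for each $k$, the \emph{within-cluster orthogonal complement} $\cV_k$ of vectors supported on cluster $k$ that are orthogonal to its indicator. A direct computation in the orthonormalized basis $\tilde{\be}_k=\bone_{n_k}/\sqrt{n_k}$ shows that the restriction of $\frac{1}{n}\mathbb{E}[\bLw_{\textnormal{bet}}]$ is $\tw\lb\bI_K-\bD\bD^T\rb$ with $\bD=\lb\sqrt{n_1/n},\ldots,\sqrt{n_K/n}\rb^T$, whose $K-1$ nonzero eigenvalues all equal $\tw$ exactly, irrespective of cluster sizes; and on each $\cV_k$ the restriction is $\frac{1}{n}\lb\bLw_k+(n-n_k)\tw\bI_{n_k}\rb$, with spectrum $\{\lambda_i(\bLw_k)/n+(1-n_k/n)\tw:i=2,\ldots,n_k\}$.

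The variational formulation (\ref{eqn_spectral_clustering_multi_formulation_ML}) then selects the $K-1$ smallest nonzero eigenvalues of $\bM$. Two bracketing strategies drive the analysis: (i) take all $K-1$ vectors from the cluster-indicator subspace, yielding trace $(K-1)\tw$; (ii) concentrate all $K-1$ in $\cV_{k^*}$ with $k^*=\arg\min_k\SK(\bLw_k)$, yielding trace $\cwstar+(K-1)(1-n_{k^*}/n)\tw$. Equating the two traces gives $\twstar=\SK(\bLw_{k^*})/\lb(K-1)n_{k^*}\rb$; since $\nmin\leq n_{k^*}\leq\nmax$, this immediately delivers $\tLBw\leq\twstar\leq\tUBw$ with equality when $c=1$, establishing part~(c). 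For $\tw<\twstar$ strategy~(i) strictly wins, so $\SK(\bLw)/n=(K-1)\tw$ and the optimal eigenvectors lie in the cluster-indicator subspace; consequently $\bY_k=\bone_{n_k}\bone_{K-1}^T\bV_k$ with $\bV_k=\diag\lb v_1^k,\ldots,v_{K-1}^k\rb$, and properties (b-1)--(b-3) follow from the constraints $\bY^T\bY=\bI_{K-1}$ (forcing alternating signs, at least two nonzero cluster-wise components per column, and distinct sign patterns) and $\bY^T\bone_n=\bzero_{K-1}$ (forcing $\sum_k n_k v_j^k=0$). For $\tw>\twstar$ the optimal vectors lie in $\bigoplus_k\cV_k$, so $\bY_k^T\bone_{n_k}=\bzero_{K-1}$ automatically, and the sandwich in part~(a) follows because any admissible allocation is upper bounded by strategy~(ii) (where $n_{k^*}\geq\nmin$) and lower bounded by replacing each cluster shift $(1-n_k/n)$ with $(1-\nmax/n)$.

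The main obstacle is Step~1, the almost-sure spectral-norm convergence: since each entry of $\bLw_{\textnormal{bet}}$ is a weighted sum of products of Bernoulli indicators and edge-weight variables controlled only in the fourth moment, a truncation-plus-Bernstein argument must be executed layer-wise and then combined through the $w_\ell$, with Borel--Cantelli delivering the almost-sure conclusion in the $n_k\ra\infty$ regime. A secondary subtlety is the lower bound in part~(a) for $\tw>\twstar$ under heterogeneous cluster sizes: ruling out that a mixed allocation spread across several $\cV_k$'s can beat $\cwstar+(K-1)(1-\nmax/n)\tw$ requires a rearrangement/majorization argument on the merged sorted spectrum, exploiting $\nmin/\nmax\ra c>0$ so that no cluster's within-cluster spectrum asymptotically collapses relative to the others.
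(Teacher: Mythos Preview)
Your approach is correct and takes a genuinely different route from the paper. The paper works through the Lagrangian of the trace minimization problem (\ref{eqn_spectral_clustering_multi_formulation_ML}), derives the KKT equation
\[
\lb \tw \bI_{K-1} - \tfrac{\bU}{n} \rb \bY_k^T \bone_{n_k} \asconv \bzero_{K-1},\quad \bU=\diag(\lambda_2(\bLw),\ldots,\lambda_K(\bLw)),
\]
and reads off the dichotomy ``$\bU/n \asconv \tw\bI$'' versus ``$\bY_k^T\bone_{n_k}\asconv\bzero$'' directly from it; the eigenvector structure in Case~1 is then extracted by further manipulations of the stationarity conditions. You instead replace $\bLw/n$ by the deterministic operator $\bM$ via concentration and diagonalize $\bM$ explicitly through the invariant-subspace decomposition (span of $\bone_n$, the $(K-1)$-dimensional cluster-indicator complement, and the $\cV_k$'s). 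This is cleaner: the facts that all $K-1$ cluster-indicator eigenvalues equal $\tw$ \emph{exactly}, and that $\bM|_{\cV_k}$ is just $\bLw_k/n$ shifted by $(1-n_k/n)\tw$, make the phase transition transparent without any variational calculus. Your concentration Step~1 is not an additional burden --- it is equivalent to what the paper imports as (\ref{eqn_C_multi_conv_ML})--(\ref{eqn_Degree_matrix_concentrate_ML}) from the single-layer reference, so the ``main obstacle'' you flag is already assumed resolved there.

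One caveat worth tightening: your $\twstar$ is obtained by equating the \emph{traces} of strategy (i) and strategy (ii), but the eigenvectors in part~(b) are governed by \emph{individual} eigenvalue comparisons. Between $\min_k \lambda_2(\bLw_k)/n_k$ (when the first $\cV_k$-eigenvalue dips below $\tw$) and the point where $K-1$ such eigenvalues have done so, the bottom $K-1$ eigenvectors of $\bM$ are a mix of cluster-indicator and $\cV_k$ vectors, so neither pure case in part~(b) holds column-uniformly. The paper's proof has exactly the same looseness (it asserts the all-or-nothing dichotomy without ruling out the mixed regime), so you are not being held to a higher standard; but since your invariant-subspace picture makes this regime visible, you should state explicitly that the dichotomy is per-column (eigenvalue $=\tw$ or that column has zero cluster-wise sums) and that the single threshold $\twstar$ in (c) is characterized only up to the bracket $[\tLBw,\tUBw]$, collapsing when $c=1$.
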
	

Theorem \ref{thm_spec_ML} (a) establishes a phase transition in the increase
 of the normalized partial eigenvalue sum $\frac{\SK(\bLw)}{n}$ with respect to the aggregated noise level $\tw$. When $\tw \leq \twstar$ the quantity $\frac{\SK(\bLw)}{n}$ is exactly  $(K-1)\tw$. When $\tw> \twstar$ the slope in $\tw$ of $\frac{\SK(\bL)}{n}$  changes and the intercept $c^*=\min_{k \in \{1,2,\ldots,K\}} \LB \frac{\SK(\bLw_k)}{n}  \RB=\min_{k \in \{1,2,\ldots,K\}} \LB \frac{\sum_{\ell=1}^{L} w_\ell \SK(\bLl_k)}{n}  \RB$ depends on the cluster having the smallest aggregated partial eigenvalue sum given a layer weight vector $\bw$. In particular, when all clusters have the same size (i.e., $\nmax=\nmin=\frac{n}{K}$) so that $c=1$, $\frac{\SK(\bL)}{n}$ undergoes a slope change from $K-1$ to $\frac{(K-1)^2}{K}$ at the critical value $\tw=\twstar$.  The visual illustration of Theorem \ref{thm_spec_ML} (a) is displayed in Fig. \ref{Fig_slope} of the supplementary material.

Theorem \ref{thm_spec_ML} (b) establishes a phase transition in cluster-wise separability of the eigenvector matrix $\bY$ for multilayer SGC. When $\tw < \twstar$, the conditions (b-1) to (b-3) imply that the rows of the cluster-wise components $\{\bY_k\}_{k=1}^K$ are coherent, and hence the row vectors in $\bY$ possess cluster-wise separability. On the other hand, when $\tw > \twstar$,
the row sum of each $\bY_k$ is a zero vector, making $\mathbf Y_k$ incoherent. This means that the entries of each column in $\bY_k$ have alternating signs  and the centroid of the row vectors in $\bY_k$ is centered at the origin.
Therefore, K-means clustering on the rows of $\bY$ yields incorrect clusters.

Theorem  \ref{thm_spec_ML} (c) establishes upper and lower bounds on the critical threshold value $\twstar$ of the aggregated noise level $\tw$ given a layer weight vector $\bw$. These bounds are determined by the cluster having the smallest aggregated partial eigenvalue sum $\SK(\bLw_k)=\sum_{\ell=1}^{L} w_\ell \SK(\bLl_k)$, the number of clusters $K$, and the largest and smallest cluster size ($\nmax$ and $\nmin$). When all cluster sizes are identical (i.e., $c=1$), these bounds become tight (i.e., $\tLBw=\tUBw$). Moreover, by the nonnegativity of the layer weights we can obtain a universal lower bound on $\tLBw$ for any $\bw \in \cW_L$, which is 
\begin{align}
\label{eqn_LB_tLB}
	\tLBw &= \frac{\min_{k \in \{1,2,\ldots,K\}} \SK(\bLw_k)}{(K-1)\nmax} \nonumber \\
	&\geq \frac{\min_{k \in \{1,2,\ldots,K\}}  \min_{\ell \in \{1,2,\ldots,L\}}\SK(\bLl_k)}{(K-1)\nmax}. 
\end{align}
Since $\SK(\bLl_k)$ is a measure of connectivity for cluster $k$ in the $\ell$-th layer, the lower bound of $\tLBw$ in (\ref{eqn_LB_tLB}) implies that the performance of multilayer SGC is indeed affected by the least connected cluster among all $K$ clusters and across $L$ layers. Specifically, if the graph in each layer is unweighted and $K=2$, then $\SK(\bLl_k)=\lambda_2(\bLl_k)$ reduces to the algebraic connectivity  of cluster $k$ in the $\ell$-th layer. Similarly, a universal upper bound on $\tUBw$ for any $\bw \in \cW_L$ is 
\begin{align}
\label{eqn_LB_tUB}
	\tUBw  \leq \frac{\min_{k \in \{1,2,\ldots,K\}} \max_{\ell \in \{1,2,\ldots,L\}} \SK(\bLl_k)}{(K-1)\nmin}.
\end{align}

\subsection{Phase transitions in multilayer SGC under block-wise non-identical noise}
Under the block-wise non-identical noise model, the noise level of between-cluster edges between clusters $i$ and $j$ in the $\ell$-th layer is characterized by the parameter $\tijl=\pijl \cdot \Wbarijl$, $1 \leq i,j \leq K$, $i \neq j$, and $1 \leq \ell \leq L$. Let $\tlmax=\max_{1\leq i,j \leq K,~i \neq j} \tijl$ be the maximum noise level in the $\ell$-th layer and let $\twmax=\sum_{\ell=1}^{L} w_\ell \tlmax$ denote the aggregated maximum noise level given a layer weight vector $\bw \in \cW_L$.

Let $\bY \in \mathbb{R}^{n \times (K-1)}$ be the  eigenvector matrix of $\bLw$ under the
block-wise non-identical noise model, and let  $\btY \in \mathbb{R}^{n \times (K-1)}$ be the eigenvector matrix of the graph Laplacian $\btLw$ of  another graph generated by the block-wise identical noise model with aggregated noise level $\tw$, which is independent of $\bL$.   Theorem \ref{thm_principal_angle_ML} below specifies the distance between the subspaces spanned by the columns of $\bY$ and $\btY$ by inspecting their principal angles \cite{Luxburg07}. Specifically, since $\bY$ and $\btY$ both have orthonormal columns, the vector $\ba$ of $K-1$ principal angles between their column spaces is $\ba=[\cos^{-1}\sigma_1(\bY^T \btY),\ldots,\cos^{-1}\sigma_{K-1}(\bY^T \btY)]^T$, where $\sigma_k(\bM)$ is the $k$-th largest singular value of a real rectangular matrix $\bM$.
Let $\mathbf{\Theta}(\bY,\btY)=\diag(\ba)$, and let $\sin\mathbf{\Theta}(\bY,\btY)$ be defined entrywise. 
 When $\tw < \twstar$, Theorem \ref{thm_principal_angle_ML} provides an upper bound on the Frobenius norm of $\sin\mathbf{\Theta}(\bY,\btY)$, which is denoted by $\| \sin\mathbf{\Theta}(\bY,\btY) \|_F$. Moreover, if $\twmax< \twstar$, where $\twstar$ is the critical threshold value for the block-wise identical noise model as specified in Theorem \ref{thm_spec_ML}, then $\| \sin\mathbf{\Theta}(\bY,\btY) \|_F$ can be further bounded.

\begin{thm}[block-wise non-identical noise]~\\
	\label{thm_principal_angle_ML}
	Under the multilayer signal plus noise model in Sec. \ref{subsec_ML_signal_noise} with maximum noise level $\{\tlmax\}_{\ell=1}^L$ for each layer, given a layer weight vector $\bw \in \cW_L$, let $\twstar$  be 
   be the critical threshold value for the block-wise identical noise model specified by Theorem \ref{thm_spec_ML}, and define $\delta_{\tw,n}=\min\{\tw,|\lambda_{K+1}(\frac{\bLw}{n})-\tw|\}$. 
	For a fixed $\tw$, if $\tw < \twstar$ and $\delta_{\tw,n} \ra \delta_{\tw} > 0$ as $n_k \ra \infty$~$\forall~k$,
	the following statement holds almost surely as
	$n_k \ra \infty$~$\forall~k$ and $\frac{\nmin}{\nmax} \ra c >0$:
	\begin{align}
	\label{eqn_principal_angle_bound_ML}
	\|\sin\mathbf{\Theta}(\bY,\btY)\|_F \leq \frac{\| \bLw - \btLw \|_F}{n \delta_{\tw}}.
	\end{align}
	Furthermore, let $\twmax=\sum_{\ell=1}^L w_\ell \tlmax$. If $\twmax < \twstar$,
	\begin{align}
	\label{eqn_principal_angle_bound_2_ML}
	\|\sin\mathbf{\Theta}(\bY,\btY)\|_F \leq \min_{\tw \leq \twmax} \frac{\| \bLw - \btLw \|_F}{n \delta_{\tw}}.
	\end{align}
\end{thm}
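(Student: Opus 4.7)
The plan is to apply a Davis--Kahan $\sin\Theta$ theorem, treating $\bLw$ as a perturbation of the reference matrix $\btLw$ by the difference $\bLw - \btLw$. The Frobenius-norm Davis--Kahan bound, applied to the invariant subspaces associated with the 2nd through $K$-th smallest eigenvalues of each matrix, gives an estimate of the form
\[
\|\sin\mathbf{\Theta}(\bY,\btY)\|_F \leq \frac{\|\bLw - \btLw\|_F}{n \,\delta_{\tw,n}},
\]
where $\delta_{\tw,n}$ is the spectral gap separating the target eigenvalue cluster of $\btLw/n$ from the rest of its spectrum. This is already the form in (\ref{eqn_principal_angle_bound_ML}), so the real work is to identify $\delta_{\tw,n}$ with the quantity $\min\{\tw,|\lambda_{K+1}(\bLw/n)-\tw|\}$ stated in the theorem.

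To do so, I would invoke Theorem \ref{thm_spec_ML} applied to $\btLw$: part (a) with $\tw < \twstar$ yields $\SK(\btLw)/n = (K-1)\tw$, while part (b) provides an explicit cluster-indicator structure for the eigenvectors. Computing the Rayleigh quotients of $\btLw/n$ on those cluster-constant vectors (using $\btLw \bone_{n_k} = \bzero$ within each cluster block) pins down each individual eigenvalue $\lambda_k(\btLw/n) \to \tw$ for $k = 2,\ldots,K$, rather than only their average. Consequently the gap below is $\tw$ (separation from $\lambda_1 = 0$) and the gap above is $|\lambda_{K+1}(\btLw/n) - \tw|$. A Weyl-type inequality $|\lambda_{K+1}(\bLw/n) - \lambda_{K+1}(\btLw/n)| \leq \|\bLw - \btLw\|_2/n$, together with $\|\bLw - \btLw\|_2/n \to 0$ (which follows from the bounded fourth-moment assumption on the block-wise noise via standard matrix concentration), allows me to replace $\lambda_{K+1}(\btLw/n)$ by $\lambda_{K+1}(\bLw/n)$ in the gap without loss. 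Combining with the hypothesized convergence $\delta_{\tw,n} \to \delta_{\tw} > 0$ and passing to the limit delivers (\ref{eqn_principal_angle_bound_ML}).

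For the second bound, the crucial observation is that Theorem \ref{thm_spec_ML}(b) pins down the column space of $\btY$ (the fixed cluster-indicator subspace orthogonal to $\bone_n$) independently of the specific value of $\tw$, provided only that $\tw < \twstar$. Thus, for every $\tw \leq \twmax < \twstar$, one may freely instantiate an auxiliary independent graph Laplacian $\btLw$ at aggregated noise level $\tw$ without altering the column space of $\btY$ and hence without altering the left-hand side of (\ref{eqn_principal_angle_bound_ML}). Applying the first bound for each admissible $\tw$ and then taking the infimum over $\tw \in [0,\twmax]$ on the right-hand side yields (\ref{eqn_principal_angle_bound_2_ML}).

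The main obstacle I anticipate is twofold: (i) establishing that the $K-1$ eigenvalues $\lambda_k(\btLw/n)$ individually converge to the common value $\tw$ (not merely in average), which requires carefully combining the partial sum identity in Theorem \ref{thm_spec_ML}(a) with the cluster-constant eigenvector description in Theorem \ref{thm_spec_ML}(b) and an argument that no eigenvalue escapes; and (ii) selecting the appropriate variant of Davis--Kahan (e.g.\ the Yu--Wang--Samworth formulation) so that the separation quantity may legitimately be expressed in terms of $\lambda_{K+1}(\bLw/n)$ rather than $\lambda_{K+1}(\btLw/n)$, as demanded by the stated definition of $\delta_{\tw,n}$.
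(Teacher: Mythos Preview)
Your proposal is correct and follows essentially the same route as the paper: apply Davis--Kahan, use the collapse $\lambda_j(\btLw/n) \to \tw$ for $j=2,\ldots,K$ to reduce the separation to $\delta_{\tw}$, and then minimize over $\tw \leq \twmax$. The two obstacles you flag are resolved more directly than you anticipate: the individual eigenvalue convergence comes for free from the relation $\bU/n \asconv \tw \bI_{K-1}$ established as Case~1 in the proof of Theorem~\ref{thm_spec_ML} (recall $\bU = \diag(\lambda_2(\btLw),\ldots,\lambda_K(\btLw))$), and the paper invokes the classical Davis--Kahan form in which $\delta$ is already the distance between the target eigenvalues of $\btLw/n$ and the complementary set $\{0\} \cup [\lambda_{K+1}(\bLw/n),\infty)$ of $\bLw/n$, so no Weyl swap is needed.
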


 Theorem \ref{thm_principal_angle_ML} shows that the subspace distance  $\|\sin\mathbf{\Theta}(\bY,\btY)\|_F$ is upper bounded by (\ref{eqn_principal_angle_bound_ML}), where $\btY$ is the eigenvector matrix of $\btLw$ under the block-wise identical noise model when its aggregated noise level $\tw < \twstar$. Furthermore, if the aggregated maximum noise level $\twmax < \twstar$, then a tight upper bound on $\|\sin\mathbf{\Theta}(\bY,\btY)\|_F$ can be obtained by (\ref{eqn_principal_angle_bound_2_ML}).
Therefore, using the phase transition results of the cluster-wise separability in $\btY$  as established in Theorem \ref{thm_spec_ML} (b), when $\twmax < \twstar$,
cluster-wise separability in $\bY$ can be expected provided that $\|\sin\mathbf{\Theta}(\bY,\btY)\|_F $ is small.

\section{MIMOSA: Multilayer Iterative Model Order Selection Algorithm}
\label{sec_MIMOSA} 
 
The phase transition analysis established in Sec. \ref{sec_MIMOSA_THM} shows that under the multilayer signal plus noise model in Sec. \ref{subsec_ML_signal_noise}, the performance of multilayer spectral graph clustering (SGC) via convex layer aggregation can be separated into two regimes: a reliable regime where high clustering accuracy is guaranteed, and an unreliable regime where high clustering accuracy is impossible. We have specified the critical threshold value of the aggregated noise level that separates these two regimes, and have shown that the assigned layer weight vector $\bw$ for convex layer aggregation indeed affects the accuracy of multilayer SGC. 

In this section, we use the established phase transition criterion to propose a multilayer SGC algorithm, which we call multilayer iterative model order selection algorithm (MIMOSA). MIMOSA is a multilayer SGC algorithm that features automated model order selection for determining the number of clusters ($K$) and the layer weight vector $\bw$. It works by
incrementally partitioning the aggregated graph $G^\bw$ into $K$ clusters, adjusting the layer weight vector, and finding the minimal number of clusters such that the output clusters are estimated to be in the reliable regime. The flow diagram of MIMOSA is displayed in Fig. \ref{Fig_MIMOSA_flow}, and the complete algorithm is summarized in Algorithm \ref{algo_MIMOSA}. Since part of MIMOSA uses the same statistical testing methods developed for single-layer graphs in \cite{CPY16AMOS}, the
details on the V-test and Wilk's test are omitted. The interested reader can refer to Sec. V of \cite{CPY16AMOS}.

\subsection{Input data}
The input data for MIMOSA is summarized as follows. (1) a multilayer graph $\{G_\ell\}_{\ell=1}^{L}$ of $L$ layers, where each layer $G_\ell$ is an undirected weighted graph. (2) an initial layer weight vector $\bwini \in \cW_L$. $\bwini$ can be specified according to domain knowledge, or it can be a uniform vector such that $w_\ell=\frac{1}{L}$~$\forall~\ell$. (3) a layer weight adaptation coefficient set $\cT=\{ \tau_z \}_{z=1}^{|\cT|}$. The coefficients in $\cT$ play a role in the process of layer weight adaptation in Sec. \ref{subsec_layer_weight_adap}. (4)  a p-value significance level $\eta$ that is used for 
the block-wise homogeneity test in Sec. \ref{subsec_block_hom_test}.
(5) confidence interval parameters $\{\alpha_\ell\}_{\ell=1}^L$ of each layer under the block-wise identical noise model  for clustering reliability evaluation in Sec. \ref{subsec_identical_test}.
(6) confidence interval parameters $\{\alpha^\prime_\ell\}_{\ell=1}^L$ of each layer under the block-wise non-identical noise model   for clustering reliability evaluation in Sec. \ref{subsec_nonidentical_test}.

\begin{figure}[t!]		
	\centering
	\includegraphics[scale=0.45]{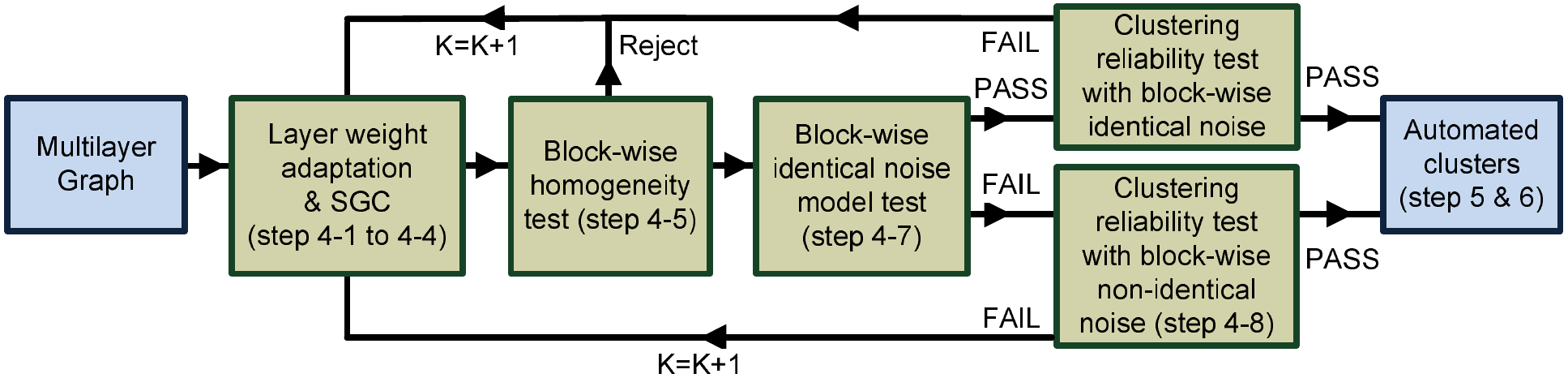}
	\caption{Flow diagram of the proposed multilayer iterative model order selection algorithm (MIMOSA) for multilayer spectral graph clustering  (SGC).}
	\label{Fig_MIMOSA_flow}     
\vspace{-4mm}
\end{figure}

\subsection{Layer weight adaptation}
\label{subsec_layer_weight_adap}
Given an initial layer weight vector $\bwini$ and the number of clusters $K$ in the iterative process (step 4) of MIMOSA, 
we propose to adjust the layer weight vector $\bw$ for convex layer aggregation by estimating the noise level $\{\htinil\}_{\ell=1}^L$ under the block-wise identical noise model in Sec. \ref{subsec_ML_signal_noise}. Specifically, given $K$ clusters $\{\cC_k^{\bwini}\}_{k=1}^K$  of size $\{\hn_k\}_{k=1}^K$ via multilayer SGC with $\bwini$, let $\{ \hbCijl \}$ and $\{ \hbFijl \}$  be the interconnection matrix and edge weight matrix of $\{\cC_k^{\bwini}\}_{k=1}^K$, respectively, for $1 \leq i,j \leq K$, $i \neq j$, and $1 \leq \ell \leq L$. Then the  noise level estimator under the block-wise identical noise model is 
\begin{align}
\label{eqn_MIMOSA_ini_est}
\htinil=\hpl \cdot \hWbarl,
\end{align}
for $\ell \in \{1,2,\ldots,L\}$, where $\hpl=\frac{\sum_{i=1}^K \sum_{j=i+1}^K \hatmijl}{\sum_{i=1}^K \sum_{j=i+1}^K {\hn_i \hn_j}}$  is the maximum likelihood estimator (MLE) of $\pl$, $\hatmijl=\bone_{\hn_i}^T \hbCijl \bone_{\hn_j}$ is the number of between-cluster edges of clusters $i$ and $j$ in the $\ell$-th layer, and $\hWbarl$ is the average of between-cluster edge weights in the $\ell$-th layer.

Since the estimates  $\{\htinil\}_{\ell=1}^L$ reflect the noise level in each layer, we propose to adjust the layer weight vector $\bw \in \cW_L$ 
with a nonnegative regularization parameter $\tau \in \cT$. The adjusted $\bw$ layer weight vector is inversely proportional to the estimated noise level, which is  defined as
\begin{align}
\label{eqn_layer_weight_adjust}
w_\ell \propto   \frac{\wini_\ell}{1+\tau \cdot \htinil},
\end{align}
for $\ell \in \{1,2,\ldots,L\}$. Note that if $\tau=0$, then $\bw$ reduces to $\bwini$. In addition, larger $\tau$ further penalizes the layers of high noise level by assigning less weight for convex layer aggregation. In addition, to enable the computation of the function $\min_{k\in\{1,2,\ldots,K\}} \SK(\sum_{\ell=1}^{L} w_\ell \cdot \bLl_k)$ for clustering reliability test in the following step of MIMOSA, the detected clusters $\{\cC_k^\bw\}_{k=1}^K$ are deemed unreliable if the size of any detected cluster is less than $K$.

\begin{algorithm*}[]
	\caption{Multilayer iterative model order selection algorithm (MIMOSA) for multilayer SGC}
	\label{algo_MIMOSA}
	\begin{algorithmic}
		\State \textbf{Input:} 
		\State (1) a multilayer graph $\{G_{\ell}\}_{\ell=1}^L$
		\State (2) an initial layer weight vector $\bwini \in \cW_L$
		\State (3) a layer weight adaptation coefficient set $\cT=\{ \tau_z \}_{z=1}^{|\cT|}$
		\State (4) a p-value significance level $\eta$
		\State (5) confidence interval parameters $\{\alpha_\ell\}_{\ell=1}^L$ under the block-wise identical noise model for each layer
		\State (6) confidence interval parameters $\{\alpha^\prime_\ell\}_{\ell=1}^L$ under the block-wise non-identical noise model for each layer
		\State \textbf{Output:} $K$ clusters $\{\cC_k\}_{k=1}^K$
		\State Initialization: $K=2$. Flag $=1$. $\cW_{\textnormal{reliable}}=\varnothing$.
		\While{Flag$=1$}
		\State 1. Compute $\bY \in \bbR^{n \times (K-1)}$ of $\bL^{\bwini}$ 
		\State 2. Obtain $K$ clusters $\{\cC_k^{\bwini}\}_{k=1}^K$ by implementing K-means algorithm on the rows of $\bY$
		\State 3. Estimate the noise level $\{\htinil\}_{\ell=1}^L$ from (\ref{eqn_MIMOSA_ini_est})
		\State 4. Layer weight adaptation and multilayer SGC reliability tests:
		\For{$z=1$ to $|\cT|$}
		\State 4-1.  Layer weight adaptation: $ w_\ell  \leftarrow \wini_\ell \cdot (1+\tau_z \cdot \htl)^{-1}$, $\forall~\ell \in \{1,2,\ldots,L\}$
		\State 4-2.  Layer weight normalization: $ w_\ell \leftarrow  \frac{w_\ell}{\sum_{\ell^\prime=1}^{L} w_{\ell^\prime}}$, $\forall~\ell \in \{1,2,\ldots,L\}$
		\State 4-3. Compute $\bY \in \bbR^{n \times (K-1)}$ of $\bL^{\bw}$
		\State 4-4. Obtain $K$ clusters $\{\cC_k^{\bw}\}_{k=1}^K$ by implementing K-means algorithm on the rows of $\bY$
		\State 4-5. \textbf{Block-wise homogeneity test:} calculate p-value($i,j,\ell$),~$\forall~i,j,\ell,$ $1 \leq i,j \leq K$, $i \neq j$, and $1 \leq \ell \leq L$
		\If{p-value($i,j,\ell$) $\leq \eta$ for some ($i,j,\ell$)}
		\State Go back to step 4-1 with $z = z+1$
		\EndIf	
		\State 4-6. Estimate the noise level $\{\htijl\}$ for all $i,j,\ell$ and estimate  $\htLBw$ from (\ref{eqn_estimate_htLBw})
		\State 4-7. \textbf{Block-wise identical noise test:} estimate the aggregated noise level $\htw=\sum_{\ell=1}^L w_\ell \cdot \htl$		
		\If{$\htl$ lies in the $100(1-\alpha_\ell) \%$ confidence interval~$\forall~\ell$}
		\If{$\htw < \htLBw$}				 
		\State Flag$=0$. $\cW_{\textnormal{reliable}}=\cW_{\textnormal{reliable}} \cup \{\bw\}$. 
		\EndIf    
		\ElsIf{$\htl$ does not lie in the $100(1-\alpha_\ell) \%$ confidence interval for some $\ell$}
		\State 4-8. \textbf{Block-wise non-identical noise test:} estimate the aggregated maximum noise level $\htwmax=\sum_{\ell=1}^{L} w_\ell \htmaxl$
		\If{$\prod_{i=1}^K \prod_{j=i+1}^K F_{ij} ( \frac{\htLBw}{\hWbarijl},\hpijl ) \geq 1-\alpha^\prime_{\ell}$~$\forall~\ell$}	
		\If{$\htwmax < \htLBw$}
		\State Flag$=0$. $\cW_{\textnormal{reliable}}=\cW_{\textnormal{reliable}} \cup \{\bw\}$.	
		\EndIf
		\EndIf
		\EndIf
		\State  Go back to step 4-1 with $z=z+1$
		\EndFor
		\If{Flag$=1$}
		\State Go back to step 1 with $K=K+1$
		\EndIf	
		\EndWhile
		\State 5. SNR criterion: select $\bw^*=\arg \max_{\bw \in \cW_{\textnormal{reliable}}} \frac{\htLBw}{\htw}$
		\State 6. Output final clustering result: $\{\cC_k\}_{k=1}^K \leftarrow \{\cC_k^{\bw^*}\}_{k=1}^K$ 							
	\end{algorithmic}
\end{algorithm*}

\subsection{Block-wise homogeneity test}
\label{subsec_block_hom_test}

Given $K$ clusters $\{\cC_k^{\bw}\}_{k=1}^K$ with respect to a layer weight vector $\bw$ in the iterative process (step 4) of MIMOSA, we implement a block-wise homogeneity test for each block $\hbCijl$ accounting for the interconnection matrix of clusters $i$ and $j$ in the $\ell$-th layer, in order to test the assumption of the block-wise homogeneity noise model as assumed in Sec. \ref{subsec_ML_signal_noise}, which is the cornerstone of the phase transition results established in Sec. \ref{sec_MIMOSA_THM}.

In particular, we use the V-test developed in Algorithm 1 of \cite{CPY16AMOS} to test the assumption of  block-wise homogeneity noise model. Given $x$ independent binomial random variables, the V-test tests that they are all identically distributed \cite{potthoff1966testing}. Here we apply the V-test to the row sums of $\hbCijl$. 
 The block-wise homogeneity test on $\hbCijl$ rejects the block-wise homogeneous hypothesis if its \text{p-value}$(i,j,\ell) \leq \eta$, where $\eta$ is the desired single comparison significance level.

	In step 4-5 of MIMOSA, the layer weight vector $\bw$ and the corresponding clusters $\{\cC_k^{\bw}\}_{k=1}^K$ are deemed unreliable if 
	there exists some $\hbCijl$ such that its p-value does not exceed the significance level.

\subsection{Clustering reliability test under the block-wise identical noise model}
\label{subsec_identical_test}
In the iterative process of step 4 in MIMOSA, if every interconnection matrix $\hbCijl$ passes the block-wise homogeneity test in Sec. \ref{subsec_block_hom_test}, the identified clusters $\{ \cC_k^\bw\}_{k=1}^K$ are then used to test the clustering reliability under the block-wise identical noise model in Sec. \ref{subsec_ML_signal_noise}. In particular, for each layer $\ell$, we first estimate the noise level parameter $\hpijl$ for every cluster pair $i$ and $j$ as $\hpijl=\frac{\hatmijl}{{\hn_i \hn_j}}$, where $\hpijl$ is an MLE of $\pijl$.
We then use the generalized log-likelihood ratio test (GLRT) developed in Sec. V-C. of \cite{CPY16AMOS} to specify an asymptotic  $100(1-\alpha_\ell) \%$ confidence interval for $\pl$ accounting for the block-wise identical noise level parameter for each layer. In particular, the GLRT is a test statistic of the null hypothesis \textit{all block-wise noises are independent and identical versus the alternative hypothesis \textit{all block-wise noises are independent but not identical}.}

If the estimated block-wise identical noise level parameter $\hpl=\frac{\sum_{i=1}^K \sum_{j=i+1}^K \hatmijl}{\sum_{i=1}^K \sum_{j=i+1}^K {\hn_i \hn_j}}$ is within the $100(1-\alpha_\ell) \%$ confidence interval 
for every $\ell$, then the clusters $\{ \cC_k^\bw\}_{k=1}^K$ satisfy the block-wise identical noise model, and therefore we can apply the phase transition results in Theorem \ref{thm_spec_ML} to evaluate the clustering reliability. In particular, we compare the estimated aggregated noise level $\htw$ with the estimated phase transition lower bound $\htLBw$ of $\tLBw$ in Theorem \ref{thm_spec_ML} (c), where $\htw=\sum_{\ell=1}^{L} w_\ell \htl=\sum_{\ell=1}^{L} w_\ell \cdot \hpl \cdot \hWbarl$, and 
\begin{align}
\label{eqn_estimate_htLBw}
\htLBw= \frac{\min_{k\in\{1,2,\ldots,K\}} \SK(\sum_{\ell=1}^{L} w_\ell \cdot \hbLl_k)}{(K-1) \cdot \widehat{n}_{\max}},
\end{align}
where
$\hbLl_k$ is the graph Laplacian matrix of within-cluster edges of cluster $\cC_k^{\bw}$ in the $\ell$-th layer, $\SK(\sum_{\ell=1}^{L} w_\ell \cdot \hbLl_k)=\sum_{z=2}^{K} \lambda_{z}(\sum_{\ell=1}^{L} w_\ell \cdot \hbLl_k)$, and $\widehat{n}_{\max}=\max_{k\in\{1,2,\ldots,K\}} \hn_k$. 
Therefore, using Theorem \ref{thm_spec_ML},  the clusters $\{ \cC_k^\bw\}_{k=1}^K$ are deemed reliable if $\htw < \htLBw$, since the eigenvector matrix $\bY$ used for multilayer SGC possesses cluster-wise separability. The lower bound in (\ref{eqn_estimate_htLBw}) also specifies the effect of cluster size on clustering reliability test. Ignoring the term in the numerator, a set of imbalanced clusters having larger $\widehat{n}_{\max}$ leads to smaller $\htLBw$ and hence implies a more difficult clustering problem.

	\begin{figure*}[t]
		\centering
		\begin{subfigure}[b]{0.25\linewidth}
			\includegraphics[width=\textwidth]{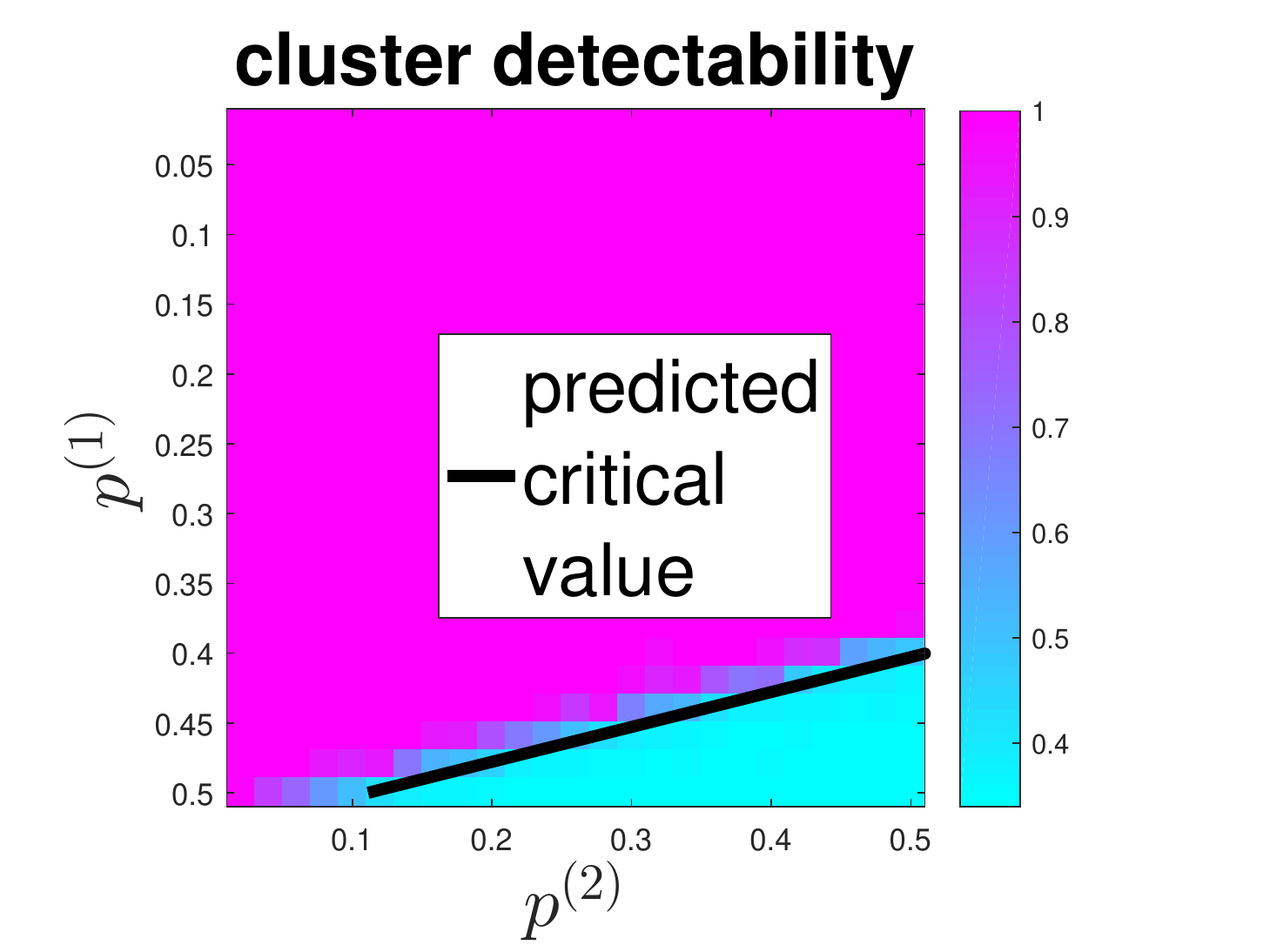}
			\caption{$(w_1,w_2)=(0.8,0.2)$}
		\end{subfigure}%
		\centering
		\begin{subfigure}[b]{0.245\linewidth}
			\includegraphics[width=\textwidth]{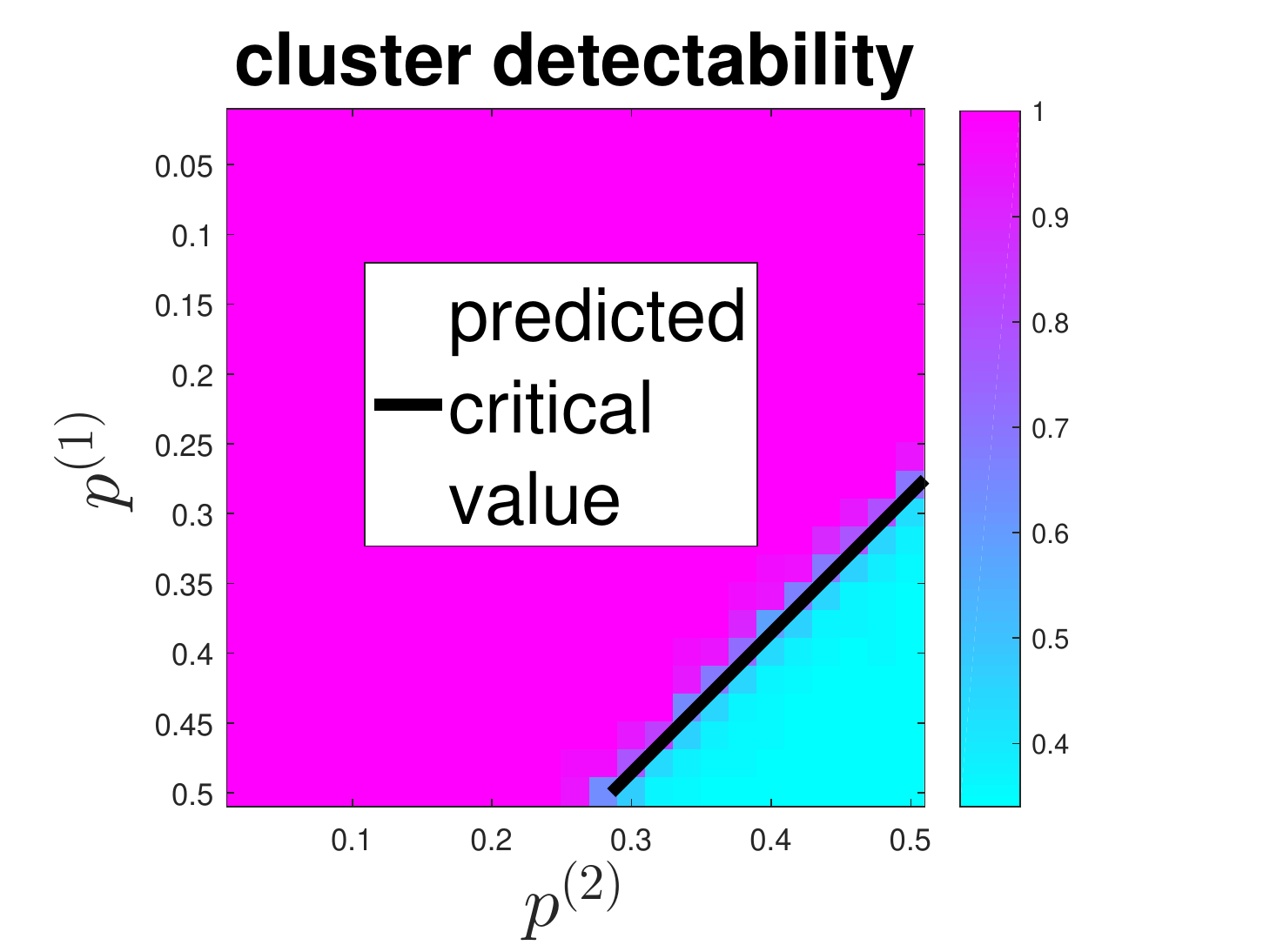}
			\caption{$(w_1,w_2)=(0.5,0.5)$}
		\end{subfigure}		
		\centering
		\begin{subfigure}[b]{0.245\linewidth}
			\includegraphics[width=\textwidth]{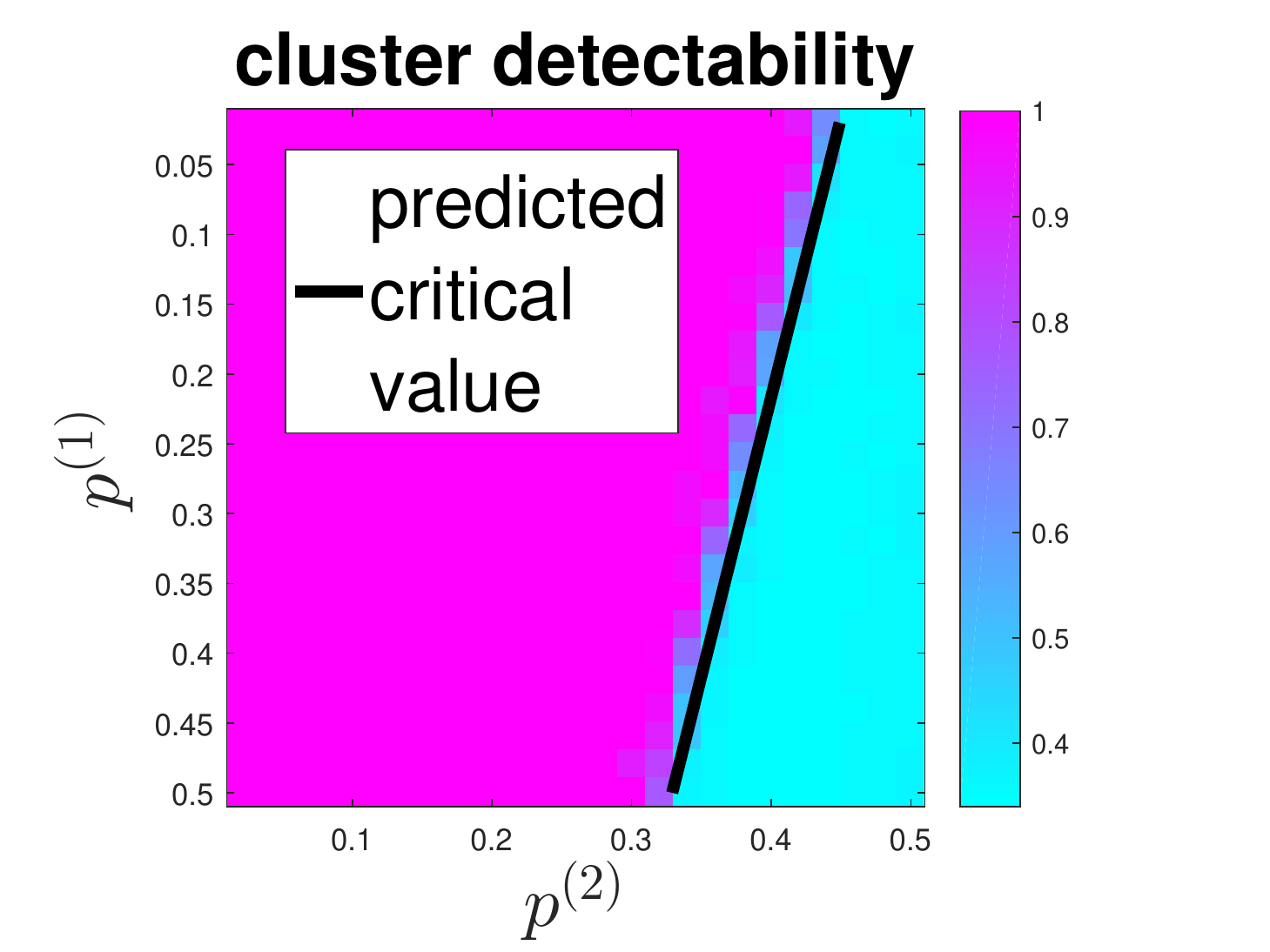}
			\caption{$(w_1,w_2)=(0.2,0.8)$}
		\end{subfigure}		
		\centering
		\begin{subfigure}[b]{0.245\linewidth}
			\includegraphics[width=\textwidth]{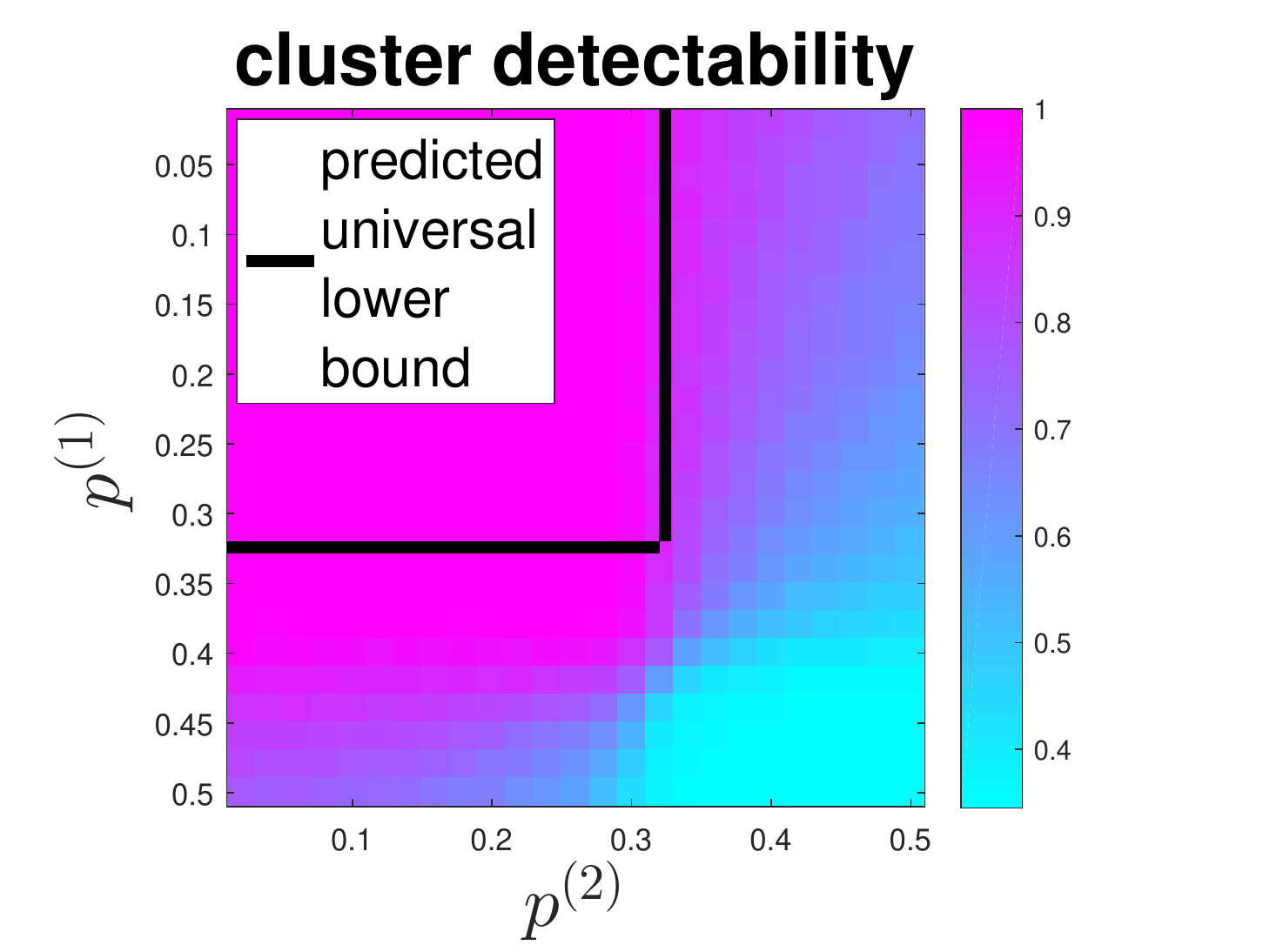}
			\caption{average over $\bw$}
		\end{subfigure}					
		\caption{Phase transitions in the accuracy of multilayer SGC with respect to different layer weight vector $\bw=[w_1~w_2]^T$ for the two-layer correlated graph model. $n_1=n_2=n_3=1000$, $q_{11}=0.3$, $q_{10}=0.2$, $q_{01}=0.1$, and $q_{00}=0.4$. The results are averaged over 10 runs.  In Fig. \ref{Fig_two_layer_detectability} (a)-(c), for a given $\bw$, the variations in the noise level $\{\pl\}_{\ell=1}^2$ indeed separates the accuracy of multilayer SGC into a reliable regime and an unreliable regime. Furthermore, the critical value that separates these two regimes is successfully predicted by Theorem \ref{thm_spec_ML}.
			Fig. \ref{Fig_two_layer_detectability} (d) shows the geometric mean over $\bw$, where $w_1$ is uniformly drawn from $[0,1]$ with unit interval $0.1$. There is a universal region of perfect cluster detectability that includes the region specified by the universal phase transition lower bound in (\ref{eqn_LB_tLB}).   			 
			}
		\label{Fig_two_layer_detectability}
		\vspace*{-4mm}
	\end{figure*}

\subsection{Clustering reliability test under the block-wise non-identical noise model}
\label{subsec_nonidentical_test}
In the iterative process of step 4 in MIMOSA, if every interconnection matrix $\hbCijl$ passes the block-wise homogeneity test in Sec. \ref{subsec_block_hom_test}, but some layers fail the clustering reliability test under the block-wise identical noise model in Sec. \ref{subsec_identical_test}, the identified clusters $\{ \cC_k^\bw\}_{k=1}^K$ are then used to test the clustering reliability under the block-wise non-identical noise model in Sec. \ref{subsec_ML_signal_noise} based on Theorem \ref{thm_principal_angle_ML}. Given a layer weight vector $\bw$, the noise level estimates $\{ \htijl \}$,
and the estimate $\htLBw$ of the phase transition lower bound in (\ref{eqn_estimate_htLBw}), we  compare the maximum noise level $\htmaxl=\max_{1\leq i,j \leq K, i\neq j} \htijl$ with $\htLBw$ for each layer $\ell$. 
	In the supplementary file we show that if 
	the estimated maximum noise level $\htmaxl$ of each layer $\ell$ satisfies a certain condition (condition (\ref{eqn_spectral_multi_confidence_interval_ingomogeneous_RIM_ML}) in the supplementary file), then if the aggregated maximum noise level $\htwmax=\sum_{\ell=1}^{L} w_\ell \htmaxl < \htLBw$, by Theorem \ref{thm_principal_angle_ML} the identified clusters $\{\cC_k\}_{k=1}^K$ are deemed reliable with high probability.

\subsection{A signal-to-noise ratio criterion for final clustering results}

In step 4 of MIMOSA, given the number of clusters $K$, if MIMOSA finds any feasible layer weight vector that passes the clustering reliability tests in Sec. \ref{subsec_identical_test} or Sec. \ref{subsec_nonidentical_test}, it then stores the vector in the set $\cW_{\textnormal{feasible}}$, and stops increasing $K$. This means that MIMOSA has identified a set of reliable clustering results 
of the same number of clusters $K$ based on the clustering reliability tests. To select the best clustering result from the feasible set, in step 5 we use the phase transition results established in Sec. \ref{sec_MIMOSA_THM} to define a signal-to-noise ratio (SNR) for each clustering result, which is 
\begin{align}
\label{eqn_SNR_MIMOSA}
\textnormal{SNR}^\bw=\frac{\htLBw}{\htw}.
\end{align}
$\htLBw$ can be viewed as the aggregated signal strength of within-cluster edges, and $\htw$ is the 
the aggregated noise level across layers. Therefore, the final clustering result is the clusters $\{\cC_k^{\bw^*}\}_{k=1}^K$, where $\bw^*=\arg \max_{\bw \in \cW_{\textnormal{feasible}}} \textnormal{SNR}^\bw$ is the layer weight vector having the largest SNR in the set $\cW_{\textnormal{feasible}}$.

\subsection{Computational complexity analysis}
The overall computational complexity of MIMOSA is  $O(|\cT|K^3(\widetilde{m}+n))$, where $K$ is the number of output clusters, $n$ is the number of nodes, and $\widetilde{m}=\sum_{\ell=1}^{L}|\cE_\ell|$ is the sum of total number of edges in each layer. The analysis is as follows.

Fixing model order $K$ and regularization parameter $\tau \in \cT$ in the MIMOSA iteration, as displayed in Fig.
\ref{Fig_MIMOSA_flow}, there are three main contributions to the computational complexity of MIMOSA: (i) Incremental eigenpair computation - acquiring an additional smallest eigenvector for augmenting $\bY$ of $\bLw$ takes $O(\widetilde{m}+n)$ operations via power iteration \cite{CPY_16KDDMLG,wu2016primme_svds}, since the maximum number of nonzero entries in  $\bLw$ is $\widetilde{m}+n$. 
(ii) Parameter estimation - estimating the RIM parameters $\{\pijl \}$ and $\{\Wbarijl\}$ takes $O(\widetilde{m})$ operations since they only depend on the number of edges and edge weights in each layer. Estimating  $\tLB$ takes $O(K(\widetilde{m}+n) \cdot K)=O(K^2(\widetilde{m}+n))$ operations for computing the numerator in (\ref{eqn_estimate_htLBw}). (iii) K-means clustering -  $O(nK^2)$ operations \cite{Zaki.Jr:14} for clustering $n$ data points of  dimension $K-1$ into $K$ groups.
Unfixing $\tau$, iterating this process over the elements in $\cT$ takes $O(|\cT|K^2(\widetilde{m}+n))$ operations. 
Finally, if MIMOSA outputs $K$ clusters, then the overall computational complexity is $O(|\cT|K^3(\widetilde{m}+n))$.

\section{Numerical Experiments}
\label{sec_num_ML}
To validate the phase transition results in the accuracy of multilayer SGC via convex layer aggregation established in Sec. \ref{sec_MIMOSA_THM}, we generate synthetic multilayer graphs from a two-layer correlated multilayer graph model. Specifically, we generate edge connections within and between $K=3$ equally-sized ground-truth clusters on $L=2$ layers $G_1$ and $G_2$. The two layers $G_1$ and $G_2$ are correlated since their edge connections are generated in the following manner.
For every node pair ($u,v$) of the same cluster, with probability $q_{11}$ there is a within-cluster edge ($u,v$) in $G_1$ and $G_2$, with probability  $q_{10}$ there is a within-cluster edge  ($u,v$) in $G_1$ but not in $G_2$, with probability  $q_{01}$ there is a within-cluster edge  ($u,v$) in $G_2$ but not in $G_1$, and with probability  $q_{00}$ there is no edge  ($u,v$) in $G_1$ and $G_2$.
These four parameters $\{q_{xy}\}_{x,y\in\{0,1\}}$are nonnegative and sum to $1$. For between-cluster edges, we adopt the block-wise identical noise model in Sec. \ref{subsec_ML_signal_noise} such that for each layer $\ell$, the edge connection between every node pair from different clusters is an i.i.d. Bernoulli random variable with parameter $\pl$.

\begin{figure*}[t]
	\centering
	\begin{subfigure}[b]{0.24\linewidth}
		\includegraphics[width=\textwidth]{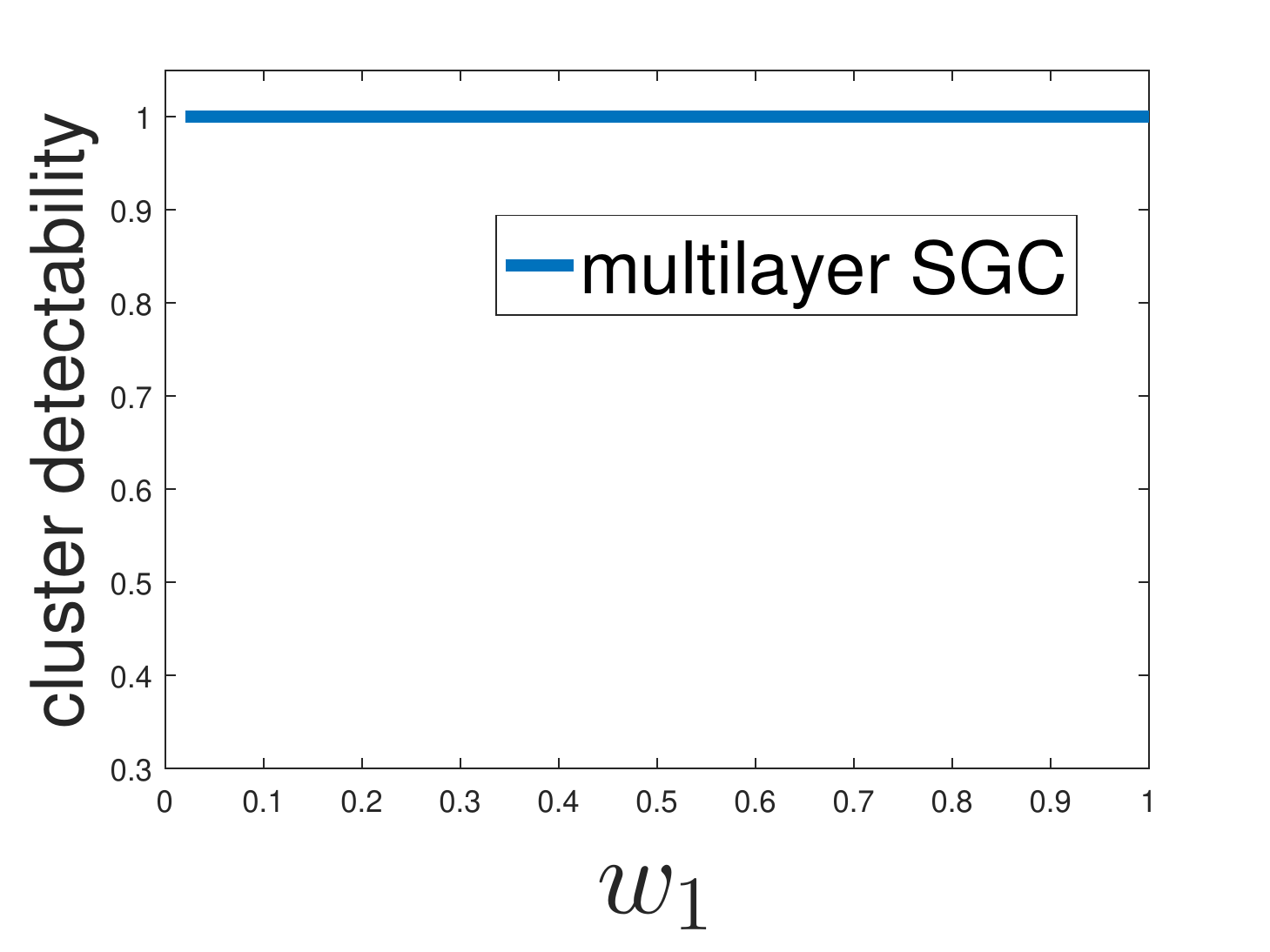}
		\caption{$(p^{(1)},p^{(2)})=(0.2,0.2)$}
	\end{subfigure}%
	\hspace{0.01cm}
	\centering
	\begin{subfigure}[b]{0.24\linewidth}
		\includegraphics[width=\textwidth]{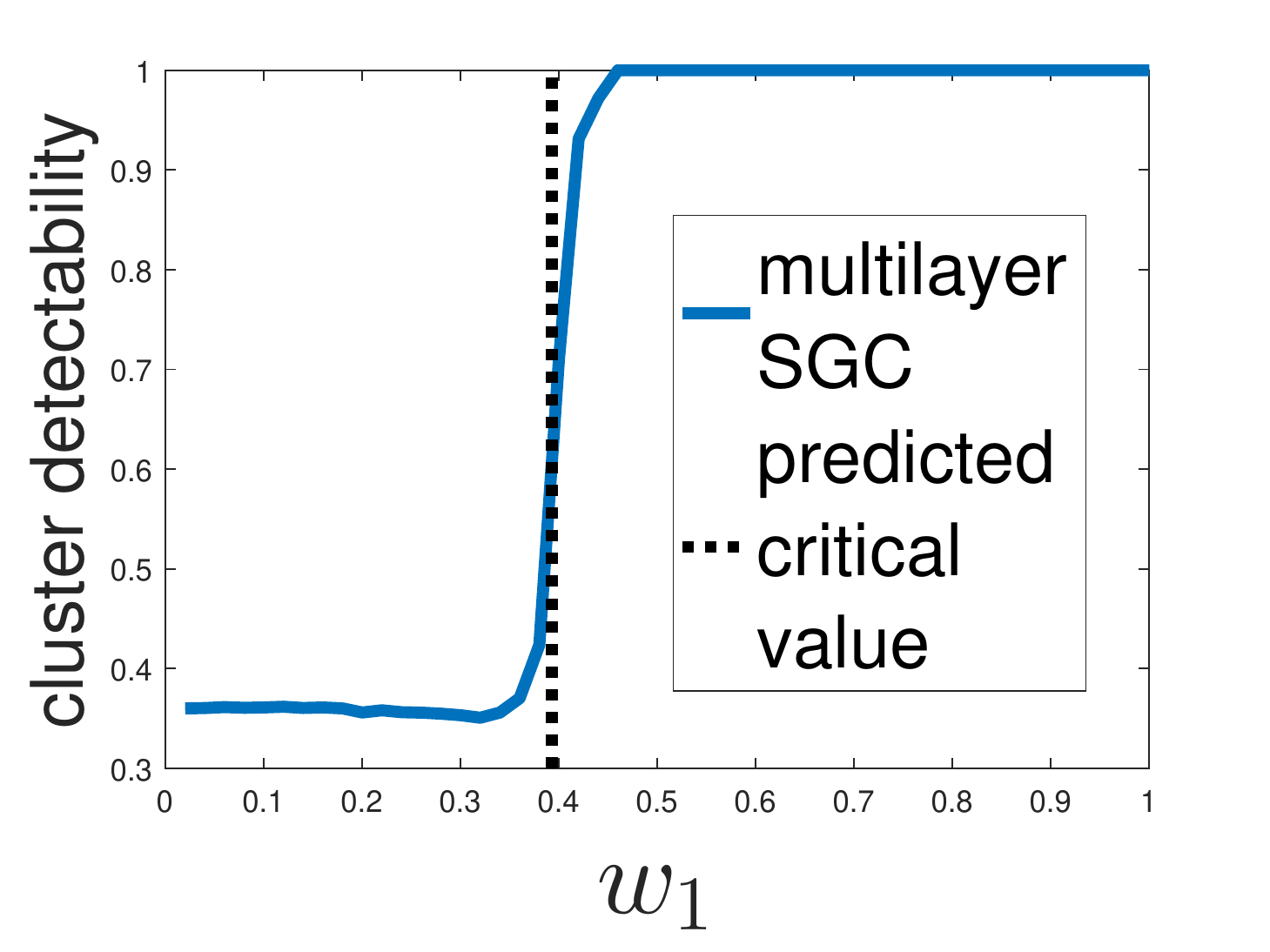}
		\caption{$(p^{(1)},p^{(2)})=(0.2,0.5)$}
	\end{subfigure}
	\hspace{0.01cm}
	\centering
	\begin{subfigure}[b]{0.24\linewidth}
		\includegraphics[width=\textwidth]{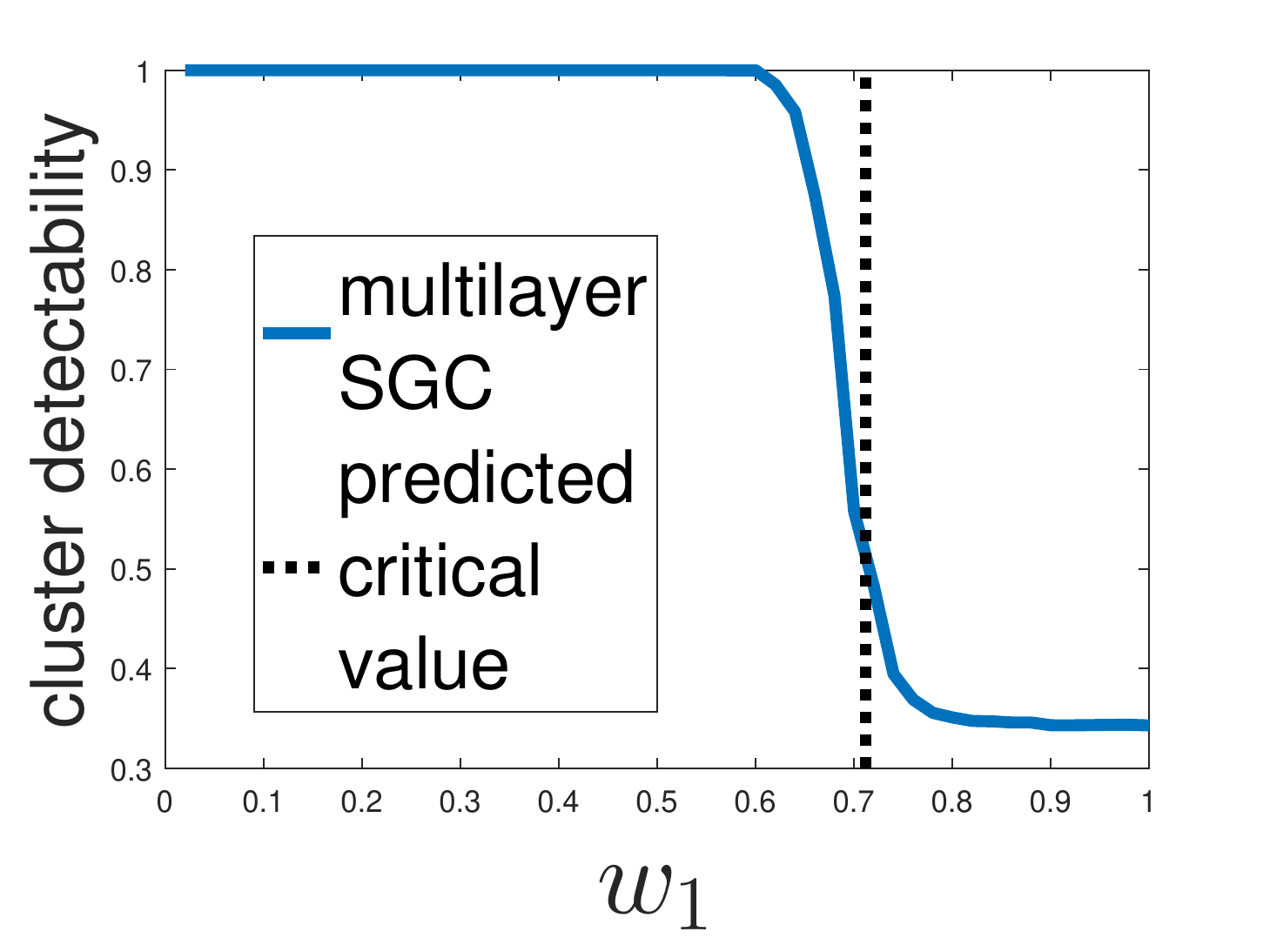}
		\caption{$(p^{(1)},p^{(2)})=(0.5,0.2)$}
	\end{subfigure}
	\hspace{0.01cm}
	\centering
	\begin{subfigure}[b]{0.24\linewidth}
		\includegraphics[width=\textwidth]{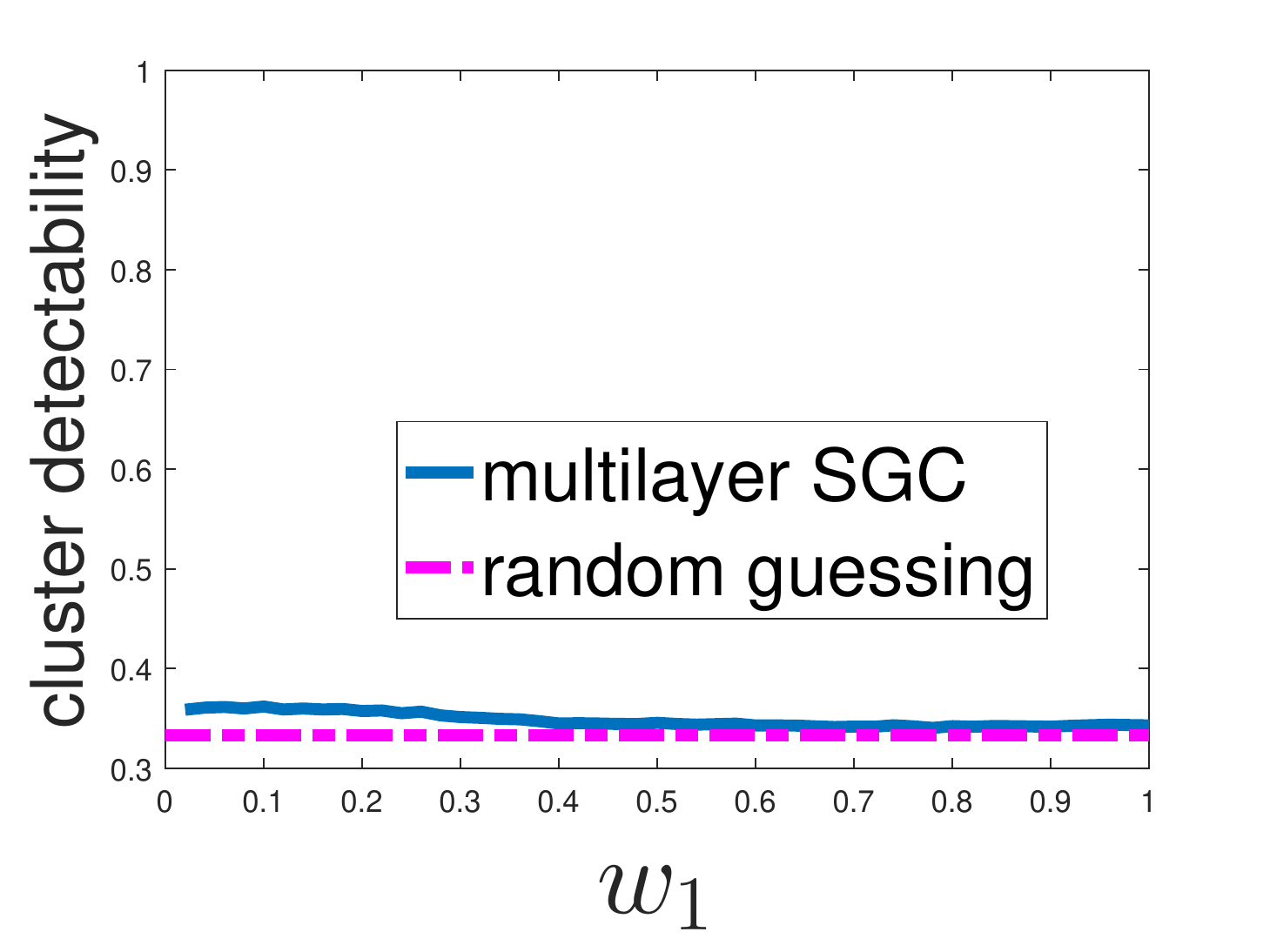}
		\caption{$(p^{(1)},p^{(2)})=(0.5,0.5)$}
	\end{subfigure}						
	\caption{The effect of the layer weight vector $\bw=[w_1~w_2]^T$ on the accuracy of multilayer SGC with respect to different noise levels $\{ \pl\}_{\ell=1}^2$ for the two-layer correlated graph model. $n_1=n_2=n_3=1000$, $q_{11}=0.3$, $q_{10}=0.2$, $q_{01}=0.1$, and $q_{00}=0.4$. The results are averaged over 50 runs. Fig. \ref{Fig_two_layer_weight} (a) shows that in the case of low noise level for each layer, any layer weight vector $\bw \in \cW_2$ can lead to correct clustering result.
		Fig. \ref{Fig_two_layer_weight} (b) and (c) show that 	
		if one layer has high noise level, then there may exist a critical value $w_1^* \in [0,1]$ that separates the cluster detectability into a reliable regime and an unreliable regime. Furthermore, the critical value $w_1^*$ is shown to satisfy the equation in (\ref{eqn_weight_equivalent}) derived from Theorem \ref{thm_spec_ML}. 
		Fig. \ref{Fig_two_layer_weight} (d) shows that in the case of high noise level for each layer, no layer weight vector can lead to correct clustering result, and the cluster detectability is similar to random guessing of clustering accuracy 33.33\%.
	} 
	\label{Fig_two_layer_weight} 
	\vspace*{-3mm}
\end{figure*}

\subsection{Phase transitions in multilayer SGC via convex layer aggregation}
By varying the noise level $\{\pl\}_{\ell=1}^2$, Fig. \ref{Fig_two_layer_detectability} shows the accuracy of multilayer SGC with respect to different layer weight vector $\bw=[w_1~w_2]^T$ and the averaged result over $\bw$, where the accuracy is evaluated in terms of cluster detectability.
Let $\{\cC_k\}_{k=1}^K$ and $\{\cC^\prime_{k}\}_{k=1}^K$ denote the detected and ground-truth clusters, respectively, and let $|\cC_k \cap \cC^\prime_{k}  |$ denote the number of common nodes in $\cC_k$ and $\cC^\prime_{k}$. Cluster detectability is defined as $\max_{\{\cC_{\widetilde{k}}\} \in \textnormal{Perm}(\{\cC_{{k}}\})} \frac{1}{n} \sum_{k=1}^K |\cC_{\widetilde{k}} \cap \cC^\prime_{k}  |$, where $\textnormal{Perm}(\{\cC_{{k}}\})$ is the set of all possible cluster label permutations of the detected clusters. In other words, cluster detectability requires consistency between the detected and ground-truth clusters.
 Given a fixed $\bw$, as proved in Theorem \ref{thm_spec_ML}, Fig. \ref{Fig_two_layer_detectability} (a)-(c) show that there is indeed a phase transition in cluster detectability that separates the noise level $\{\pl\}_{\ell=1}^2$ into two regimes: a reliable regime where high clustering accuracy is guaranteed, and an unreliable regime where high clustering accuracy is impossible. Furthermore, the critical value of $\{\pl\}_{\ell=1}^2$ that separates these two regimes are successfully predicted by Theorem \ref{thm_spec_ML} (c), which validates the phase transition analysis. Fig. \ref{Fig_two_layer_detectability} (d) shows the geometric mean of cluster detectability from different layer weight vectors. There is a universal region of perfect cluster detectability that includes the region specified by the universal phase transition lower bound in (\ref{eqn_LB_tLB}).

\begin{table}[]
	\centering
	\caption{Summary of real-world multilayer graph datasets.} 
	\label{table_multilayer}
	\begin{tabular}{c|c|cl}
		\cline{1-3}
		Dataset                                                                         & \# of layers & \begin{tabular}[c]{@{}c@{}}ground-truth cluster\\  labels and cluster sizes\end{tabular}               &  \\ \cline{1-3}
		VC 7th grader                                                                   & 3            & \begin{tabular}[c]{@{}c@{}}boys (12)\\ girls (17)\end{tabular}                                &  \\ \cline{1-3}
		\begin{tabular}[c]{@{}c@{}}Leskovec-Ng \\ collaboration \\ network\end{tabular} & 4            & \begin{tabular}[c]{@{}c@{}}Leskovec's collaborator \\(87)\\ Ng's collaborator (104)\end{tabular} &  \\ \cline{1-3}
		\begin{tabular}[c]{@{}c@{}}109th Congress \\ votes - Budget\end{tabular}        & 4            & \begin{tabular}[c]{@{}c@{}}Democratic (45)\\ Republican (55) \end{tabular}                     &  \\ \cline{1-3}
		\begin{tabular}[c]{@{}c@{}}109th Congress \\ votes - Energy\end{tabular}        & 2            & \begin{tabular}[c]{@{}c@{}}Democratic (45)\\ Republican (55)\end{tabular}                     &  \\ \cline{1-3}
		\begin{tabular}[c]{@{}c@{}}109th Congress \\ votes- Security\end{tabular}       & 2            & \begin{tabular}[c]{@{}c@{}}Democratic (45)\\ Republican (55)\end{tabular}                     &  \\ \cline{1-3}
		Reality mining                                                                  & 2            & None                                                                                &  \\ \cline{1-3}
		\begin{tabular}[c]{@{}c@{}}London \\ transportation \\ network\end{tabular}     & 2            & None                                                                                &  \\ \cline{1-3}
		\begin{tabular}[c]{@{}c@{}}Human H1V1 \\ genetic interaction\end{tabular}       & 5            & None                                                                                &  \\ \cline{1-3}
		\begin{tabular}[c]{@{}c@{}}Pierre Auger \\ coauthorship\end{tabular}            & 16           & None                                                                                &  \\ \cline{1-3}
	\end{tabular}
	\vspace{-4mm}
\end{table}

\subsection{The effect of layer weight vector on multilayer SGC  via convex layer aggregation}
Next we investigate the effect of layer weight vector $\bw$ on multilayer SGC via convex layer aggregation given fixed noise levels  $\{ \pl\}_{\ell=1}^2$.
In the two-layer graph setting, since by definition $w_2=1-w_1$, it suffices to study the effect of $w_1$ on clustering accuracy.
Fig. \ref{Fig_two_layer_weight} shows the clustering accuracy by varying $w_1$ under the two-layer correlated graph model. As shown in Fig. \ref{Fig_two_layer_weight} (a), if each layer has low noise level, then any layer weight vector $\bw \in \cW_2$ can lead to correct clustering result. If one layer has high noise level, Fig. \ref{Fig_two_layer_weight} (b) and (c) show that there exists a critical value $w_1^\star \in [0,1]$ that separates the cluster detectability into a reliable regime and an unreliable regime. In particular, Theorem \ref{thm_spec_ML} implies that the critical value $w_1^\star$, if existed, satisfies the condition $t^{\bw}= t^{\bw^*}$ when $\bw=[w_1^\star,1-w_1^\star]^T=\bw^*$, which is equivalent to
\begin{align}
\label{eqn_weight_equivalent}
&\frac{K-1}{K} \Lb w_1^\star p^{(1)} + (1-w_1^\star) p^{(2)}  \Rb= w_1^\star \cdot \min_{k \in \{1,2,\ldots,K\}} \SK \lb \frac{\bL^{(1)}_k}{n} \rb  \nonumber \\
&~~~+ (1-w_1^\star) \cdot \min_{k \in \{1,2,\ldots,K\}} \SK \lb \frac{\bL^{(2)}_k}{n} \rb. 
\end{align}
It is observed in Fig. \ref{Fig_two_layer_weight} (b) and (c) that the empirical critical value $w_1^\star$ matches the predicted value from   (\ref{eqn_weight_equivalent}). Lastly,  as shown in Fig. \ref{Fig_two_layer_weight} (d), if each layer has high noise level, then no layer weight vector can lead to correct clustering result, and the corresponding cluster detectability is similar to random guessing of clustering accuracy $\frac{1}{K} \approx$ 33.33\%.

\section{MIMOSA on Real-World Multi-Layer Graphs}
\label{sec_MIMOSA_data}

\subsection{Dataset descriptions}
\label{subsec_MIMOSA_dataset}

In this section, we apply MIMOSA to 9 real-world multilayer graphs and compute the external and internal clustering metrics for quality assessment. The statistics of the 9 real-world multilayer graphs are summarized in Table \ref{table_multilayer}, and the details are described as follows.
\begin{itemize}
	\item \textbf{VC 7th grader social network} \cite{vickers1981representing}: This dataset is based on a survey of social relations among 29 7th grade students in Victoria, Australia, including  12 boys and 17 girls. A 3-layer graph is created based on different relationships, including ``friends you get on with'', ``your best friends'', and ``friends you prefer to work with'' in the class. For each layer we only retain the edges where there is  mutual agreement between every student pair.
	
	\item \textbf{Leskovec-Ng collaboration network\footnote{The dataset can be downloaded from https://sites.google.com/site/pinyuchenpage/datasets}}: We collected the coauthors of Prof. Jure Leskovec or Prof. Andrew Ng at Stanford University
	from ArnetMiner \cite{tang2008arnetminer} from year 1995 to year 2014. In total, there are 191 researchers in this dataset. We partition coauthorship over a 20-year period into 4 different 5-year intervals and hence create a 4-layer multilayer graph. For each layer, there is an edge between two researchers if they coauthored at least one paper 
	in the 5-year interval. For every edge in each layer, we adopt the temporal collaboration strength as the edge weight \cite{Zhang.Saha.ea:14,Saha.Zhang.ea:15}. 
	Notably, while Prof.  Leskovec and Prof. Ng both were members of  the same department, there is no record of coauthorship between them on ArnetMiner. However, they are connected through a common co-author, Christopher Potts. As a result the full
	collaboration network among 191 researchers is a connected graph. 
	We manually label each researcher by either ``Leskovec's collaborator'' or ``Ng's collaborator'' based on the collaboration frequency, and use the labels as the ground-truth cluster assignment. The ground-truth clusters with researcher names are displayed in Fig. \ref{Fig_JureNg}.
	 
	 \item  \textbf{109th Congress votes}: We collected the votes of 100 senators of the 109th U.S. Congress to create 3 multilayer graph datasets based on the topic area of each	 
	 bill on which they voted, including ``Budget'', ``Energy'', and ``Security''. Only bills on which every senator has voting records are considered in these datasets. For each bill topic (a multilayer graph) we create a layer for each bill. In each layer, there is an edge between two senators if they vote  the same way. We use the party (Democratic or Republican) as the ground-truth cluster label. In addition, we label the one independent senator as Democratic since he  caucused with the Democrats.
	 
	 \item \textbf{Reality mining} \cite{pentland2009inferring}: The reality mining dataset contains mobile and social traces among 94 MIT students. We extract the largest connected component of students from this dataset to form a 2-layer graph, where one layer represents user connection via text messaging, and the other layer represents user connection via proximity (Bluetooth). For each layer we only retain  edges
	 for which there is mutual contact between student pairs. 
	 
	 \item \textbf{London transportation network} \cite{de2014navigability}: The London transportation network dataset contains different transportation routes through Tube stations in London.
	 We extract the largest connected component of stations that are either connected by 
	 Overground transportation or by Docklands Light Railway (DLR) to form a 2-layer graph, where one layer represents 	overground connectivity, and the other layer represents DLR connectivity.	
	 
	 \item  \textbf{Human H1V1 genetic interaction \cite{de2014muxviz}}: The human H1V1 genetic interaction dataset contains different types of genetic interactions among 1005 proteins.  We extract the largest connected genetic interaction network from this dataset to form a 5-layer graph, where each interaction type corresponds to one layer and for each layer we only retain the edge of mutual interaction.
	 
	 \item \textbf{Pierre Auger coauthorship \cite{PhysRevX.5.011027}}: The Pierre Auger coauthorship dataset contains the coauthorship among 514 researchers between 2010 and 2012 associated with the Pierre Auger Observatory, which involves 16 working research tasks (layers) related to studies of ultra-high energy cosmic rays. We extract the largest connected component from this network to form a 16-layer graph.
	  
\end{itemize}

\begin{figure}[!t]
	\centering
	\includegraphics[width=3.6in]{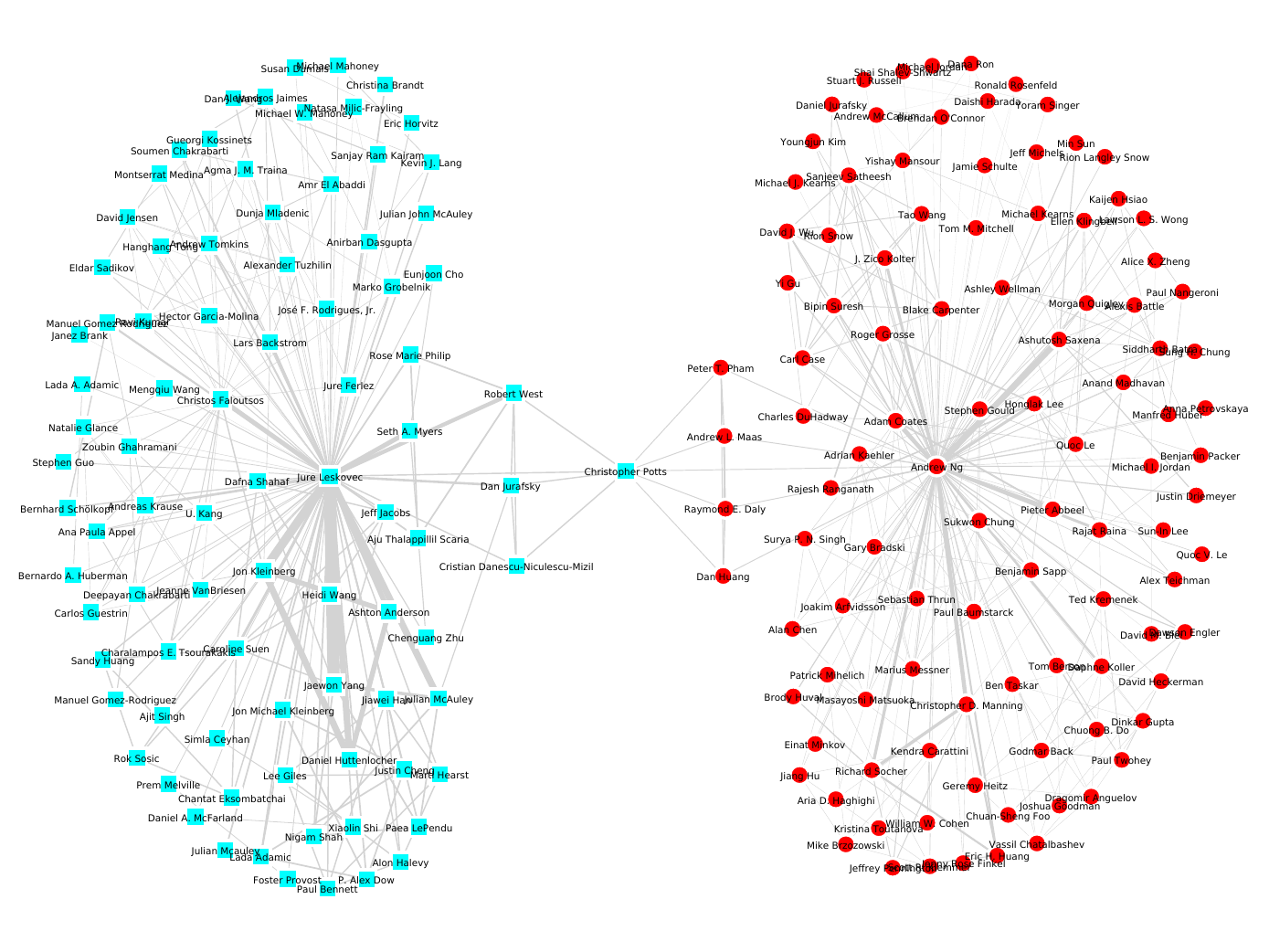}
	\caption{Ground-truth clusters of the collected Leskovec-Ng collaboration network. Nodes represent researchers, edges represent the strength of coauthorship  \cite{Zhang.Saha.ea:14,Saha.Zhang.ea:15}, and colors and shapes represent two clusters - ``Leskovec's collaborator'' (cyan square) or ``Ng's collaborator'' (red circle).}	
	\label{Fig_JureNg}
	\vspace{-4mm}
\end{figure}

Since MIMOSA allows the input multilayer graph to be weighted, for each layer $G_\ell$, if $G_\ell$ is unweighted, we adopt the degree normalization \cite{Luxburg07} such that the ($u,v$)-th entry in the weight matrix $\bWl$ is $[\bWl]_{uv}=\frac{[\bAl]_{uv}}{\sqrt{d^{(\ell)}_u \cdot d^{(\ell)}_v}}$ if $d^{(\ell)}_u,d^{(\ell)}_v>0$, and $[\bWl]_{uv}=0$ otherwise, where $\bAl$ is the adjacency matrix of $G_\ell$ and $d^{(\ell)}_u$ is the degree of node $u$ in $G_\ell$.

\subsection{Performance evaluation}
Using the multilayer graph datasets described in Table \ref{table_multilayer}, we compare the clustering performance of MIMOSA with four other methods.
The first method is the baseline approach that assigns uniform weight to each layer in the convex layer aggregation (i.e., $w_\ell = \frac{1}{L}$~$\forall~\ell$). Since this baseline approach is equivalent to MIMOSA with the setting $\bwini=\frac{\bone_L}{L}$ and $\cT=\{0\}$, we call this method \textit{MIMOSA-uniform}. The second method is a greedy multilayer modularity maximization approach that extends the Louvain method for clustering in single-layer graphs to multilayer graphs, which is called \textit{GenLouvain}\footnote{http://netwiki.amath.unc.edu/GenLouvain/GenLouvain}. GenLouvain aims to merge the nodes to maximize the multilayer modularity defined in \cite{mucha2010community} in a greedy manner. The third method is the multilayer graph clustering algorithm proposed in \cite{dong2014clustering}, called SC-ML. The fourth method is the Self-Tuning algorithm \cite{zelnik2004self} for graph clustering in single-layer graphs, where the single-layer graph is obtained by summing the edge weights across all layers.

For GenLouvain, we set the 
resolution parameter $\gamma \in \{0.5,1,2\}$ and the latent inter-layer coupling parameter $\omega=1$. For MIMOSA, we set $\bwini=\frac{\bone_L}{L}$ to be a uniform vector, $\eta=10^{-5}$, $\alpha_\ell=\alpha^\prime_\ell=0.05$~$\forall~\ell$, and the regularization set $\cT=\{0, 10^{-1}, 10^{0}, 10^{1},10^{2},10^{3},10^{4},10^{5}\}$. The effect of the parameters in MIMOSA on the output clusters are summarized as follows.  If one has some prior knowledge of the noise level in each layer, then adjusting $\bwini$ by assigning more weights to less noisy layers may yield better clustering results. Increasing $\eta$ or decreasing $\{\alpha_\ell\}$ and $\{\alpha^\prime_\ell\}$ tightens the clustering reliability constraint and may increase the number of output clusters. Expanding $\cT$ may yield better clustering results.
 Like MIMOSA, GenLouvain and Self-Tuning are automated clustering algorithms that do not require specifying the number of clusters $K$ \textit{a priori}. SC-ML requires the knowledge of $K$, and for performance comparison we set the value of $K$ in SC-ML to be the number of clusters found by MIMOSA.

We use the following external and internal clustering metrics to evaluate the performance of different methods. External metrics can be computed only when ground-truth cluster labels are known, whereas internal metrics can be computed in the absence of ground-truth cluster labels. In particular,
 since these internal metrics are designed for single-layer graphs, in the evaluation we extend these internal metrics to multilayer graphs by summing the metrics defined at each layer.  The clustering metrics are summarized as follows. Specifically,  we denote the $K$ clusters identified by a graph clustering algorithm by $\{\cC_k\}_{k=1}^K$, and denote the $K^\prime$ ground-truth clusters by $\{\cC^\prime_k\}_{k=1}^{K^\prime}$.

	$\bullet~$\textbf{External clustering metrics} 
	\begin{enumerate}
		\item normalized mutual information (NMI) \cite{strehl2002cluster}: NMI is defined as 
		\begin{align}
		\label{eqn_NMI}
		\textnormal{NMI}(\{\cC_k\}_{k=1}^K,\{\cC^\prime_k\}_{k=1}^{K^\prime})=\frac{2 \cdot I(\{\cC_k\},\{\cC^\prime_k\})}{|H(\{\cC_k\})+H(\{\cC^\prime_k\})|},
		\end{align}
		where $I$ is the mutual information between $\{\cC_k\}_{k=1}^K$ and $\{\cC^\prime_k\}_{k=1}^{K^\prime}$, and $H$ is the entropy of clusters. Larger NMI means better clustering performance.
		
		\item Rand index (RI) \cite{rand1971objective}: RI is defined as 
		\begin{align}
		\label{eqn_RI}
			\textnormal{RI}(\{\cC_k\}_{k=1}^K,\{\cC^\prime_k\}_{k=1}^{K^\prime})=\frac{TP+TN}{TP+TN+FP+FN},
		\end{align}
		where $TP$, $TN$, $FP$ and $FN$ represent true positive, true negative, false positive, and false negative decisions, respectively. Larger RI means better clustering performance.
		
		\item F-measure \cite{Rijsbergen:1979:IR:539927}:  F-measure is the harmonic mean of the precision and recall values for each cluster, which is defined as
		\begin{align}
		\label{eqn_Fmeasure}
			\textnormal{F-measure}(\{\cC_k\}_{k=1}^K,\{\cC^\prime_k\}_{k=1}^{K^\prime})=\frac{1}{K} \sum_{k=1}^K 	\textnormal{F-measure}_k,
		\end{align}
		where $	\textnormal{F-measure}_k= \frac{2 \cdot PREC_k \cdot RECALL_k}{PREC_k + RECALL_k}$, and $PREC_k $ and $RECALL_k$ are the precision and recall values for cluster $\cC_k$. Larger F-measure means better clustering performance.
	\end{enumerate}		

\begin{table*}[]
	\centering
	\caption{Summary of the number of identified clusters ($K$) and the external and internal clustering metrics. ``NA'' means ``not applicable'', and ``-'' means ``not available'' due to lack of ground-truth cluster labels. For each dataset, the method that leads to the highest clustering metric is highlighted in bold face.}
	\label{table_MIMOSA_performance}
\begin{tabular}{c|ccccccc}
	\hline
	Dataset                                                                                        & Method                    & K  & NMI             & RI              & F-measure       & conductance     & NC              \\ \hline
	\multirow{7}{*}{\begin{tabular}[c]{@{}c@{}}VC 7th grader \\ social network\end{tabular}}       & MIMOSA                    & 2  & \textbf{0.8123} & \textbf{0.9310} & \textbf{0.9317} & \textbf{0.2649} & \textbf{0.4330} \\
	& MIMOSA-uniform            & NA & NA              & NA              & NA              & NA              & NA              \\
	& GenLouvain ($\gamma=0.5$) & 3  & 0.6495          & 0.7833          & 0.7333          & 0.4487          & 0.6051          \\
	& GenLouvain ($\gamma=1$)   & 3  & 0.6495          & 0.7833          & 0.7333          & 0.4487          & 0.6051          \\
	& GenLouvain ($\gamma=2$)   & 8  & 0.4418          & 0.5911          & 0.3197          & 1.4295          & 1.6081          \\
	& SC-ML                     & 2  & 0.6119          & 0.8079          & 0.8040          & 0.2756          & 0.4618          \\
	& Self-Tuning               & 6  & 0.5345          & 0.6995          & 0.5764          & 0.4329          & 0.5510          \\ \hline
	\multirow{7}{*}{\begin{tabular}[c]{@{}c@{}}Leskovec-Ng\\ collaboration\\ network\end{tabular}} & MIMOSA                    & 2  & \textbf{1}      & \textbf{1}      & \textbf{1}      & \textbf{0.0213} & \textbf{0.0415} \\
	& MIMOSA-uniform            & NA & NA              & NA              & NA              & NA              & NA              \\
	& GenLouvain ($\gamma=0.5$) & 7  & 0.6824          & 0.8488          & 0.8243          & 0.1989          & 0.2663          \\
	& GenLouvain ($\gamma=1$)   & 16 & 0.4972          & 0.7156          & 0.6055          & 0.3054          & 0.3702          \\
	& GenLouvain ($\gamma=2$)   & 29 & 0.3553          & 0.5586          & 0.2173          & 0.4874          & 0.5569          \\
	& SC-ML                     & 2  & \textbf{1}      & \textbf{1}      & \textbf{1}      & \textbf{0.0213} & \textbf{0.0415} \\
	& Self-Tuning               & 2  & \textbf{1}      & \textbf{1}      & \textbf{1}      & \textbf{0.0213} & \textbf{0.0415} \\ \hline
	\multirow{7}{*}{\begin{tabular}[c]{@{}c@{}}109th Congress\\ votes - Budget\end{tabular}}       & MIMOSA                    & 2  & 0.7959          & 0.9224          & 0.9220          & 0.2713          & 0.4975          \\
	& MIMOSA-uniform            & 2  & \textbf{0.8778} & \textbf{0.9604} & \textbf{0.9603} & 0.2702          & 0.5055          \\
	& GenLouvain ($\gamma=0.5$) & 2  & 0.7959          & 0.9224          & 0.9220          & 0.2713          & 0.4978          \\
	& GenLouvain ($\gamma=1$)   & 2  & 0.7959          & 0.9224          & 0.9220          & 0.2713          & 0.4978          \\
	& GenLouvain ($\gamma=2$)   & 55 & 0.3822          & 0.6915          & 0.5539          & \textbf{0.1500} & \textbf{0.1959} \\
	& SC-ML                     & 2  & 0.7610          & 0.9040          & 0.9036          & 0.2742          & 0.5089          \\
	& Self-Tuning               & 3  & 0.8488          & 0.9164          & 0.9087          & 1.5046          & 1.8011          \\ \hline
	\multirow{7}{*}{\begin{tabular}[c]{@{}c@{}}109th Congress\\ votes - Energy\end{tabular}}       & MIMOSA                    & 2  & \textbf{0.7290} & \textbf{0.8861} & \textbf{0.8855} & \textbf{0.1151} & \textbf{0.2086} \\
	& MIMOSA-uniform            & 2  & 0.6716          & 0.8513          & 0.8508          & 0.1154          & 0.2178          \\
	& GenLouvain ($\gamma=0.5$) & 2  & 0.5403          & 0.8182          & 0.8173          & \textbf{0.1151} & \textbf{0.2086} \\
	& GenLouvain ($\gamma=1$)   & 2  & 0.5403          & 0.8182          & 0.8173          & \textbf{0.1151} & \textbf{0.2086} \\
	& GenLouvain ($\gamma=2$)   & 7  & 0.6371          & 0.8521          & 0.8422          & 0.3145          & 0.3593          \\
	& SC-ML                     & 2  & 0.6716          & 0.8513          & 0.8508          & 0.1154          & 0.2178          \\
	& Self-Tuning               & 4  & 0.6310          & 0.8521          & 0.8424          & 1.0204          & 1.0970          \\ \hline
	\multirow{7}{*}{\begin{tabular}[c]{@{}c@{}}109th Congress\\ votes - Security\end{tabular}}     & MIMOSA                    & 2  & 0.6105          & 0.8513          & 0.8506          & 0.0400            & 0.0785          \\
	& MIMOSA-uniform            & 2  & 0.6304          & 0.8513          & 0.8506          & 0.0400            & 0.0785          \\
	& GenLouvain ($\gamma=0.5$) & 2  & 0.5816          & 0.8345          & 0.8337          & 0.0400            & 0.0770        \\
	& GenLouvain ($\gamma=1$)   & 2  & \textbf{0.6598} & \textbf{0.8685} & \textbf{0.8678} & 0.0400            & 0.0770           \\
	& GenLouvain ($\gamma=2$)   & 4  & 0.6181          & 0.8515          & 0.8477          & \textbf{0.0204} & \textbf{0.0492} \\
	& SC-ML                     & 2  & 0.6304          & 0.8513          & 0.8506          & 0.0400            & 0.0785          \\
	& Self-Tuning               & 2  & 0.6304          & 0.8513          & 0.8506          & 0.0400            & 0.0785          \\ \hline
	\multirow{7}{*}{Reality mining}                                                                & MIMOSA                    & 2  & -               & -               & -               & \textbf{0.0819} & \textbf{0.1573} \\
	& MIMOSA-uniform            & 2  & -               & -               & -               & \textbf{0.0819} & \textbf{0.1573} \\
	& GenLouvain ($\gamma=0.5$) & 3  & -               & -               & -               & 0.2239          & 0.3165          \\
	& GenLouvain ($\gamma=1$)   & 3  & -               & -               & -               & 0.2239          & 0.3165          \\
	& GenLouvain ($\gamma=2$)   & 6  & -               & -               & -               & 0.1240          & 0.2011          \\
	& SC-ML                     & 2  & -               & -               & -               & \textbf{0.0819} & \textbf{0.1573} \\
	& Self-Tuning               & 4  & -               & -               & -               & 0.4267          & 0.5247          \\ \hline
	\multirow{7}{*}{\begin{tabular}[c]{@{}c@{}}London \\ transportation\\ network\end{tabular}}    & MIMOSA                    & 5  & -               & -               & -               & 0.0553          & 0.0801          \\
	& MIMOSA-uniform            & 5  & -               & -               & -               & 0.0553          & 0.0801          \\
	& GenLouvain ($\gamma=0.5$) & 9  & -               & -               & -               & 0.1046          & 0.1286          \\
	& GenLouvain ($\gamma=1$)   & 14 & -               & -               & -               & 0.1558          & 0.1763          \\
	& GenLouvain ($\gamma=2$)   & 21 & -               & -               & -               & 0.2001          & 0.2181          \\
	& SC-ML                     & 5  & -               & -               & -               & 0.1044          & 0.1425          \\
	& Self-Tuning               & 26 & -               & -               & -               & \textbf{0.0154} & \textbf{0.0798} \\ \hline
\end{tabular}
	\vspace*{-4mm}
\end{table*}

\begin{table*}[]
	\centering
	\begin{tabular}{c|ccccccc}
		\hline
		Dataset                                                                                        & Method                    & K  & NMI             & RI              & F-measure       & conductance     & NC              \\ \hline
\multirow{7}{*}{\begin{tabular}[c]{@{}c@{}}Human H1V1\\ genetic interaction\end{tabular}}      & MIMOSA                    & 2  & -               & -               & -               & \textbf{0.0346} & \textbf{0.0666} \\
& MIMOSA-uniform            & 2  & -               & -               & -               & \textbf{0.0346} & \textbf{0.0666} \\
& GenLouvain ($\gamma=0.5$) & 4  & -               & -               & -               & 0.1822          & 0.2292          \\
& GenLouvain ($\gamma=1$)   & 4  & -               & -               & -               & 0.1822          & 0.2292          \\
& GenLouvain ($\gamma=2$)   & 5  & -               & -               & -               & 0.1458          & 0.3167          \\
& SC-ML                     & 2  & -               & -               & -               & 0.1161          & 0.2027          \\
& Self-Tuning               & 7  & -               & -               & -               & 0.5627          & 0.8722          \\ \hline
\multirow{7}{*}{\begin{tabular}[c]{@{}c@{}}Pierre Auger \\ coauthorship\end{tabular}}          & MIMOSA                    & 2  & -               & -               & -               & \textbf{0.0113} & \textbf{0.1888} \\
& MIMOSA-uniform            & NA & -               & -               & -               & NA              & NA              \\
& GenLouvain ($\gamma=0.5$) & 9  & -               & -               & -               & 1.5207          & 1.8423          \\
& GenLouvain ($\gamma=1$)   & 13 & -               & -               & -               & 1.2655          & 1.4699          \\
& GenLouvain ($\gamma=2$)   & 61 & -               & -               & -               & 0.5717          & 0.6356          \\
& SC-ML                     & 2  & -               & -               & -               & 1.2939          & 2.5181          \\
& Self-Tuning               & 63 & -               & -               & -    & 0.8400          & 0.9321         \\ \hline
	\end{tabular}
	\vspace*{-4mm}
\end{table*}

 $\bullet~$\textbf{Internal clustering metrics}	
	\begin{enumerate}
		\item conductance \cite{Shi00}: conductance  is defined as 
		\begin{align}
		\label{eqn_conductance}
		\textnormal{conductance}(\{\cC_k\}_{k=1}^K)=\frac{1}{K} \sum_{k=1}^K	\textnormal{conductance}_k,
		\end{align}
		where $\textnormal{conductance}_k=\frac{W^{out}_k}{2 \cdot W^{in}_k+W^{out}_k}$, and 
		$W^{in}_k$ and $W^{out}_k$ are the sum of within-cluster and between-cluster edge weights of cluster $\cC_k$, respectively. Lower conductance means better clustering performance.
		
		\item  normalized cut (NC) \cite{Shi00}:  NC is defined as 
		\begin{align}
		\label{eqn_NC}
		\textnormal{NC}(\{\cC_k\}_{k=1}^K)=\frac{1}{K} \sum_{k=1}^K	\textnormal{NC}_k,
		\end{align}		
		where 	$\textnormal{NC}_k=\frac{W^{out}_k}{2 \cdot W^{in}_k+W^{out}_k}+\frac{W^{out}_k}{2 \cdot (W^{all}_k-W^{in}_k)+W^{out}_k}$, and 	$W^{in}_k$, $W^{out}_k$ and $W^{all}_k$ are the sum of within-cluster, between-cluster and total edge weights of cluster $\cC_k$, respectively. Lower NC means better clustering performance.
	\end{enumerate}

Table \ref{table_MIMOSA_performance} summarizes the external and internal clustering metrics obtained after multilayer graph clustering by  the four methods for the datasets listed in Table \ref{table_multilayer}. For MIMOSA and MIMOSA-uniform, we terminate the iterative process and report the clustering result as  ``not applicable'' (NA) when the number of clusters $K$ exceeds $\frac{n}{2}$, where $n$ is the number of nodes. As a result, NA means that before termination no clustering results have passed the clustering reliability tests.

It is observed from Table \ref{table_MIMOSA_performance} that MIMOSA has the best clustering performance among 6 out of 9 datasets. For the Congress-votes-Budget and Congress-votes-Security datasets, MIMOSA performs somewhat worse than MIMOSA-uniform. 
For the VC 7th grader social network, Leskovec-Ng collaboration network and Pierre Auger coauthorship datasets, MIMOSA-uniform fails to find a reliable clustering result, whereas  MIMOSA has superior clustering metrics.
The robustness of MIMOSA implies the utility of layer weight adaptation, and it also suggests that assigning uniform weight to every layer regardless of the noise level may lead to unreliable clustering results. Comparing MIMOSA to SC-ML with the same number of clusters, the clusters found by MIMOSA have better clustering metrics. MIMOSA also outperforms Self-Tuning in most of the datasets, suggesting that simply summing a multilayer graph to create a single-layer graph does not necessarily benefit multilayer graph clustering. 
 In addition, we also observe that GenLouvain tends to identify more clusters than the number of ground-truth clusters.
 The fact that MIMOSA-uniform and Self-Tuning outperform MIMOSA in some cases is likely due to the fact that these particular datasets have similar connectivity in each layer. For example, in the Congres-votes-Budget dataset almost every senator voted along party lines on all budget related legislation. 

	\begin{figure*}[t]
		\centering
		\begin{subfigure}[b]{0.24\linewidth}
			\includegraphics[width=\textwidth]{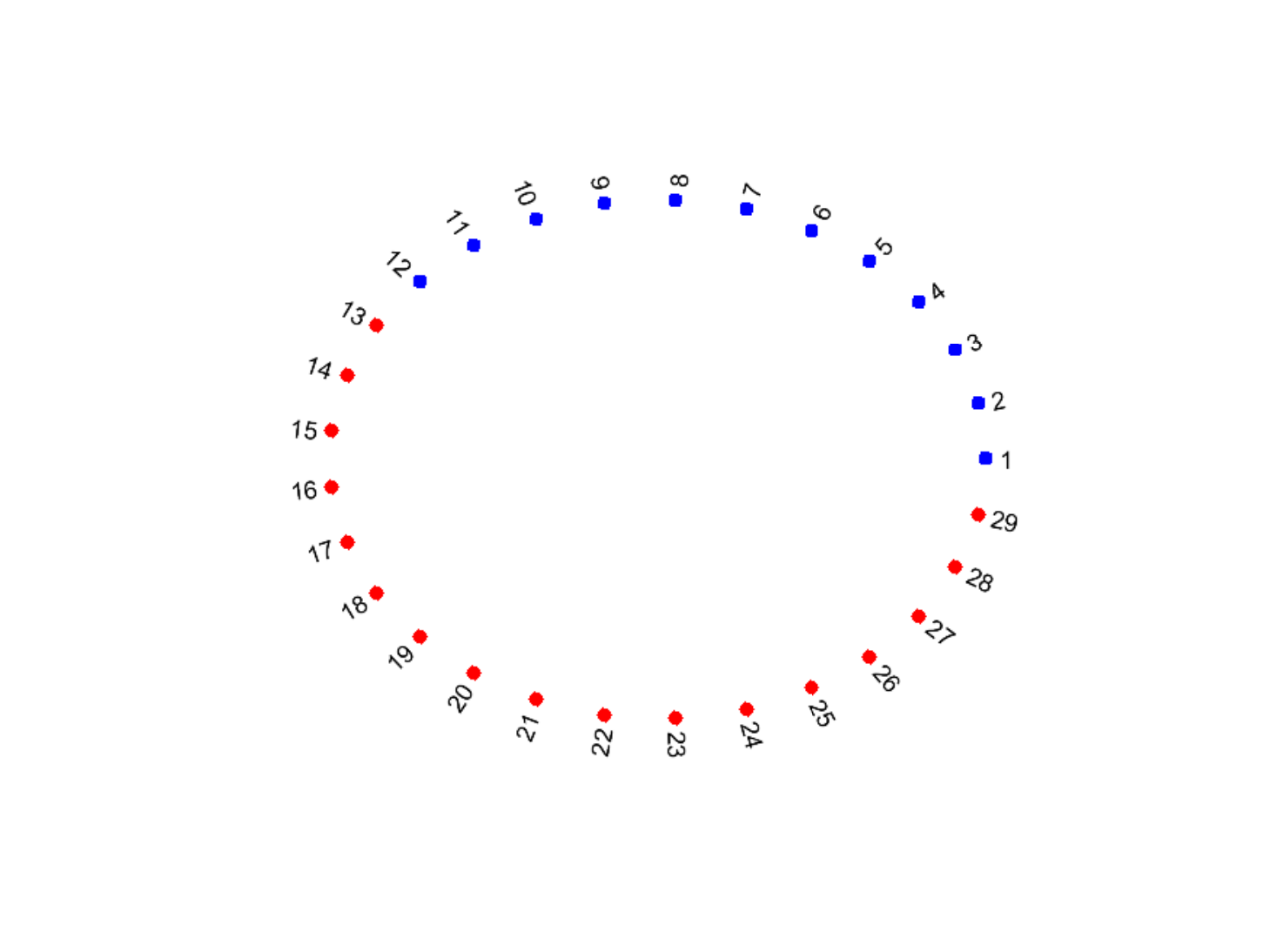}
			\vspace{-10mm}
			\caption{Ground-truth clusters}
		\end{subfigure}%
		\hspace{0.01cm}					
		\centering
		\begin{subfigure}[b]{0.24\linewidth}
			\includegraphics[width=\textwidth]{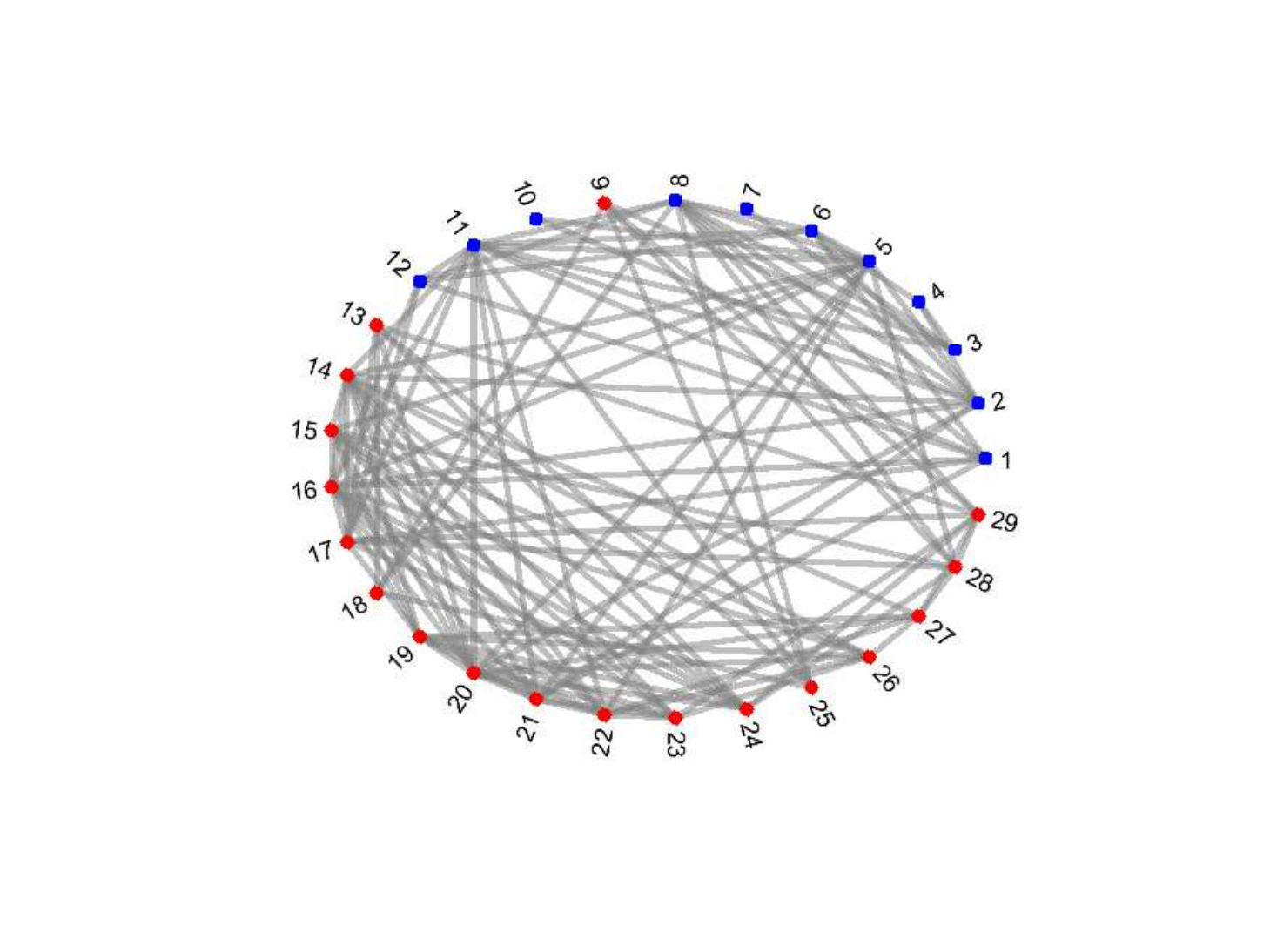}
			\vspace{-10mm}
			\caption{Friends you get on with}
		\end{subfigure}%
		\hspace{0.01cm}
		\centering
		\begin{subfigure}[b]{0.24\linewidth}
			\includegraphics[width=\textwidth]{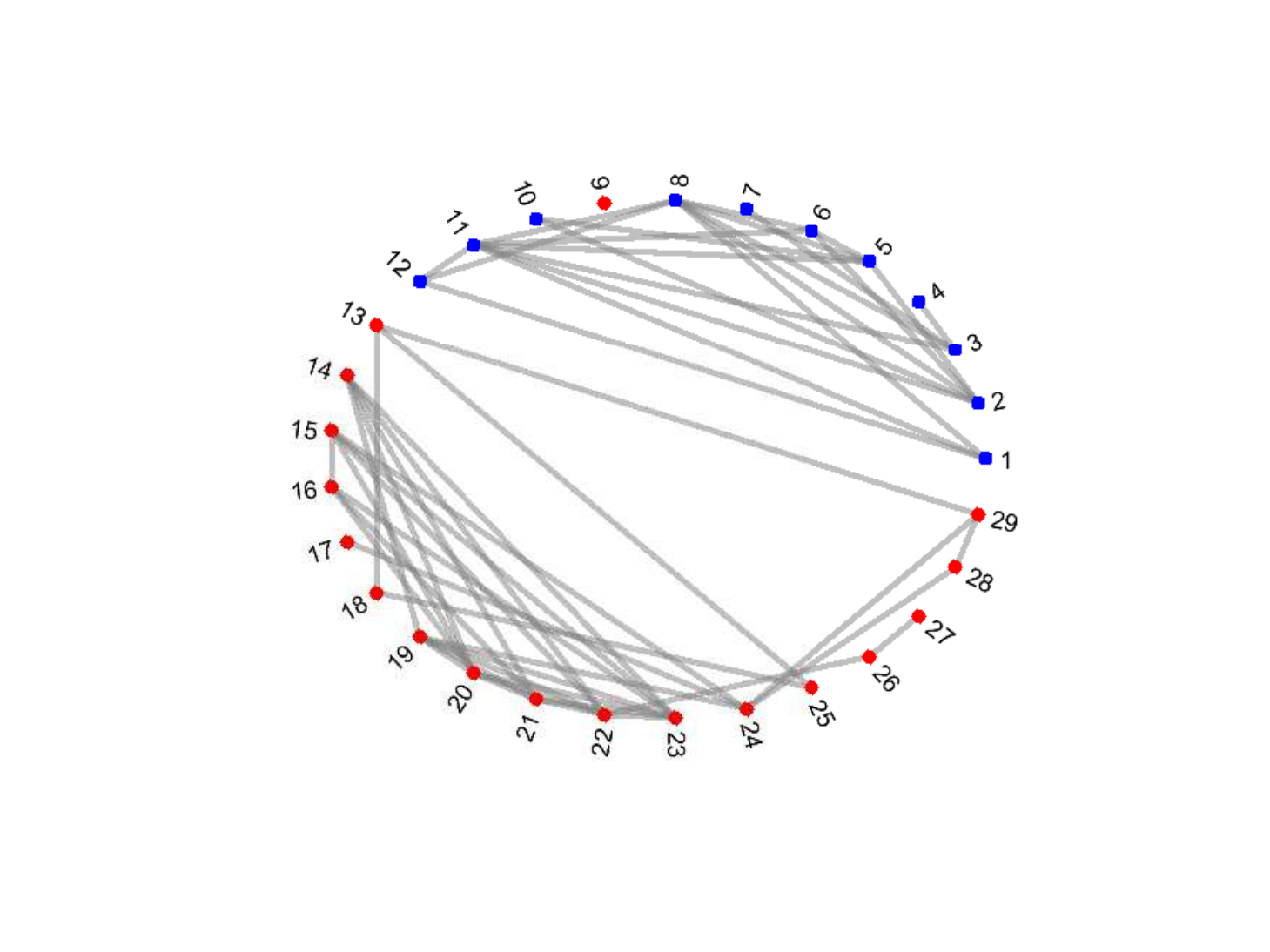}
			\vspace{-10mm}
			\caption{Your best friends}
		\end{subfigure}
		\hspace{0.01cm}
		\centering
		\begin{subfigure}[b]{0.24\linewidth}
			\includegraphics[width=\textwidth]{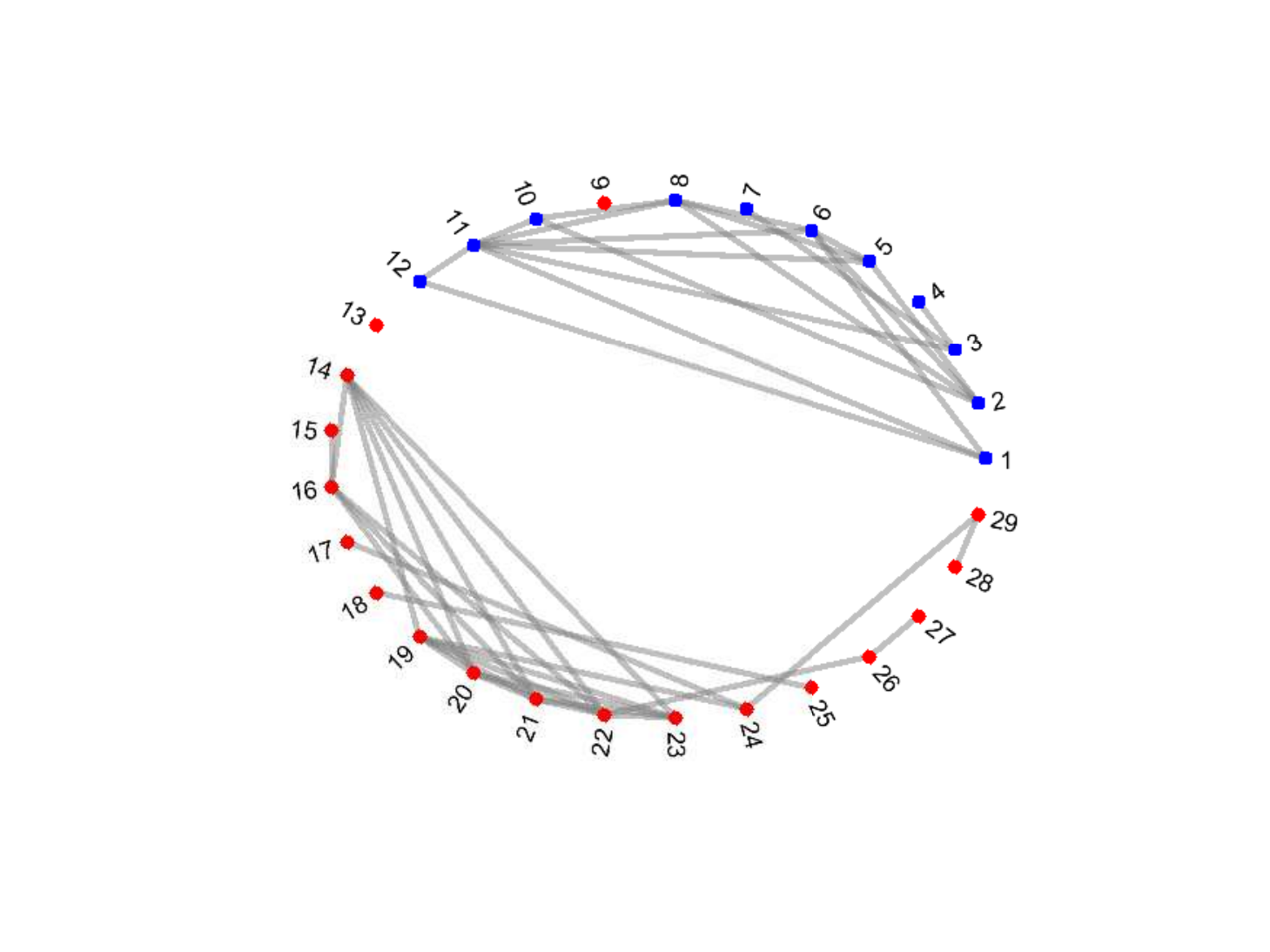}
			\vspace{-10mm}
			\caption{Friends you work with}
		\end{subfigure}		
		\caption{Illustration of the ground-truth clusters and the clusters found by MIMOSA for the VC 7th grader social network dataset. Fig. \ref{Fig_VC_plot} (a) displays the ground-truth clusters, where nodes 1 to 12 are boys (labeled by blue color) and nodes 13 to 29 are girls (labeled by red color).  Fig. \ref{Fig_VC_plot} (b) to (d) display the clusters (labeled by different colors) found by MIMOSA in each layer. Comparing to the  ground-truth clusters, MIMOSA correctly group all nodes into 2 clusters except for node 9, since node 9 has no edge connections in Fig. \ref{Fig_VC_plot} (c) and (d), and has more connections to girls than boys in Fig. \ref{Fig_VC_plot} (b). Enlarged plots are displayed in the supplementary material.
		}
		\label{Fig_VC_plot}
		\vspace*{-5mm}
	\end{figure*}

As a visual illustration, Fig. \ref{Fig_VC_plot} displays the ground-truth clusters and the clusters identified by MIMOSA for each layer of the VC 7th grader social network dataset.
The number of clusters identified by MIMOSA is 2, which is consistent with the ground truth.  The optimal layer weight vector obtained from step 5 of MIMOSA in Algorithm \ref{algo_MIMOSA} is $\bw^*=[ 0.0531~0.1608~0.7861]^T$. Comparing each layer with the ground-truth clusters, it can be observed that the connectivity patterns in Fig. \ref{Fig_VC_plot} (c) and (d) are more consistent with the ground truth, whereas the connectivity pattern in Fig. \ref{Fig_VC_plot} (b) is less informative, which explains why MIMOSA adapts more weights to the second and the third layers. Furthermore,  Fig. \ref{Fig_VC_plot} also explains why MIMOSA-uniform does not yield reliable clustering results, since it assigns uniform weight to each layer and is insensitive to the noise distribution.
It is worth noting that MIMOSA correctly groups all nodes into 2 clusters except for node 9. However, we also observe that node 9 has no edge connections in the two informative layers as shown in Fig. \ref{Fig_VC_plot} (c) and (d), and indeed has more connections to girls than boys in the first layer as  shown in Fig. \ref{Fig_VC_plot} (b), which leads to the misclassification of node 9 when compared with the ground-truth clusters.

\section{Conclusion}
\label{sec_conclusion}
We have characterized the phase transition that governs the accuracy of a convex aggregation method of multilayer spectral graph clustering (SGC). By varying the noise level, as measured by the edge connection probability of spurious between-cluster edges, we specified the critical value that separates the performance of multilayer  SGC into a reliable regime and an unreliable regime. The phase transition was validated via numerical experiments. Furthermore, based on the phase transition analysis, we proposed MIMOSA,  a multilayer SGC algorithm that provides automated model order selection for cluster assignment and layer weight adaptation with statistical clustering reliability guarantees. Applying MIMOSA to real-world multilayer graphs shows competitive or better clustering performance with respect to several 
baseline methods, including
the uniform weight assignment, a greedy multilayer modularity maximization method, and a subspace approach. Our future work will include extending the phase transition analysis and MIMOSA to  other multilayer block models.

\section*{Acknowledgment}
The first author would like to thank Baichuan Zhang at the Department of Computer and Information Science, Indiana University - Purdue University Indianapolis, for his help in analyzing the Leskovec-Ng collaboration network dataset\footnotemark[2].

\bibliographystyle{IEEEtran}
\bibliography{IEEEabrv20160824,CPY_ref_20170319}

\clearpage
\setcounter{equation}{0}
\setcounter{figure}{0}
\setcounter{table}{0}
\setcounter{page}{1}
\makeatletter
\renewcommand{\theequation}{S\arabic{equation}}
\renewcommand{\thefigure}{S\arabic{figure}}

\section*{{\LARGE Supplementary Material} for Multilayer Spectral Graph Clustering via Convex Layer Aggregation: \\Theory and Algorithms\\
	Pin-Yu Chen and Alfred O. Hero}
\label{sec_appendix}
\appendices
\subsection{Proof of Theorem \ref{thm_impossible_ML}}
\label{appen_unsuccess_ML}
Given a layer weight vector $\bw \in \cW_L$, using (\ref{eqn_network_model_multilayer_weight}) the graph Laplacian matrix $\bLw$ of the graph $G^\bw$ via convex layer aggregation can be written in the block representation such that its ($i,j$)-th block of dimension $n_i \times n_j$, denoted by $\bB^\bw_{ij}$, satisfies 
\begin{align}
\label{eqn_Laplacian_multi_simple_ML}
\bB^\bw_{ij}=
\left\{
\begin{array}{ll}
\bLw_i+\sum_{z=1,~z \neq i}^K \bD^\bw_{iz}   , & \hbox{if}~i=j, \\
-\bFw_{ij}, & \hbox{if}~i \neq j,
\end{array}
\right.
\end{align}
for $1 \leq i,j \leq K$, where  $\bD^\bw_{ij}= \textnormal{diag}(\sum_{\ell=1}^{L} w_\ell \bFl_{ij}\bone_{n_j})$ is the diagonal nodal strength matrix contributed by the inter-cluster edges between clusters $i$ and $j$ of the graph $G^\bw$, and $\bFw_{ij}=\sum_{\ell=1}^{L} w_\ell \bFl_{ij}$. 

Applying the block representation in (\ref{eqn_Laplacian_multi_simple_ML}) to the minimization problem in (\ref{eqn_spectral_clustering_multi_formulation_ML}), 
let $\bnu \in \mathbb{R}^{(K-1)}$ and $\bU \in \mathbb{R}^{(K-1) \times (K-1)}$ with $\bU=\bU^T$ be the Lagrange multiplier of the constraints $\bX^T \bone_n=\bzero_{K-1}$ and $\bX^T \bX= \bI_{K-1}$, respectively.  The Lagrangian function is
\begin{align}
\label{eqn_Lagrangian_multi_ML}
\Gamma(\bX)&=\trace(\bX^T \bLw \bX)-\bnu^T \bX^T \bone_n \nonumber \\
&~~~ - \trace \lb \bU (\bX^T \bX-\bI_{K-1}) \rb.
\end{align}
Let $\bY \in \mathbb{R}^{n \times (K-1)}$ be the solution of (\ref{eqn_spectral_clustering_multi_formulation_ML}).
Differentiating (\ref{eqn_Lagrangian_multi_ML}) with respect to $\bX$ and substituting $\bY$ into the equations, we obtain the optimality condition
\begin{align}
\label{eqn_Lagrangian_multi_nu_ML}
2 \bLw \bY - \bone_n \bnu^T - 2 \bY \bU = \bO,
\end{align}
where $\bO$ is a matrix of zero entries.
Left multiplying (\ref{eqn_Lagrangian_multi_nu_ML}) by $\bone_n^T$, we obtain
\begin{align}
\label{eqn_Lagrangian_multi_nu_2_ML}
\bnu=\bzero_{K-1}.
\end{align}
Left multiplying (\ref{eqn_Lagrangian_multi_nu_ML}) by $\bY^T$ and using (\ref{eqn_Lagrangian_multi_nu_2_ML}), we have
\begin{align}
\label{eqn_Lagrangian_multi_mu_ML}
\bU = \bY^T \bLw \bY=\diag(\lambda_2(\bLw),\lambda_3(\bLw),\ldots,\lambda_K(\bLw)),
\end{align}
which we denote by the diagonal matrix $\bLambda$.
Therefore, by (\ref{eqn_spectral_clustering_multi_formulation_ML}) we have
\begin{align}
\label{eqn_Lagrangian_multi_mu_2_ML}
\SK(\bLw)=\trace (\bU).
\end{align}

Now let $\bX=[\bX_1^T,\bX_2^T,\ldots,\bX_K^T]^T$ and $\bY=[\bY_1^T,\bY_2^T,\ldots,\bY_K^T]^T$, where $\bX_k \in \mathbb{R}^{{n_k} \times (K-1)}$ and $\bY_k \in \mathbb{R}^{{n_k} \times (K-1)}$. With  (\ref{eqn_Lagrangian_multi_mu_ML}), the Lagrangian function in (\ref{eqn_Lagrangian_multi_ML}) can be written as
\begin{align}
\label{eqn_Lagrangian_multi_K_ML}
\Gamma(\bX)&=\sum_{k=1}^{K} \trace( \bX_k^T \bLw_k \bX_k) + \sum_{k=1}^{K} \sum_{j=1,j \neq k}^{K} \trace(\bX_k^T  \bD^\bw_{kj} \bX_k) \nonumber \\
&~~~-\sum_{k=1}^{K} \sum_{j=1,j \neq k}^{K} \trace(\bX_k^T  \bFw_{kj} \bX_j)-\sum_{k=1}^{K} \trace(\bU \bX_k^T \bX_k) \nonumber \\
&~~~+\trace(\bU).
\end{align}
Differentiating (\ref{eqn_Lagrangian_multi_K_ML}) with respect to $\bX_k$ and substituting $\bY_k$ into the equation, we obtain the optimality condition that for all $k \in \{1,2,\ldots,K\}$,
\begin{align}
\label{eqn_Lagrangian_multi_K_diff_ML}
\bLw_k \bY_k + \sum_{j=1,j \neq k}^{K} \bD^\bw_{kj} \bY_k-\sum_{j=1,j \neq k}^{K} \bFw_{kj} \bY_j- \bY_k \bU=\bO.
\end{align}

Using  the  bounded fourth moment assumption 
for $\bFl_{ij}$, it has been proved in \cite{CPY16AMOS} that 
\begin{align}
\label{eqn_C_multi_conv_ML}
\frac{\bFl_{ij}}{\sqrt{n_i n_j}} \asconv \tijl \frac{\bone_{n_i} \bone_{n_j}^T}{\sqrt{n_i n_j}}
\end{align}
as $n_i,n_j \ra \infty$ and $\frac{\nmin}{\nmax} \ra c>0$, where $\asconv$ denotes almost sure convergence in the spectral norm\footnotemark[1], we have
\begin{align}
\label{eqn_C_multi_conv_ML_2}
\frac{\bFw_{ij}}{\sqrt{n_i n_j}}=\frac{\sum_{\ell}^{L} w_\ell \bFl_{ij}}{\sqrt{n_i n_j}} \asconv \sum_{\ell}^{L} w_\ell \tijl \frac{\bone_{n_i} \bone_{n_j}^T}{\sqrt{n_i n_j}}
\end{align}
and 
\begin{align}
\label{eqn_Degree_matrix_concentrate_ML}
\frac{\bD^\bw_{ij}}{n_j}= \frac{\textnormal{diag}(\sum_{\ell}^L w_\ell
	\bFl_{ij} \bone_{n_j})}{n_j} \asconv \sum_{\ell}^{L} w_\ell \tijl \bI.
\end{align}

Using (\ref{eqn_Degree_matrix_concentrate_ML}) and left multiplying (\ref{eqn_Lagrangian_multi_K_diff_ML}) by $\frac{\bone_{n_k}^T}{n}$ gives
\begin{align}
\label{eqn_Lagrangian_multi_K_one_ML}
&\frac{1}{n} \Lb \sum_{\ell=1}^{L}\sum_{j=1,j \neq k}^{K} n_j w_\ell t_{kj}^{(\ell)} \bone_{n_k}^T \bY_k- \sum_{\ell=1}^{L} \sum_{j=1,j \neq k}^{K} n_k w_\ell t_{kj}^{(\ell)} \bone_{n_j}^T\bY_j \right. \nonumber \\
& \left.- \bone_{n_k}^T\bY_k \bU \Rb 
\asconv \bzero_{K-1}^T,~\forall~k \in \{1,\ldots,K\}.
\end{align}

Using the centrality relation $\bone_{n_K}^T \bY_K=-\sum_{j=1}^{K-1} \bone_{n_j}^T \bY_{j}$ and (\ref{eqn_Lagrangian_multi_mu_2_ML}), 
(\ref{eqn_Lagrangian_multi_K_one_ML}) can be represented as an asymptotic form of  Sylvester's equation 
\begin{align}
\label{eqn_Sylvester_form_ML}
\frac{1}{n} \lb \bWt^\bw \bZ - \bZ \bLambda \rb \asconv \bO,
\end{align}
where $\bZ=[ \bY_{1}^T\bone_{n_1}, \bY_{2}^T \bone_{n_2},\ldots, \bY_{{K-1}}^T \bone_{n_{K-1}} ]^T \in \mathbb{R}^{(K-1) \times (K-1)}$ and $\bWt^\bw$ is the matrix defined in Theorem \ref{thm_impossible_ML}.

Let $\otimes$ denote the Kronecker product and let $\vectorize (\bZ) $ denote the vectorization operation of $\bZ$ by stacking the columns of $\bZ$ into a column vector. Then (\ref{eqn_Sylvester_form_ML}) can be represented as  
\begin{align}
\label{eqn_Sylvester_vector_form_ML}
\frac{1}{n}(\bI_{K-1} \otimes \bWt^\bw - \bLambda \otimes \bI_{K-1}) \vectorize(\bZ) \asconv \bzero,
\end{align}
where the matrix $\bI_{K-1} \otimes \bWt^\bw - \bLambda \otimes \bI_{K-1}$ is the Kronecker sum, denoted by $\bWt^\bw \oplus -\bLambda$.
Observe that $\vectorize(\bZ) = \bzero$ is always a trivial solution  to (\ref{eqn_Sylvester_vector_form_ML}), and 
if $\bWt^\bw \oplus -\bLambda$ is non-singular, $\vectorize(\bZ) = \bzero$ is the unique solution to (\ref{eqn_Sylvester_vector_form_ML}). Since  $\vectorize(\bZ) = \bzero$ and $\sum_{k=1}^K \bone_{n_k}^T \bY_{k}=\bzero^T_{K-1}$ imply $\bone_{n_k}^T \bY_{k}= \bzero^T_{K-1}$ for all $k=1,2,\ldots,K$, the centroid $\frac{\bone_{n_k}^T \bY_{k}}{n_k}$ of each cluster in the eigenspace is a zero vector, the clusters are not perfectly separable, and therefore correct clustering is not possible. Therefore, a sufficient condition for multilayer SGC with layer weight vector $\bw$ to fail is that the matrix $\bI_{K-1} \otimes \bWt^\bw - \bLambda \otimes \bI_{K-1}$ be non-singular. Moreover, using the property of the Kronecker sum that the eigenvalues of $\bWt^\bw \oplus -\bLambda$ satisfy
$\{ \lambda_\ell (\bWt^\bw \oplus -\bLambda) \}_{z=1}^{(K-1)^2}=\{\lambda_i(\bWt^\bw)-\lambda_j(\bLambda)\}_{i,j=1}^{K-1}$, the sufficient condition on failure of multilayer SGC is that for every $\bw \in \cW_L$, $\lambda_i\lb \frac{\bWt^\bw}{n} \rb \neq \lambda_j \lb \frac{\bL^\bw}{n} \rb$ for all $i = 1,2,\ldots,K-1$ and $j =2,3,\ldots,K$.

\subsection{Proof of Theorem  \ref{thm_spec_ML}}
\label{proof_thm_spec_ML}
Following the derivations in Appendix\ref{appen_unsuccess_ML}, since $ \bone_{n_k}^T \bY_k=-\sum_{j=1,j \neq k}^{K}\bone_{n_j}^T\bY_j$ by the centrality constraint, under the block-wise identical noise model (i.e., $\tijl=\tl$ for all $\ell=1,2,\ldots,L$),
the optimality condition in (\ref{eqn_Lagrangian_multi_K_one_ML}) can be simplified to
\begin{align}
\label{eqn_Lagrangian_multi_K_one_homo_ML}
\lb \tw \bI_{K-1} - \frac{\bU}{n} \rb \bY_k^T \bone_{n_k} 
\asconv \bzero_{K-1},~\forall~k,
\end{align}
where $\tw=\sum_{\ell=1}^L w_\ell \tl$ is the aggregated noise level given a layer weight vector $\bw$.
The optimality condition in (\ref{eqn_Lagrangian_multi_K_one_homo_ML})
implies that one of the two cases below has to hold:
\begin{align}
\label{eqn_case1_ML}
&\text{Case 1:~}  \frac{\bU}{n}  \asconv  \tw \bI_{K-1}; \\
\label{eqn_case2_ML}
&\text{Case 2:~}  \bY_k^T \bone_{n_k}  \asconv  \bzero_{K-1},~\forall~k.
\end{align}
Note that with (\ref{eqn_Lagrangian_multi_mu_2_ML}), Case 1 implies
\begin{align}
\label{eqn_partial_eig_sum_ML}
\frac{\SK(\bLw)}{n} =\frac{\trace(\bU)}{n}\asconv (K-1) \tw. 
\end{align}

Furthermore, in Case 1, left multiplying (\ref{eqn_Lagrangian_multi_K_diff_ML}) by $\frac{\bY_k^T}{n}$ and using (\ref{eqn_C_multi_conv_ML}) and (\ref{eqn_Degree_matrix_concentrate_ML}) gives
\begin{align}
\label{eqn_spec_multi_eigvector_ML}
&\frac{1}{n} \Lb \bY_k^T \bLw_k \bY_k + \sum_{j=1,j \neq k}^{K} n_j \tw \bY_k^T\bY_k \right. \nonumber \\
&~~~\left.  -\sum_{j=1,j \neq k}^{K} \tw \bY_k^T \bone_{n_k}\bone_{n_j}^T  \bY_j - \bY_k^T\bY_k \bU \Rb \asconv \bO,~\forall~k.
\end{align}
Since $ \bone_{n_k}^T \bY_k=-\sum_{j=1,j \neq k}^{K}\bone_{n_j}^T\bY_j$,
(\ref{eqn_spec_multi_eigvector_ML}) can be simplified as 
\begin{align}
\label{eqn_spec_multi_eigvector_2_ML}
&\frac{1}{n} \Lb \bY_k^T \bLw_k \bY_k + (n-n_k) \tw \bY_k^T\bY_k + 
\tw \bY_k^T \bone_{n_k}\bone_{n_k}^T  \bY_k \right. \nonumber \\
&~~~\left.-   \bY_k^T\bY_k \bU \Rb \asconv \bO,~\forall~k.
\end{align}
Taking the trace of (\ref{eqn_spec_multi_eigvector_2_ML}) and using (\ref{eqn_case1_ML}), we have
\begin{align}
\label{eqn_spec_multi_eigvector_trace_ML}
&\frac{1}{n} \Lb \trace(\bY_k^T \bLw_k \bY_k) \Rb + \frac{\tw}{n} \Lb \trace(\bY_k^T \bone_{n_k}\bone_{n_k}^T  \bY_k)  \right.  \nonumber \\ 
&~~~\left.  -n_k  \trace(\bY_k^T\bY_k)
\Rb \asconv 0,~\forall~k.
\end{align}
Since (\ref{eqn_spec_multi_eigvector_trace_ML}) has to be satisfied for all values of $\tw$ in Case 1, this implies the following two conditions have to hold simultaneously:
\begin{align}
\label{eqn_spec_multi_eigvector_trace_2_ML}
\left.
\begin{array}{ll}
\frac{1}{n} \Lb \trace(\bY_k^T \bLw_k \bY_k) \Rb \asconv 0,~\forall~k;  \\
\frac{1}{n} \Lb \trace(\bY_k^T \bone_{n_k}\bone_{n_k}^T  \bY_k)-n_k \trace(\bY_k^T\bY_k) 
\Rb  \asconv 0,~\forall~k.
\end{array}
\right.
\end{align}
Since $\bLw_k=\sum_{\ell=1}^{L} w_\ell \bLl_k$ is a positive semidefinite (PSD) matrix, $\bLw_k \bone_{n_k}=\bzero_{n_k}$, and $\lambda_2(\bLw_k)>0$, $\frac{1}{n} \Lb \trace(\bY_k^T \bLw_k \bY_k) \Rb \asconv 0$ implies that every column of $\bLw_k$ is a constant vector. Therefore, (\ref{eqn_spec_multi_eigvector_trace_2_ML}) implies that in Case 1,
\begin{align}
\label{eqn_spec_multi_eigenvec_conv_ML}
\bY_k \asconv \bone_{n_k} \bone_{K-1}^T \bV_k=\Lb v^k_1 \bone_{n_k},v^k_2 \bone_{n_k},\ldots,v^k_{K-1} \bone_{n_k} \Rb, 
\end{align}
where $\bV=\diag(v_1^k,v_2^k,\ldots,v_{K-1}^k)$ is a diagonal matrix.

To prove the phase transition results in Theorem \ref{thm_spec_ML} (a),
let $\cS=\{\bX \in \mathbb{R}^{n \times (K-1)}:~\bX^T \bX= \bI_{K-1},~\bX^T \bone_n=\bzero_{K-1}\}$.
In Case 2, since $\bY_k^T \bone_{n_k}  \asconv  \bzero_{K-1}~\forall~k$ from (\ref{eqn_case2_ML}), we have 
\begin{align}
\label{eqn_S2K_lower_ML}
&\frac{\SK(\bLw)}{n} 
\asconv \min_{\bX \in \cS}
\LB \frac{1}{n} \Lb \sum_{k=1}^K \trace(\bX_k^T \bLw_k \bX_k) \right. \right. \nonumber \\
&~~\left. \left. + \tw \sum_{k=1}^K (n-n_k) \trace(\bX_k^T\bX_k)  \Rb \RB \\
& \geq \min_{\bX \in \cS}
\LB \frac{1}{n} \sum_{k=1}^K \trace(\bX_k^T \bLw_k \bX_k)  \RB  \nonumber \\
&~~~+\min_{\bX \in \cS}
\LB \frac{\tw}{n}  \sum_{k=1}^K (n-n_k) \trace(\bX_k^T\bX_k) \RB \\
& = \min_{k \in \{1,2,\ldots,K\}} \LB \frac{\SK(\bLw_k)}{n}  \RB
+ \frac{(K-1)\tw}{n} \min_{k \in \{1,2,\ldots,K\}}(n-n_k) \\
\label{eqn_S2K_lower_4_ML}
& = \min_{k \in \{1,2,\ldots,K\}} \LB \frac{\SK(\bLw_k)}{n}  \RB 
+ \frac{(K-1)(n-\nmax)\tw}{n}, 
\end{align}
where $n_{\max}=\max_{k \in \{1,2,\ldots,K\}}n_k$.

Similarly, let $\cS_k=\{\bX \in \mathbb{R}^{n \times (K-1)}:~\bX_k^T \bX_k= \bI_{K-1},~\bX_j=\bO_{n_j \times (K-1)}~\forall~j \neq k,~\bX^T \bone_n=\bzero_{K-1}\}$. Since $\cS_k \subseteq \cS$, in Case 2, we have
\begin{align}
\label{eqn_S2K_upper_ML}
&\frac{\SK(\bLw)}{n} 
\asconv \min_{\bX \in \cS}
\LB \frac{1}{n} \Lb \sum_{k=1}^K \trace(\bX_k^T \bLw_k \bX_k) \right. \right.  \nonumber \\
&~~\left. \left.+ \tw \sum_{k=1}^K (n-n_k) \trace(\bX_k^T\bX_k)  \Rb \RB \\
& \leq \min_{k \in \{1,2,\ldots,K\}} \min_{\bX \in \cS_k}
\LB \frac{1}{n} \Lb \sum_{k=1}^K \trace(\bX_k^T \bLw_k \bX_k) \right. \right.  \nonumber \\
&~~\left. \left.+ \tw \sum_{k=1}^K (n-n_k) \trace(\bX_k^T\bX_k)  \Rb \RB \\
&=\min_{k \in \{1,2,\ldots,K\}} 
\LB \frac{1}{n} \Lb \SK(\bLw_k) + (K-1) (n-n_k) \tw  \Rb \RB \\
&\leq \min_{k \in \{1,2,\ldots,K\}} 
\LB \frac{1}{n} \Lb \SK(\bLw_k) + (K-1) (n-\nmin) \tw  \Rb \RB \\
&=\min_{k \in \{1,2,\ldots,K\}} 
\LB \frac{\SK(\bLw_k)}{n}   \RB +\frac{(K-1)(n-\nmin)\tw}{n},
\label{eqn_S2K_upper_4_ML}
\end{align}
where $n_{\min}=\min_{k \in \{1,2,\ldots,K\}}n_k$. Therefore, we obtain the phase transition results in Theorem \ref{thm_spec_ML} (a). The visual illustration of Theorem \ref{thm_spec_ML} (a) is displayed in Fig. \ref{Fig_slope}.

\begin{figure}[!t]
	\centering
	\includegraphics[width=3.5in]{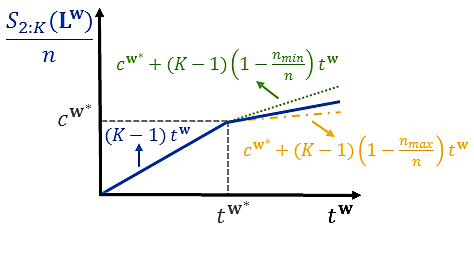}
	\caption{Visual illustration of Theorem \ref{thm_spec_ML} (a).}
	\label{Fig_slope}
\end{figure}

Proceeding to Theorem \ref{thm_spec_ML} (b), we first note that each cluster-wise eigenvector component $\bY_k$ in $\bY$ has to either satisfy the  cluster-wise separability in  (\ref{eqn_spec_multi_eigenvec_conv_ML}) or the zero row-sum condition  in (\ref{eqn_case2_ML}). To show the conditions (b-1) to (b-3) in Theorem \ref{thm_spec_ML} (b), 
recall the eigenvector matrix $\bY=[\bY_1^T,\bY_2^T,\ldots,\bY_K^T]^T$, where $\bY_k$ is the $n_k \times (K-1)$ matrix with row vectors representing the nodes from cluster $k$.
Since $\bY^T \bY =\sum_{k=1}^K \bY_{k}^T \bY_{k} = \bI_{(K-1) \times (K-1)}$, $\bY^T \bone_n=\sum_{k=1}^K \bY_k^T \bone_{n_k}=\bzero_{K-1}$, and from (\ref{eqn_spec_multi_eigenvec_conv_ML}) when $\tw<\twstar$ the matrix $\bY_k \asconv \bone_{n_k} \bone_{K-1}^T \bV_k=\Lb v^k_1 \bone_{n_k},v^k_2 \bone_{n_k},\ldots,v^k_{K-1} \bone_{n_k} \Rb$ as $n_k \ra \infty$~$\forall~k$ and $\frac{\nmin}{\nmax} \ra c>0$, we have 
\begin{align}
\label{eqn_spec_multi_eigvec_conv_2_ML}
\left.
\begin{array}{ll}
\sum_{k=1}^K n_k \bvk \bvk^T= \bI_{K-1}; \\
\sum_{k=1}^K n_k \bvk = \bzero_{K-1},
\end{array}
\right.
\end{align}
where $\bvk=\bV_k \bone_{n_k}=[v_1^k,v_2^k,\ldots,v_{K-1}^k]^T$.
(\ref{eqn_spec_multi_eigvec_conv_2_ML}) suggests that some $\bvk$ cannot be a zero vector since $\sum_{k=1}^K n_k {(v^k_j)}^2=1$ for all $j \in\{1,2,\ldots,K-1\}$, and from (\ref{eqn_spec_multi_eigvec_conv_2_ML}) we have
\begin{align}
\label{eqn_spec_multi_eigvec_coeff_ML}
\left.
\begin{array}{ll}
\sum_{k:v^k_j>0} n_k v^k_j = - \sum_{k: v^k_j <0} n_k v^k_j,\\~~~\forall~j \in\{1,2,\ldots,K-1\}; \\
\sum_{k:v^k_i v^k_j>0} n_k v^k_i v^k_j = - \sum_{k: v^k_i v^k_j <0} n_k v^k_i v^k_j,\\~~~\forall~i,j \in\{1,2,\ldots,K-1\}, i \neq j.
\end{array}
\right.
\end{align}
As a results, the optimality conditions of $\bv_k$ in (\ref{eqn_spec_multi_eigvec_conv_2_ML}) and (\ref{eqn_spec_multi_eigvec_coeff_ML}) lead to the conditions (b-1) to (b-3) in Theorem \ref{thm_spec_ML}.

Lastly, comparing (\ref{eqn_partial_eig_sum_ML}) with (\ref{eqn_S2K_lower_4_ML}) and (\ref{eqn_S2K_upper_4_ML}), as a function of $\tw$ the slope of $\frac{\SK(\bLw)}{n} $ changes at some critical value $\twstar$ that separates Case 1 and Case 2. By the continuity of $\frac{\SK(\bLw)}{n} $, a lower bound on $\twstar$ is 
\begin{align}
\label{eqn_spec_multi_LB_ML}
\tLBw=\frac{\min_{k \in \{1,2,\ldots,K\}} \SK(\bLw_k)}{(K-1)\nmax}, 
\end{align}
and an upper bound on $\twstar$ is
\begin{align}
\label{eqn_spec_multi_UB_ML}
\tUBw=\frac{\min_{k \in \{1,2,\ldots,K\}} \SK(\bLw_k)}{(K-1)\nmin}.
\end{align}
In particular, if $c=1$, then $\nmax=\nmin=\frac{n}{K}$ and hence the expressions in  (\ref{eqn_S2K_lower_4_ML}) and (\ref{eqn_S2K_upper_4_ML}) are identical, which completes Theorem \ref{thm_spec_ML} (c).

\subsection{Proof of Theorem \ref{thm_principal_angle_ML}}
\label{proof_thm_principal_angle_ML}

The following lemma provides bounds on the smallest $K-1$ nonzero eigenvalues of $\bLw$ under the block-wise non-identical noise model.

\begin{lemma}
	\label{lem_early_breakdown_ML}
	Under the block-wise non-identical noise model in Sec. \ref{subsec_ML_signal_noise} with maximum noise level $\{ \tlmax \}_{\ell=1}^L$ for each layer,  given a layer weight vector $\bw \in \cW_L$, let 
	$\twmin=\sum_{\ell=1}^{L} w_\ell \min_{i \neq j}\tijl$,	$\twmax=\sum_{\ell=1}^{L} w_\ell \max_{i \neq j} \tijl$, and let $\twstar$ be the critical threshold value for the block-wise identical noise model specified by Theorem \ref{thm_spec_ML}.
	If $\twmax < \twstar$, the following statement holds almost surely as
	$n_k \ra \infty$~$\forall~k$ and $\frac{\nmin}{\nmax} \ra c >0$:
	\begin{align}
	\label{eqn_bound_inhom_ML}
	\twmin  \leq \lambda_j \lb {\frac{\bLw}{n}} \rb \leq \twmax,~\forall~ j=2,3,\ldots,K.
	\end{align}
\end{lemma}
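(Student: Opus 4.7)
Plan: The approach is to sandwich $\bLw$, in positive semidefinite order, between two graph Laplacians corresponding to block-wise identical noise at the extremes $\twmin$ and $\twmax$, and then invoke Theorem \ref{thm_spec_ML}.

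First, using the concentration relations (\ref{eqn_C_multi_conv_ML_2}) and (\ref{eqn_Degree_matrix_concentrate_ML}), I would approximate $\bLw$ by the matrix
\begin{equation}
\overline{\mathbf{L}}^{\bw} = \bigoplus_{k=1}^K \bLw_k + \sum_{i<j} t_{ij}^{\bw}\,\bL_{ij}^{\textnormal{bip}},
\end{equation}
where $\bL_{ij}^{\textnormal{bip}}$ denotes the $n\times n$ graph Laplacian of the complete bipartite graph between the index sets of clusters $i$ and $j$, embedded with zero padding outside those clusters. Since $\|\bFw_{ij} - t_{ij}^{\bw}\bone_{n_i}\bone_{n_j}^T\| = o(\sqrt{n_i n_j})$ almost surely from (\ref{eqn_C_multi_conv_ML_2}), together with the analogous concentration of the diagonal contributions from (\ref{eqn_Degree_matrix_concentrate_ML}), the cluster-size-balance assumption $\nmin/\nmax \to c > 0$ promotes the finitely-many block bounds to $\|\bLw - \overline{\mathbf{L}}^{\bw}\|/n \asconv 0$.

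Next, I would introduce the analogous matrices $\overline{\mathbf{L}}^{\bw}_{\tw} = \bigoplus_k \bLw_k + \tw \sum_{i<j} \bL_{ij}^{\textnormal{bip}}$ for any scalar $\tw \geq 0$. Because each $\bL_{ij}^{\textnormal{bip}}$ is positive semidefinite and $\twmin \leq t_{ij}^{\bw} \leq \twmax$ for every pair $(i,j)$, the matrices $\overline{\mathbf{L}}^{\bw} - \overline{\mathbf{L}}^{\bw}_{\twmin}$ and $\overline{\mathbf{L}}^{\bw}_{\twmax} - \overline{\mathbf{L}}^{\bw}$ are nonnegative combinations of PSD Laplacian atoms, yielding the sandwich $\overline{\mathbf{L}}^{\bw}_{\twmin} \preceq \overline{\mathbf{L}}^{\bw} \preceq \overline{\mathbf{L}}^{\bw}_{\twmax}$. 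Weyl's monotonicity then gives $\lambda_j(\overline{\mathbf{L}}^{\bw}_{\twmin})/n \leq \lambda_j(\overline{\mathbf{L}}^{\bw})/n \leq \lambda_j(\overline{\mathbf{L}}^{\bw}_{\twmax})/n$ for every index $j$.

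Finally, I would invoke Theorem \ref{thm_spec_ML} for the two identical-noise models with aggregated noise levels $\twmin$ and $\twmax$, both strictly below $\twstar$ by hypothesis. The same concentration argument shows that the random Laplacians of those identical models converge (up to $o(n)$ residuals in spectral norm) to $\overline{\mathbf{L}}^{\bw}_{\twmin}$ and $\overline{\mathbf{L}}^{\bw}_{\twmax}$, respectively, while Case 1 in the proof of Theorem \ref{thm_spec_ML} in Appendix \ref{proof_thm_spec_ML} gives $\bU/n \asconv \tw\, \bI_{K-1}$, equivalently $\lambda_j(\overline{\mathbf{L}}^{\bw}_{\tw})/n \asconv \tw$ for $j=2,\ldots,K$. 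Combining this with the vanishing perturbation from the first step via a final Weyl inequality would yield $\twmin \leq \lambda_j(\bLw)/n \leq \twmax$ almost surely for $j=2,\ldots,K$. The main technical obstacle is the first step: controlling $\|\bLw - \overline{\mathbf{L}}^{\bw}\|/n \asconv 0$ from block-wise concentration estimates of order $o(\sqrt{n_i n_j})$ requires care with the $O(K^2)$ summation under the balance assumption; the rest of the proof then reduces the lemma to a positive-semidefinite ordering of Laplacian atoms plus a direct invocation of Theorem \ref{thm_spec_ML}.
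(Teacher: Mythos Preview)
Your proposal is correct and follows essentially the same route as the paper: sandwich $\bLw/n$ in PSD order between the Laplacians at block-wise identical noise levels $\twmin$ and $\twmax$, then invoke Case~1 of the proof of Theorem~\ref{thm_spec_ML} (equation~(\ref{eqn_case1_ML})) to pin the eigenvalues $\lambda_2,\ldots,\lambda_K$ of the two sandwiching Laplacians at $\twmin$ and $\twmax$. The only cosmetic difference is that you introduce explicit deterministic reference Laplacians $\overline{\mathbf{L}}^{\bw}_{\tw}$ and control the $o(n)$ perturbation separately, whereas the paper works directly with the random identical-noise Laplacians $\bLw(\twmin)$, $\bLw(\twmax)$ and argues that the differences $\bLw-\bLw(\twmin)$ and $\bLw(\twmax)-\bLw$ are asymptotically graph Laplacians (hence PSD) via the same concentration~(\ref{eqn_C_multi_conv_ML}).
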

\begin{proof}
	We first show that when $\twmax<\twstar$, the second eigenvalue of $\frac{\bLw}{n}$, $\lambda_2(\frac{\bLw}{n})$, lies within the interval $[\twmin,\twmax]$ almost surely as $n_k \ra \infty$~$\forall~k$ and $\frac{\nmin}{\nmax} \ra c >0$.	Under the block-wise non-identical noise model in Sec. \ref{subsec_ML_signal_noise}, by (\ref{eqn_C_multi_conv_ML}) with proper scaling the entries of each interconnection matrix $\bFl_{ij}$ converge to $\tijl$ almost surely as $n_k \ra \infty$~$\forall~k$ and $\frac{\nmin}{\nmax} \ra c >0$. Let $\bWw(\tw)$ be the weight matrix of the aggregated graph $G^\bw$ under the block-wise identical noise model with aggregated noise level $\tw$. Then the weight matrix $\bWw$ can be written as $\bWw=\bWw(\twmin)+ \bDelta \bWw$,
	and the corresponding graph Laplacian matrix can be written as $\bLw=\bLw(\twmin)+\bDelta \bLw$, where $\bLw(\twmin)$ and $\bDelta \bLw$ are associated with $\bWw(\tw)$ and $\bDelta \bWw$, respectively.
	Since $\twmin=\sum_{\ell=1}^{L} w_\ell \min_{i \neq j}\tijl$, as $n_k \ra \infty$~$\forall~k$ and $\frac{\nmin}{\nmax} \ra c >0$,
	$\frac{\bDelta \bWw}{n}$ is a symmetric nonnegative matrix almost surely, and $\frac{\bDelta \bLw}{n}$ is a graph Laplacian matrix almost surely. By the PSD property of a graph Laplacian matrix, we obtain $\lambda_2(\frac{\bLw}{n}) \geq \twmin$ almost surely as $n_k \ra \infty$~$\forall~k$ and $\frac{\nmin}{\nmax} \ra c >0$. Similarly, following the same procedure we can show that $\lambda_2(\frac{\bLw}{n}) \leq \twmax$ almost surely as $n_k \ra \infty$~$\forall~k$ and $\frac{\nmin}{\nmax} \ra c >0$. Lastly, when $\tw<\twstar$, using the fact from (\ref{eqn_case1_ML}) that $\lambda_j(\frac{\bLw(\tw)}{n}) \asconv \tw$ for all $j \in \{2,3,\ldots,K\}$, we obtain 
	\begin{align}
	\label{eqn_bound_inhom_2_ML}
	\twmin=\lambda_j \lb \frac{\bL(\twmin)}{n} \rb \leq \lambda_j \lb \frac{\bLw}{n} \rb \leq \lambda_j \lb \frac{\bL(\twmax)}{n} \rb=\twmax
	\end{align}
	almost surely  for all $j \in \{2,3,\ldots,K\}$ as $n_k \ra \infty$~$\forall~k$ and $\frac{\nmin}{\nmax} \ra c >0$.
\end{proof}

Proceeding to proving Theorem \ref{thm_principal_angle_ML}, 
applying the Davis-Kahan $\sin \theta$ theorem \cite{Davis70} to the eigenvector matrices $\bY$ and $\btY$ associated with the graph Laplacian matrices $\frac{\bLw}{n}$ and $\frac{\btLw}{n}$, respectively, we obtain an upper bound on the distance of column spaces spanned by $\bY$ and $\btY$, which is 
$\|\sin\mathbf{\Theta}(\bY,\btY\|_F \leq \frac{\| \bLw - \btLw \|_F} {n \delta}$, where  $\delta=\inf\{|x-y|: x \in \{0\} \cup [\lambda_{K+1}(\frac{\bLw}{n}),\infty),~y \in [\lambda_2(\frac{\btLw}{n}),\lambda_K(\frac{\btLw}{n}) ]\}$. Under the block-wise identical noise model, if $\tw<\twstar$, using the fact from (\ref{eqn_case1_ML}) that $\lambda_j(\frac{\btLw}{n}) \asconv \tw$ for all $j \in \{2,3,\ldots,K\}$ as $n_k \ra \infty$~$\forall~k$ and $\frac{\nmin}{\nmax} \ra c >0$, the interval 
$[\lambda_2(\frac{\btLw}{n}),\lambda_K(\frac{\btLw}{n})]$ reduces to a point $\tw$ almost surely. Therefore,
$\delta$ reduces to $\delta_{\tw}$ as defined in Theorem \ref{thm_principal_angle_ML}. Furthermore, if $\twmax \leq \twstar$, then (\ref{eqn_principal_angle_bound_ML}) holds for all $\tw \leq \twmax$. Taking the minimum over all upper bounds in (\ref{eqn_principal_angle_bound_ML}) for every $\tw \leq \twmax$, we obtain (\ref{eqn_principal_angle_bound_2_ML}).

\subsection{Details of clustering reliability test under the block-wise non-identical noise model}
\label{proof_equivalent_condition_anconsbe}

For each layer $\ell$, we use $\htmaxl$ to test the null hypothesis $H_0^{(\ell)}$: \emph{$ \tlmax  < \tLBw$} against the alternative hypothesis $H_1^{(\ell)}$: \emph{$ \tlmax  \geq \tLBw$}. The test accepts $H_0^{(\ell)}$ if the condition in (\ref{eqn_spectral_multi_confidence_interval_ingomogeneous_RIM_ML}) holds, and rejects $H_0^{(\ell)}$ otherwise. 
Using the Anscombe transformation on $\{\hpijl\}$ for variance stabilization \cite{anscombe1948transformation}, 
let $A_{ij}(x)=\sin^{-1} \sqrt{\frac{x+\frac{c^\prime}{\hn_i \hn_j}}{1+\frac{2c^\prime}{\hn_i \hn_j}}}$, where $c^\prime=\frac{3}{8}$.
By the central limit theorem,
$\sqrt{4\hn_i \hn_j+2} \cdot \lb A_{ij}(\hpijl)- A_{ij}(\pijl) \rb
\convd N(0,1)$
for all $\pijl \in (0,1)$ as $\hn_i,\hn_j \ra \infty$, 
where $\convd$ denotes convergence in distribution and $N(0,1)$ denotes the standard normal distribution \cite{anscombe1948transformation}.
Therefore, under the null hypothesis $H_0^{(\ell)}$, from \cite[Theorem 2.1]{chang2000generalized}
an asymptotic $100(1-\alpha^\prime)\%$ confidence interval for $\htmaxl$ is $[0,\psi_\ell]$,
where  $\psi(\alpha^\prime_{\ell},\{\htijl\})$ is a function of the precision parameter $\alpha^\prime_\ell \in [0,1]$ and $\{\htijl\}$, which satisfies $\prod_{i=1}^K \prod_{j=i+1}^K \Phi \lb  \sqrt{4\hn_i \hn_j+2} \cdot \lb A_{ij}(\psi_\ell)- A_{ij} \lb \frac{\htijl}{\hWbarijl} \rb \rb \rb \\ =1-\alpha^\prime_\ell$, where $\Phi(\cdot)$ is the cdf of the standard normal distribution, and we use the relation $\htijl=\hpijl \cdot \hWbarijl$.

As a result, if $\psi_\ell < \tLBw$, then $\htmaxl<\tLBw$ with probability at least $1-\alpha^\prime_\ell$. Note that verifying $\psi_\ell < \tLBw$ is equivalent to checking the condition
\begin{align}
\label{eqn_spectral_multi_confidence_interval_ingomogeneous_RIM_ML}
\prod_{i=1}^K \prod_{j=i+1}^K F_{ij} \lb \frac{\tLBw}{\Wbarijl},\hpijl \rb \geq 1-\alpha^\prime_\ell,
\end{align}
where 
\begin{align}
\label{eqn_eqn_spectral_multi_confidence_interval_ingomogeneous_RIM_ML_2}
F_{ij}(\frac{\tLBw}{\Wbarijl},\hpijl)&= \Phi \lb  \sqrt{4\hn_i \hn_j+2} \cdot \lb A_{ij}(\frac{\tLBw}{\Wbarijl}) - A_{ij}(\hpijl)\rb \rb \nonumber \\
&~~~\cdot \mathbb{I}_{\{\hpijl \in (0,1)\}}+ \mathbb{I}_{\{\htijl<\tLBw\}}\mathbb{I}_{\{\hpijl \in \{0,1\}\}},
\end{align}
and $\mathbb{I}_{E}$ is the event indicator function of an event $E$.
Finally, we replace   $\tLBw$ and $\Wbarijl$ in (\ref{eqn_eqn_spectral_multi_confidence_interval_ingomogeneous_RIM_ML_2}) with the empirical estimates $\htLBw$ and $\hWbarijl$, respectively, which leads to (\ref{eqn_spectral_multi_confidence_interval_ingomogeneous_RIM_ML}).

	\begin{figure*}[t]
			\vspace{-10mm}
		\centering
		\begin{subfigure}[b]{0.55\linewidth}
			\includegraphics[width=\textwidth]{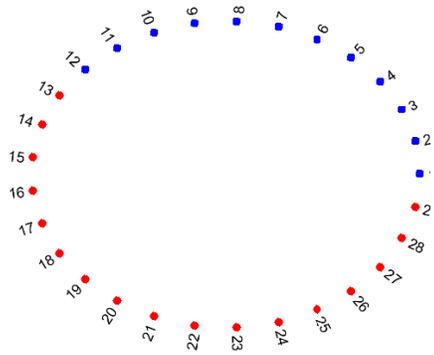}
			\vspace{-16mm}
			\caption{Ground-truth clusters}
		\end{subfigure}%
		\hspace{-22mm}							
		\centering
		\begin{subfigure}[b]{0.55\linewidth}
			\includegraphics[width=\textwidth]{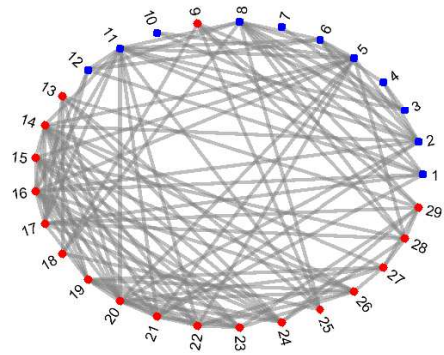}
			\vspace{-16mm}
			\caption{Friends you get on with}
		\end{subfigure}%
		\\
		\centering
		\begin{subfigure}[b]{0.55\linewidth}
			\includegraphics[width=\textwidth]{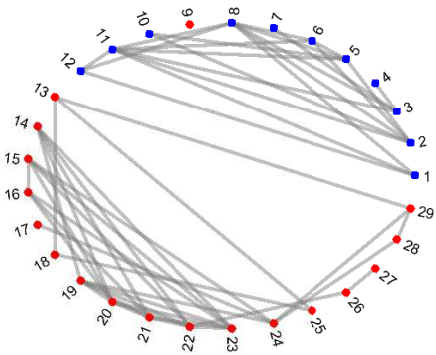}
			\vspace{-16mm}
			\caption{Your best friends}
		\end{subfigure}
		\hspace{-22mm}
		\centering
		\begin{subfigure}[b]{0.55\linewidth}
			\includegraphics[width=\textwidth]{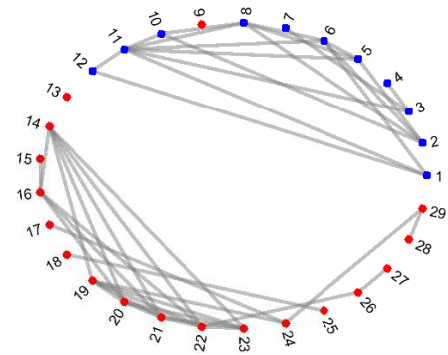}
			\vspace{-16mm}
			\caption{Friends you work with}
		\end{subfigure}		
		\caption{Illustration of the ground-truth clusters and the clusters found by MIMOSA for the VC 7th grader social network dataset (enlarged version).}	
		\label{Fig_VC_plot_2}
	\end{figure*}

%
%

\end{document}